\providecommand{\U}[1]{\protect\rule{.1in}{.1in}}
\DeclareMathOperator*{\esssup}{ess\,sup}
\newtheorem{theorem}{Theorem}
\newtheorem{assumption}{Assumption}
\newenvironment{assumprime}[1]
{\innerassumprime}
{\endinnerassumprime}
\newtheorem{conjecture}{Conjecture}
\newtheorem{corollary}{Corollary}
\newtheorem{definition}{Definition}
\newtheorem{example}{Example}
\newtheorem{lemma}{Lemma}
\newtheorem{proposition}{Proposition}
\newtheorem{remark}{Remark}
\renewcommand{\algorithmicrequire}{\textbf{Input: }}
\renewcommand{\algorithmicensure}{\textbf{Output: }}
\icmltitlerunning{Testing Group Fairness via Optimal Transport Projections}
\newcommand{\Pnom}{\hat \PP^N} % empirical distribution
\newcommand{\QQ}{\mathbb{Q}} % use as optimization variable
\newcommand{\PP}{\mathbb{P}} % true distribution
\newcommand{\EE}{\mathbb E} % expectation operator
\newcommand{\Wass}{W}
\newcommand{\X}{\mathbb{X}}
\newcommand{\B}{\mathbb{B}}
\newcommand{\eps}{\varepsilon}
\newcommand{\st}{\mathrm{s.t.}}
\newcommand{\mc}{\mathcal}
\newcommand{\R}{\mathbb{R}}
\newcommand{\be}{\begin{equation}}
\newcommand{\ee}{\end{equation}}
\newcommand{\cov}{\mathrm{cov}}
\newcommand{\Let}{\triangleq}
\newcommand{\Inf}{\inf\limits_}
\newcommand{\opt}{^\star}
\newcommand{\ds}{\displaystyle}
\chardef\@x10\chardef\@xv60
\def\tcitime{
\def\@time{%
  \@minute\time\@hour\@minute\divide\@hour\@xv
  \ifnum\@hour<\@x 0\fi\the\@hour:%
  \multiply\@hour\@xv\advance\@minute-\@hour
  \ifnum\@minute<\@x 0\fi\the\@minute
  }}%
\def\QCTOpt[#1]#2{%
  \def\QCTOptB{#1}
  \def\QCTOptA{#2}
}
\def\QCTNOpt#1{%
  \def\QCTOptA{#1}
  \let\QCTOptB\empty
}
\def\Qct{%
  \@ifnextchar[{%
    \QCTOpt}{\QCTNOpt}
}
\def\QCBOpt[#1]#2{%
  \def\QCBOptB{#1}
  \def\QCBOptA{#2}
}
\def\QCBNOpt#1{%
  \def\QCBOptA{#1}
  \let\QCBOptB\empty
}
\def\Qcb{%
  \@ifnextchar[{%
    \QCBOpt}{\QCBNOpt}
}
\def\PrepCapArgs{%
  \ifx\QCBOptA\empty
    \ifx\QCTOptA\empty
      {}%
    \else
      \ifx\QCTOptB\empty
        {\QCTOptA}%
      \else
        [\QCTOptB]{\QCTOptA}%
      \fi
    \fi
  \else
    \ifx\QCBOptA\empty
      {}%
    \else
      \ifx\QCBOptB\empty
        {\QCBOptA}%
      \else
        [\QCBOptB]{\QCBOptA}%
      \fi
    \fi
  \fi
}
\def\GRAPHICSPS#1{%
 \ifcase\GRAPHICSTYPE%\GRAPHICSTYPE=0
   \special{ps: #1}%
 \or%\GRAPHICSTYPE=1
   \special{language "PS", include "#1"}%
%%%\or%\GRAPHICSTYPE=2
%%%  #1%
 \fi
}%
\def\graffile#1#2#3#4{%
%%% \ifnum\GRAPHICSTYPE=\tw@
%%%  %Following if using psfig
%%%  \@ifundefined{psfig}{\input psfig.tex}{}%
%%%  \psfig{file=#1, height=#3, width=#2}%
%%% \else
  %Following for all others
  % JCS - added BOXTHEFRAME, see below
    \leavevmode
    \raise -#4 \BOXTHEFRAME{%
        \hbox to #2{\raise #3\hbox to #2{\null #1\hfil}}}%
}%
\def\draftbox#1#2#3#4{%
 \leavevmode\raise -#4 \hbox{%
  \frame{\rlap{\protect\tiny #1}\hbox to #2%
   {\vrule height#3 width\z@ depth\z@\hfil}%
  }%
 }%
}%
\newif\ifwasdraft
\def\GRAPHIC#1#2#3#4#5{%
 \ifnum\draft=\@ne\draftbox{#2}{#3}{#4}{#5}%
  \else\graffile{#1}{#3}{#4}{#5}%
  \fi
 }%
\def\addtoLaTeXparams#1{%
    \edef\LaTeXparams{\LaTeXparams #1}}%
\newif\ifBoxFrame \BoxFramefalse
\newif\ifOverFrame \OverFramefalse
\newif\ifUnderFrame \UnderFramefalse
\def\BOXTHEFRAME#1{%
   \hbox{%
      \ifBoxFrame
         \frame{#1}%
      \else
         {#1}%
      \fi
   }%
}
\def\doFRAMEparams#1{\BoxFramefalse\OverFramefalse\UnderFramefalse\readFRAMEparams#1\end}%
\def\readFRAMEparams#1{%
   \ifx#1\end%
  \let\next=\relax
  \else
  \ifx#1i\dispkind=\z@\fi
  \ifx#1d\dispkind=\@ne\fi
  \ifx#1f\dispkind=\tw@\fi
 	%% BEGIN CHANGES 0.12
	\ifx#1h
    \ifnum\dispkind=\tw@
			\@ifundefined{@HHfloat}{
			  \addtoLaTeXparams{h}
		 	 }{
         \def\LaTeXparams{H}
         \typeout{tcilatex: attribute align pos of FRAME  set to H}
         \typeout{\space \space \space \space all other placement options (tbp) are ignored }
   		 }
	  \else
			\addtoLaTeXparams{h}
    \fi
	\fi
  \if\LaTeXparams H
  	 \ifx#1t\fi	 %% ignore	all other placement
  	 \ifx#1b\fi	 %% options (tbp) 
     \ifx#1p\fi
  \else
      \ifx#1t\addtoLaTeXparams{t}\fi
      \ifx#1b\addtoLaTeXparams{b}\fi
      \ifx#1p\addtoLaTeXparams{p}\fi
  \fi
	%\typeout{LaTeXparms: \LaTeXparams}
%%END CHANGES 0.12

  \ifx#1X\BoxFrametrue\fi
  \ifx#1O\OverFrametrue\fi
  \ifx#1U\UnderFrametrue\fi
  \ifx#1w
    \ifnum\draft=1\wasdrafttrue\else\wasdraftfalse\fi
    \draft=\@ne
  \fi
  \let\next=\readFRAMEparams
  \fi
 \next
 }%
\def\IFRAME#1#2#3#4#5#6{%
      \bgroup
      \let\QCTOptA\empty
      \let\QCTOptB\empty
      \let\QCBOptA\empty
      \let\QCBOptB\empty
      #6%
      \parindent=0pt%
      \leftskip=0pt
      \rightskip=0pt
      \setbox0 = \hbox{\QCBOptA}%
      \@tempdima = #1\relax
      \ifOverFrame
          % Do this later
          \typeout{This is not implemented yet}%
          \show\HELP
      \else
         \ifdim\wd0>\@tempdima
            \advance\@tempdima by \@tempdima
            \ifdim\wd0 >\@tempdima
               \textwidth=\@tempdima
               \setbox1 =\vbox{%
                  \noindent\hbox to \@tempdima{\hfill\GRAPHIC{#5}{#4}{#1}{#2}{#3}\hfill}\\%
                  \noindent\hbox to \@tempdima{\parbox[b]{\@tempdima}{\QCBOptA}}%
               }%
               \wd1=\@tempdima
            \else
               \textwidth=\wd0
               \setbox1 =\vbox{%
                 \noindent\hbox to \wd0{\hfill\GRAPHIC{#5}{#4}{#1}{#2}{#3}\hfill}\\%
                 \noindent\hbox{\QCBOptA}%
               }%
               \wd1=\wd0
            \fi
         \else
            %\show\BBB
            \ifdim\wd0>0pt
              \hsize=\@tempdima
              \setbox1 =\vbox{%
                \unskip\GRAPHIC{#5}{#4}{#1}{#2}{0pt}%
                \break
                \unskip\hbox to \@tempdima{\hfill \QCBOptA\hfill}%
              }%
              \wd1=\@tempdima
           \else
              \hsize=\@tempdima
              \setbox1 =\vbox{%
                \unskip\GRAPHIC{#5}{#4}{#1}{#2}{0pt}%
              }%
              \wd1=\@tempdima
           \fi
         \fi
         \@tempdimb=\ht1
         \advance\@tempdimb by \dp1
         \advance\@tempdimb by -#2%
         \advance\@tempdimb by #3%
         \leavevmode
         \raise -\@tempdimb \hbox{\box1}%
      \fi
      \egroup%
}%
\def\DFRAME#1#2#3#4#5{%
 \begin{center}
     \let\QCTOptA\empty
     \let\QCTOptB\empty
     \let\QCBOptA\empty
     \let\QCBOptB\empty
     \ifOverFrame 
        #5\QCTOptA\par
     \fi
     \GRAPHIC{#4}{#3}{#1}{#2}{\z@}
     \ifUnderFrame 
        \nobreak\par #5\QCBOptA
     \fi
 \end{center}%
 }%
\def\FFRAME#1#2#3#4#5#6#7{%
 \begin{figure}[#1]%
  \let\QCTOptA\empty
  \let\QCTOptB\empty
  \let\QCBOptA\empty
  \let\QCBOptB\empty
  \ifOverFrame
    #4
    \ifx\QCTOptA\empty
    \else
      \ifx\QCTOptB\empty
        \caption{\QCTOptA}%
      \else
        \caption[\QCTOptB]{\QCTOptA}%
      \fi
    \fi
    \ifUnderFrame\else
      \label{#5}%
    \fi
  \else
    \UnderFrametrue%
  \fi
  \begin{center}\GRAPHIC{#7}{#6}{#2}{#3}{\z@}\end{center}%
  \ifUnderFrame
    #4
    \ifx\QCBOptA\empty
      \caption{}%
    \else
      \ifx\QCBOptB\empty
        \caption{\QCBOptA}%
      \else
        \caption[\QCBOptB]{\QCBOptA}%
      \fi
    \fi
    \label{#5}%
  \fi
  \end{figure}%
 }%
\def\makeactives{
  \catcode`\"=\active
  \catcode`\;=\active
  \catcode`\:=\active
  \catcode`\'=\active
  \catcode`\~=\active
}
   \gdef\activesoff{%
      \def"{\string"}
      \def;{\string;}
      \def:{\string:}
      \def'{\string'}
      \def~{\string~}
      %\bbl@deactivate{"}%
      %\bbl@deactivate{;}%
      %\bbl@deactivate{:}%
      %\bbl@deactivate{'}%
    }
\def\FRAME#1#2#3#4#5#6#7#8{%
 \bgroup
 \@ifundefined{bbl@deactivate}{}{\activesoff}
 \ifnum\draft=\@ne
   \wasdrafttrue
 \else
   \wasdraftfalse%
 \fi
 \def\LaTeXparams{}%
 \dispkind=\z@
 \def\LaTeXparams{}%
 \doFRAMEparams{#1}%
 \ifnum\dispkind=\z@\IFRAME{#2}{#3}{#4}{#7}{#8}{#5}\else
  \ifnum\dispkind=\@ne\DFRAME{#2}{#3}{#7}{#8}{#5}\else
   \ifnum\dispkind=\tw@
    \edef\@tempa{\noexpand\FFRAME{\LaTeXparams}}%
    \@tempa{#2}{#3}{#5}{#6}{#7}{#8}%
    \fi
   \fi
  \fi
  \ifwasdraft\draft=1\else\draft=0\fi{}%
  \egroup
 }%
\def\TEXUX#1{"texux"}
\long\def\QQQ#1#2{%
     \long\expandafter\def\csname#1\endcsname{#2}}%
\long\def\QQA#1#2{}%
\def\QTR#1#2{{\csname#1\endcsname #2}}%(gp) Is this the best?
\def\EXPAND#1[#2]#3{}%
\def\NOEXPAND#1[#2]#3{}%
\def\LaTeXparent#1{}%
\def\ChildStyles#1{}%
\def\ChildDefaults#1{}%
\def\QTagDef#1#2#3{}%
\def\QQfnmark#1{\footnotemark}
\def\makeatletter\input gnuindex.sty\makeatother\makeindex{\makeatletter\input gnuindex.sty\makeatother\makeindex}%	
\def\initial#1{\bigbreak{\raggedright\large\bf #1}\kern 2\p@\penalty3000}}%
 \def\abstract{%
  \if@twocolumn
   \section*{Abstract (Not appropriate in this style!)}%
   \else \small 
   \begin{center}{\bf Abstract\vspace{-.5em}\vspace{\z@}}\end{center}%
   \quotation 
   \fi
  }%
   \def\registered{\relax\ifmmode{}\r@gistered
                    \else$\m@th\r@gistered$\fi}%
 \def\r@gistered{^{\ooalign
  {\hfil\raise.07ex\hbox{$\scriptstyle\rm\text{R}$}\hfil\crcr
  \mathhexbox20D}}}}{}%
\newdimen\theight
\def\Column{%
 \vadjust{\setbox\z@=\hbox{\scriptsize\quad\quad tcol}%
  \theight=\ht\z@\advance\theight by \dp\z@\advance\theight by \lineskip
  \kern -\theight \vbox to \theight{%
   \rightline{\rlap{\box\z@}}%
   \vss
   }%
  }%
 }%
\def\miss{\hbox{\vrule height2\p@ width 2\p@ depth\z@}}%
\def\tcol#1{{\baselineskip=6\p@ \vcenter{#1}} \Column}  %
\def\newfmtname{LaTeX2e}
\def\chkcompat{%
   \if@compatibility
   \else
     \usepackage{latexsym}
   \fi
}
  \DeclareOldFontCommand{\rm}{\normalfont\rmfamily}{\mathrm}
  \DeclareOldFontCommand{\sf}{\normalfont\sffamily}{\mathsf}
  \DeclareOldFontCommand{\tt}{\normalfont\ttfamily}{\mathtt}
  \DeclareOldFontCommand{\bf}{\normalfont\bfseries}{\mathbf}
  \DeclareOldFontCommand{\it}{\normalfont\itshape}{\mathit}
  \DeclareOldFontCommand{\sl}{\normalfont\slshape}{\@nomath\sl}
  \DeclareOldFontCommand{\sc}{\normalfont\scshape}{\@nomath\sc}
\def\alpha{{\Greekmath 010B}}%
\def\beta{{\Greekmath 010C}}%
\def\gamma{{\Greekmath 010D}}%
\def\delta{{\Greekmath 010E}}%
\def\epsilon{{\Greekmath 010F}}%
\def\zeta{{\Greekmath 0110}}%
\def\eta{{\Greekmath 0111}}%
\def\theta{{\Greekmath 0112}}%
\def\iota{{\Greekmath 0113}}%
\def\kappa{{\Greekmath 0114}}%
\def\lambda{{\Greekmath 0115}}%
\def\mu{{\Greekmath 0116}}%
\def\nu{{\Greekmath 0117}}%
\def\xi{{\Greekmath 0118}}%
\def\pi{{\Greekmath 0119}}%
\def\rho{{\Greekmath 011A}}%
\def\sigma{{\Greekmath 011B}}%
\def\tau{{\Greekmath 011C}}%
\def\upsilon{{\Greekmath 011D}}%
\def\phi{{\Greekmath 011E}}%
\def\chi{{\Greekmath 011F}}%
\def\psi{{\Greekmath 0120}}%
\def\omega{{\Greekmath 0121}}%
\def\varepsilon{{\Greekmath 0122}}%
\def\vartheta{{\Greekmath 0123}}%
\def\varpi{{\Greekmath 0124}}%
\def\varrho{{\Greekmath 0125}}%
\def\varsigma{{\Greekmath 0126}}%
\def\varphi{{\Greekmath 0127}}%
\def\nabla{{\Greekmath 0272}}
\def\FindBoldGroup{%
   {\setbox0=\hbox{$\mathbf{x\global\edef\theboldgroup{\the\mathgroup}}$}}%
}
\def\Greekmath#1#2#3#4{%
    \if@compatibility
        \ifnum\mathgroup=\symbold
           \mathchoice{\mbox{\boldmath$\displaystyle\mathchar"#1#2#3#4$}}%
                      {\mbox{\boldmath$\textstyle\mathchar"#1#2#3#4$}}%
                      {\mbox{\boldmath$\scriptstyle\mathchar"#1#2#3#4$}}%
                      {\mbox{\boldmath$\scriptscriptstyle\mathchar"#1#2#3#4$}}%
        \else
           \mathchar"#1#2#3#4% 
        \fi 
    \else 
        \FindBoldGroup
        \ifnum\mathgroup=\theboldgroup % For 2e
           \mathchoice{\mbox{\boldmath$\displaystyle\mathchar"#1#2#3#4$}}%
                      {\mbox{\boldmath$\textstyle\mathchar"#1#2#3#4$}}%
                      {\mbox{\boldmath$\scriptstyle\mathchar"#1#2#3#4$}}%
                      {\mbox{\boldmath$\scriptscriptstyle\mathchar"#1#2#3#4$}}%
        \else
           \mathchar"#1#2#3#4% 
        \fi     	    
	  \fi}
\newif\ifGreekBold  \GreekBoldfalse
\let\SAVEPBF=\pbf
\def\pbf{\GreekBoldtrue\SAVEPBF}%
  \newcounter{equationnumber}  
  \def\mathletters{%
     \addtocounter{equation}{1}
     \edef\@currentlabel{\theequation}%
     \setcounter{equationnumber}{\c@equation}
     \setcounter{equation}{0}%
     \edef\theequation{\@currentlabel\noexpand\alph{equation}}%
  }
    \def\BibTeX{{\rm B\kern-.05em{\sc i\kern-.025em b}\kern-.08em
                 T\kern-.1667em\lower.7ex\hbox{E}\kern-.125emX}}}{}%
\def\AmS{{\protect\usefont{OMS}{cmsy}{m}{n}%
                A\kern-.1667em\lower.5ex\hbox{M}\kern-.125emS}}}{}%
\def\DN@{\def\next@}%
\def\eat@#1{}%
\let\DOTSI\relax
\def\RIfM@{\relax\ifmmode}%
\def\FN@{\futurelet\next}%
\def\iint{\DOTSI\intno@\tw@\FN@\ints@}%
\def\iiint{\DOTSI\intno@\thr@@\FN@\ints@}%
\def\iiiint{\DOTSI\intno@4 \FN@\ints@}%
\def\idotsint{\DOTSI\intno@\z@\FN@\ints@}%
\def\ints@{\findlimits@\ints@@}%
\newif\iflimtoken@
\newif\iflimits@
\def\findlimits@{\limtoken@true\ifx\next\limits\limits@true
 \else\ifx\next\nolimits\limits@false\else
 \limtoken@false\ifx\ilimits@\nolimits\limits@false\else
 \ifinner\limits@false\else\limits@true\fi\fi\fi\fi}%
\def\multint@{\int\ifnum\intno@=\z@\intdots@                          %1
 \else\intkern@\fi                                                    %2
 \ifnum\intno@>\tw@\int\intkern@\fi                                   %3
 \ifnum\intno@>\thr@@\int\intkern@\fi                                 %4
 \int}%                                                               %5
\def\multintlimits@{\intop\ifnum\intno@=\z@\intdots@\else\intkern@\fi
 \ifnum\intno@>\tw@\intop\intkern@\fi
 \ifnum\intno@>\thr@@\intop\intkern@\fi\intop}%
\def\intic@{%
    \mathchoice{\hskip.5em}{\hskip.4em}{\hskip.4em}{\hskip.4em}}%
\def\negintic@{\mathchoice
 {\hskip-.5em}{\hskip-.4em}{\hskip-.4em}{\hskip-.4em}}%
\def\ints@@{\iflimtoken@                                              %1
 \def\ints@@@{\iflimits@\negintic@
   \mathop{\intic@\multintlimits@}\limits                             %2
  \else\multint@\nolimits\fi                                          %3
  \eat@}%                                                             %4
 \else                                                                %5
 \def\ints@@@{\iflimits@\negintic@
  \mathop{\intic@\multintlimits@}\limits\else
  \multint@\nolimits\fi}\fi\ints@@@}%
\def\intkern@{\mathchoice{\!\!\!}{\!\!}{\!\!}{\!\!}}%
\def\plaincdots@{\mathinner{\cdotp\cdotp\cdotp}}%
\def\intdots@{\mathchoice{\plaincdots@}%
 {{\cdotp}\mkern1.5mu{\cdotp}\mkern1.5mu{\cdotp}}%
 {{\cdotp}\mkern1mu{\cdotp}\mkern1mu{\cdotp}}%
 {{\cdotp}\mkern1mu{\cdotp}\mkern1mu{\cdotp}}}%
\def\RIfM@{\relax\protect\ifmmode}
\def\text{\RIfM@\expandafter\text@\else\expandafter\mbox\fi}
\let\nfss@text\text
\def\text@#1{\mathchoice
   {\textdef@\displaystyle\f@size{#1}}%
   {\textdef@\textstyle\tf@size{\firstchoice@false #1}}%
   {\textdef@\textstyle\sf@size{\firstchoice@false #1}}%
   {\textdef@\textstyle \ssf@size{\firstchoice@false #1}}%
   \glb@settings}
\def\textdef@#1#2#3{\hbox{{%
                    \everymath{#1}%
                    \let\f@size#2\selectfont
                    #3}}}
\newif\iffirstchoice@
\def\Let@{\relax\iffalse{\fi\let\\=\cr\iffalse}\fi}%
\def\vspace@{\def\vspace##1{\crcr\noalign{\vskip##1\relax}}}%
\def\multilimits@{\bgroup\vspace@\Let@
 \baselineskip\fontdimen10 \scriptfont\tw@
 \advance\baselineskip\fontdimen12 \scriptfont\tw@
 \lineskip\thr@@\fontdimen8 \scriptfont\thr@@
 \lineskiplimit\lineskip
 \vbox\bgroup\ialign\bgroup\hfil$\m@th\scriptstyle{##}$\hfil\crcr}%
\def\Sb{_\multilimits@}%
\def\endSb{\crcr\egroup\egroup\egroup}%
\def\Sp{^\multilimits@}%
\newdimen\ex@
\def\rightarrowfill@#1{$#1\m@th\mathord-\mkern-6mu\cleaders
 \hbox{$#1\mkern-2mu\mathord-\mkern-2mu$}\hfill
 \mkern-6mu\mathord\rightarrow$}%
\def\leftarrowfill@#1{$#1\m@th\mathord\leftarrow\mkern-6mu\cleaders
 \hbox{$#1\mkern-2mu\mathord-\mkern-2mu$}\hfill\mkern-6mu\mathord-$}%
\def\leftrightarrowfill@#1{$#1\m@th\mathord\leftarrow
\mkern-6mu\cleaders
 \hbox{$#1\mkern-2mu\mathord-\mkern-2mu$}\hfill
 \mkern-6mu\mathord\rightarrow$}%
\def\overrightarrow{\mathpalette\overrightarrow@}%
\def\overrightarrow@#1#2{\vbox{\ialign{##\crcr\rightarrowfill@#1\crcr
 \noalign{\kern-\ex@\nointerlineskip}$\m@th\hfil#1#2\hfil$\crcr}}}%
\def\overleftarrow{\mathpalette\overleftarrow@}%
\def\overleftarrow@#1#2{\vbox{\ialign{##\crcr\leftarrowfill@#1\crcr
 \noalign{\kern-\ex@\nointerlineskip}$\m@th\hfil#1#2\hfil$\crcr}}}%
\def\overleftrightarrow{\mathpalette\overleftrightarrow@}%
\def\overleftrightarrow@#1#2{\vbox{\ialign{##\crcr
   \leftrightarrowfill@#1\crcr
 \noalign{\kern-\ex@\nointerlineskip}$\m@th\hfil#1#2\hfil$\crcr}}}%
\def\underrightarrow{\mathpalette\underrightarrow@}%
\def\underrightarrow@#1#2{\vtop{\ialign{##\crcr$\m@th\hfil#1#2\hfil
  $\crcr\noalign{\nointerlineskip}\rightarrowfill@#1\crcr}}}%
\def\underleftarrow{\mathpalette\underleftarrow@}%
\def\underleftarrow@#1#2{\vtop{\ialign{##\crcr$\m@th\hfil#1#2\hfil
  $\crcr\noalign{\nointerlineskip}\leftarrowfill@#1\crcr}}}%
\def\underleftrightarrow{\mathpalette\underleftrightarrow@}%
\def\underleftrightarrow@#1#2{\vtop{\ialign{##\crcr$\m@th
  \hfil#1#2\hfil$\crcr
 \noalign{\nointerlineskip}\leftrightarrowfill@#1\crcr}}}%
\def\qopnamewl@#1{\mathop{\operator@font#1}\nlimits@}
\let\nlimits@\displaylimits
\def\setboxz@h{\setbox\z@\hbox}
\def\varlim@#1#2{\mathop{\vtop{\ialign{##\crcr
 \hfil$#1\m@th\operator@font lim$\hfil\crcr
 \noalign{\nointerlineskip}#2#1\crcr
 \noalign{\nointerlineskip\kern-\ex@}\crcr}}}}
 \def\rightarrowfill@#1{\m@th\setboxz@h{$#1-$}\ht\z@\z@
  $#1\copy\z@\mkern-6mu\cleaders
  \hbox{$#1\mkern-2mu\box\z@\mkern-2mu$}\hfill
  \mkern-6mu\mathord\rightarrow$}
\def\leftarrowfill@#1{\m@th\setboxz@h{$#1-$}\ht\z@\z@
  $#1\mathord\leftarrow\mkern-6mu\cleaders
  \hbox{$#1\mkern-2mu\copy\z@\mkern-2mu$}\hfill
  \mkern-6mu\box\z@$}
\def\projlim{\qopnamewl@{proj\,lim}}
\def\injlim{\qopnamewl@{inj\,lim}}
\def\varinjlim{\mathpalette\varlim@\rightarrowfill@}
\def\varprojlim{\mathpalette\varlim@\leftarrowfill@}
\def\varliminf{\mathpalette\varliminf@{}}
\def\varliminf@#1{\mathop{\underline{\vrule\@depth.2\ex@\@width\z@
   \hbox{$#1\m@th\operator@font lim$}}}}
\def\varlimsup{\mathpalette\varlimsup@{}}
\def\varlimsup@#1{\mathop{\overline
  {\hbox{$#1\m@th\operator@font lim$}}}}
\def\align{\@verbatim \frenchspacing\@vobeyspaces \@alignverbatim
You are using the "align" environment in a style in which it is not defined.}
\let\csname endalign*\endcsname =\endtrivlist
\def\alignat{\@verbatim \frenchspacing\@vobeyspaces \@alignatverbatim
You are using the "alignat" environment in a style in which it is not defined.}
\let\csname endalignat*\endcsname =\endtrivlist
\def\xalignat{\@verbatim \frenchspacing\@vobeyspaces \@xalignatverbatim
You are using the "xalignat" environment in a style in which it is not defined.}
\let\csname endxalignat*\endcsname =\endtrivlist
\def\gather{\@verbatim \frenchspacing\@vobeyspaces \@gatherverbatim
You are using the "gather" environment in a style in which it is not defined.}
\let\csname endgather*\endcsname =\endtrivlist
\def\multiline{\@verbatim \frenchspacing\@vobeyspaces \@multilineverbatim
You are using the "multiline" environment in a style in which it is not defined.}
\let\csname endmultiline*\endcsname =\endtrivlist
\def\arrax{\@verbatim \frenchspacing\@vobeyspaces \@arraxverbatim
You are using a type of "array" construct that is only allowed in AmS-LaTeX.}
\def\tabulax{\@verbatim \frenchspacing\@vobeyspaces \@tabulaxverbatim
You are using a type of "tabular" construct that is only allowed in AmS-LaTeX.}
\let\csname endarrax*\endcsname =\endtrivlist
\let\csname endtabulax*\endcsname =\endtrivlist
\def\@@eqncr{\let\@tempa\relax
    \ifcase\@eqcnt \def\@tempa{& & &}\or \def\@tempa{& &}%
      \else \def\@tempa{&}\fi
     \@tempa
     \if@eqnsw
        \iftag@
           \@taggnum
        \else
           \@eqnnum\stepcounter{equation}%
        \fi
     \fi
     \global\tag@false
     \global\@eqnswtrue
     \global\@eqcnt\z@\cr}
 \def\endequation{%
     \ifmmode\ifinner % FLEQN hack
      \iftag@
        \addtocounter{equation}{-1} % undo the increment made in the begin part
        $\hfil
           \displaywidth\linewidth\@taggnum\egroup \endtrivlist
        \global\tag@false
        \global\@ignoretrue   
      \else
        $\hfil
           \displaywidth\linewidth\@eqnnum\egroup \endtrivlist
        \global\tag@false
        \global\@ignoretrue 
      \fi
     \else   
      \iftag@
        \addtocounter{equation}{-1} % undo the increment made in the begin part
        \eqno \hbox{\@taggnum}
        \global\tag@false%
        $$\global\@ignoretrue
      \else
        \eqno \hbox{\@eqnnum}% $$ BRACE MATCHING HACK
        $$\global\@ignoretrue
      \fi
     \fi\fi
 } 
 \newif\iftag@ \tag@false
 \def\tag{\@ifnextchar*{\@tagstar}{\@tag}}
 \def\@tag#1{%
     \global\tag@true
     \global\def\@taggnum{(#1)}}
 \def\@tagstar*#1{%
     \global\tag@true
     \global\def\@taggnum{#1}%  
}
\begin{document}
\twocolumn[
\icmltitle{Testing Group Fairness via Optimal Transport Projections}

% It is OKAY to include author information, even for blind
% submissions: the style file will automatically remove it for you
% unless you've provided the [accepted] option to the icml2021
% package.

% List of affiliations: The first argument should be a (short)
% identifier you will use later to specify author affiliations
% Academic affiliations should list Department, University, City, Region, Country
% Industry affiliations should list Company, City, Region, Country

% You can specify symbols, otherwise they are numbered in order.
% Ideally, you should not use this facility. Affiliations will be numbered
% in order of appearance and this is the preferred way.
%\icmlsetsymbol{equal}{*}
%
\begin{icmlauthorlist}
\icmlauthor{Nian Si\!}{1}\hspace{-1pt}
\icmlauthor{Karthyek Murthy \!}{2}\hspace{-1pt}
\icmlauthor{Jose Blanchet\!}{1}\hspace{-1pt}
\icmlauthor{Viet Anh Nguyen\!}{1,3}\hspace{-1pt}

\end{icmlauthorlist}

\icmlaffiliation{1}{Department of Management Science \& Engineering, Stanford University}
\icmlaffiliation{2}{Engineering Systems and Design pillar, Singapore University of Technology and Design}
\icmlaffiliation{3}{VinAI Research, Vietnam}
\icmlcorrespondingauthor{Nian Si}{niansi@stanford.edu}

%\icmlaffiliation{goo}{Googol ShallowMind, New London, Michigan, USA}
%\icmlaffiliation{ed}{School of Computation, University of Edenborrow, Edenborrow, United Kingdom}
%
%\icmlcorrespondingauthor{CieuaVvvvv}{c.vvvvv@googol.com}

% You may provide any keywords that you
% find helpful for describing your paper; these are used to populate
% the "keywords" metadata in the PDF but will not be shown in the document
\icmlkeywords{Fairness, Hypothesis test, Wasserstein distance}
\vskip 0.3in
]

% this must go after the closing bracket ] following \twocolumn[ ...

% This command actually creates the footnote in the first column
% listing the affiliations and the copyright notice.
% The command takes one argument, which is text to display at the start of the footnote.
% The \icmlEqualContribution command is standard text for equal contribution.
% Remove it (just {}) if you do not need this facility.

\printAffiliationsAndNotice{}  % leave blank if no need to mention equal contribution

\begin{abstract}
 We present a statistical testing framework to detect if a given machine learning classifier fails to satisfy a wide range of group fairness notions.
 The proposed test is a flexible, interpretable, and statistically rigorous tool for auditing whether  exhibited biases are intrinsic to the algorithm or due to the randomness in the data.
 The statistical challenges, which may arise from multiple impact criteria that define group fairness and which are discontinuous on model parameters, are conveniently tackled by projecting the empirical measure onto the set of group-fair probability models using optimal transport. This statistic is efficiently computed using linear programming and its asymptotic distribution is explicitly obtained. The proposed framework can also be used to test for testing composite fairness hypotheses and fairness with multiple sensitive attributes.
The optimal transport testing formulation improves interpretability by characterizing the minimal covariate perturbations that eliminate the bias observed in the audit.

\end{abstract}

\section{Introduction}
% \begin{itemize}
% \item[1)] Discuss the true test.

% \item[2)] Discuss technical results of discontinuous RWPI; different
%   rate with the continuous case.

% \item[3)] $\epsilon $-fairness.

% \item[4)] Suitable for handling multidimensional estimating equation:
%   the auditor simply needs to specify the transportation cost function

% \item[5)] Comparison with empirical likelihood
% \end{itemize}

%The ubiquity of algorithmic decision making has a huge impact on  daily life, including jobs, health care, financial services, etc.
Algorithmic decisions are commonly conceived to have the potential of being more objective than a human's decisions, since they are generated by logical instructions and the rules of algebra. However, recent studies indicate that this may not be the case. For example, an algorithm which helps the US criminal justice system to predict recidivism rates has been shown to falsely give a higher risk for African-Americans
than white Americans~\citep{chouldechova2017fair, ref:propublica}.
Similar biases are exhibited against female candidates in a hiring-help system developed by Amazon AI~\cite{ref:dastin2018amazon} and an ad-targeting algorithm used by Google~\cite{ref:datta2015automated}.
%Further, a hiring-help system developed by Amazon AI discriminated against female software development and technical positions~\cite{ref:dastin2018amazon}, while an ad-targeting algorithm used by Google displayed more higher-paying jobs for male than female~\cite{ref:datta2015automated}.

  A natural first explanation for the reported algorithmic biases is that the data used to train the algorithms may already be corrupted by human biases \cite{ref:buolamwini2018gender,  ref:manrai2016genetic}. Deeper inquests have revealed insights on how common learning procedures intrinsically perpetuate the biases and potentially introduce fresh ones. The usual practice of training by minimizing empirical risk, while geared towards yielding predictions that are best when averaged over the entire population, often
  %does so at the expense of minority subgroups typically
  under-represents minority subgroups in the datasets.
% There are several possible reasons for biased and unfair behaviors of algorithmic decisions. First, the data used to train the algorithm may be already corrupted by human biases \cite{ref:buolamwini2018gender,  ref:manrai2016genetic}. Second, the empirical risk minimization commonly used in machine learning algorithms may over-represent the majority and ignore minor subpopulations. Last,
Moreover, even though certain sensitive attributes are forbidden by law to be  used in the algorithm, the strong correlations between the sensitive attributes and other features potentially lead to biases in predictions.
%generate unexpected potentially unfair behaviors.
%For example, the frequency of travelling to a particular country may create an unintentional association to the race of an individual.
As reported in the studies in \cite{ref:grgic2016case, ref:garg2019counterfactual, ref:barocas2016big, ref:black2020fliptest, ref:kleinberg2018algorithmic, ref:lipton2018does}, merely masking the sensitive attributes does not address the problem.

The aforementioned biases and their %disruptive
impacts  have sparked substantial interests in the pursuit of algorithmic fairness~\cite{ref:berk2018fairness, ref:chouldechova2020snapshot, ref:corbett2017algorithmic, ref:mehrabi2019survey}. Testing whether a given machine learning algorithm is fair emerges as a question of first-order importance. In turn, designing this test for a wide range of group fairness notions (discussed in the sequel) is the main task of this paper.
%for machine learning classifiers

%This is naturally so considering the necessity of detecting biases in audits before deployment and as well from the viewpoint of serving as a conceptual building block in the larger pursuit of developing bias-free learning procedures.
%Motivated by this importance, this paper tackles the question of detecting if a given machine learning classifier fails to satisfy a generic notion of group fairness.
%that assesses the differences in the classifier's impacts on various groups.

Our proposed statistical hypothesis testing framework (testing framework for short) allows the auditors to systematically determine whether the biases exhibited in the audit procedure, if any, are intrinsic to the algorithm or due to the randomness in data. Moreover, our framework can be implemented as a black-box, without knowing the exact structure of the classification algorithm used.

% an important aim is to test if a given machine learning algorithm is fair. To properly perform the test,
% we need first find an appropriate notion of fairness.
% The task of certifying fairness needs a notion of fairness to begin with.
%A plethora of criteria for fair machine learning have been proposed in the literature, many of them are motivated by philosophical or sociological concepts or legal constraints. For example, anti-discrimination laws may prohibit making decisions based on sensitive attributes such as age, gender, race, or sexual orientation. Thus, a na\"{i}ve strategy, called fairness through unawareness, involves removing all sensitive attributes from the training data. However, this strategy seldom guarantees any fairness due to the inter-correlation issues~\cite{ref:grgic2016case, ref:garg2019counterfactual}, and thus potentially fails to generate inclusive outcomes~\cite{ref:barocas2016big, ref:black2020fliptest, ref:kleinberg2018algorithmic, ref:lipton2018does}.

For settings where sensitive attributes are not explicitly used as input to classification, fairness is measured on impact either at a group level or at an individual level~\cite{ref:barocas2016big}.
%Measuring fairness on impact, either at a group level or at an individual level~\cite{ref:dwork2012fairness}, is deemed suitable for settings where the sensitive attributes are not explicitly used.
Group fairness notions seek to measure the differences in impacts across different groups and constitute the prominent means of assessing discrimination associated with group memberships. Individual fairness, on the other hand, seeks to assess if similar users are treated similarly~\cite{ref:dwork2012fairness,ref:john2020verifying,ref:xue2020auditing}.
%The notion of counterfactual fairness  was also suggested as a measure of causal fairness~\cite{ref:garg2019counterfactual}.
Common examples of group fairness notions include disparate impact~\cite{ref:zafar2017fairness},~demographic parity (statistical parity)~\cite{ref:calders2010three},~equality of opportunity~\cite{ref:hardt2016equality}, equalized odds~\cite{ref:hardt2016equality}, etc.
The specific choice is usually driven by the philosophical, sociological or legal constraints binding the application considered.
%Common examples strive to promote \textit{group} fairness~\cite{ref:hardt2016equality}, or individual fairness~\cite{ref:dwork2012fairness}, or seek to prevent disparate treatment \cite{ref:zafar2017fairness} or avoid disparate mistreatment~\cite{ref:feldman2015certifying, ref:zafar2015fairness}.
% Working towards similar goals, notions of \textit{group} fairness focus on reducing the difference of favorable outcomes proportions among different sensitive groups. Examples of group fairness notions include disparate impact~\cite{ref:zafar2017fairness},~demographic parity (statistical parity)~\cite{ref:calders2010three, ref:dwork2012fairness},~equality of opportunity~\cite{ref:hardt2016equality}, and equalized odds~\cite{ref:hardt2016equality}. The notion of counterfactual fairness  was also suggested as a measure of causal fairness~\cite{ref:garg2019counterfactual}. %Despite the abundance of available notions,
% there is unfortunately no general consensus on the most suitable measure to serve as the industry standard. Moreover, except in trivial cases, it is not possible for a machine learning algorithm to simultaneously satisfy multiple notions of fairness \cite{ref:berk2018fairness, kleinberg2016inherent}.
%Therefore, the choice of the fairness notion is likely to remain more an art than a science.

Our testing framework applies to a generic notion of group fairness which encompasses all of the above specific group fairness notions as examples. This unifying approach can also be used in contexts requiring the use of different fairness notions simultaneously and settings with multiple groups.

Since a single fairness criterion among two groups can be reduced to testing the equality in two sample conditional means, one may consider employing a Welch's $t$-test or a permutation test. Further, a suitable adjustment of the randomness  in sample sizes, as in  ~\citet{ref:diciccio2020evaluating, ref:tramer2017fairtest} can be applied. Some other existing methods such as~\citet{ besse2018confidence} also only apply to one-dimensional criterion. Extensions to multiple impact criteria are not immediate, as is the equalized odds case criterion~\cite{ref:hardt2016equality}, or in the presence of multiple groups. Algorithmic approaches, such as in \cite{ref:saleiro2018aequitas}, lack the control of the type-I (false positive error). In contrast, the framework proposed here is applicable under general multiple impact criteria and controls the type-I error exactly.

%Since testing for group fairness is amenable to be written as a two-sample equal conditional mean test, one may consider employing a Welch's $t$-test or permutation tests in the elementary cases where only one differential impact criterion is assessed.

The statistical challenges, which may arise from the presence of multiple impact criteria, are conveniently handled in our framework by utilizing the machinery of optimal transport projections. This involves computing the test statistic by projecting, or in other words, optimally transporting the empirical measure to the set of probability models which satisfy the group fairness notion (or notions) considered. This gives a measure of plausibility of the classifier in satisfying the fairness criterion under the data-generating distribution and the fairness hypothesis is duly rejected if the test statistic exceeds a suitable threshold determined by the significance level. This threshold is determined from the limiting distribution of the test statistic obtained as one of the main results of this paper.

Performing statistical inference with a projection criterion is prevalent in statistics: \citet{ref:owen2001empirical} serves as a comprehensive reference for projections, or profile functions, that are computed based on likelihood ratio metrics or the Kullback-Liebler divergence. \citet{ref:blanchet2019robust} and \citet{ref:cisneros2020distributionally} study statistical inference with optimal transport projections. Recently, optimal transport divergences~\citep{villani} become an attractive tool in many recent machine learning studies, including missing data imputation, geodesic PCA, point embeddings, and repairing data with Wasserstein barycenters for training fair classifiers ~\citep{ref:silvia2020general,ref:del2018obtaining,zehlike2020matching}.

\citet{ref:taskesen2020statistical} uses optimal transport projections to test a smooth relaxation of the equal opportunity criterion called probabilistic fairness; see \citet{ref:pleiss2017fairness}. This relaxation is required in \citet{ref:taskesen2020statistical} to overcome the discontinuities in the classification boundaries which create technical complications when computing the optimal transport projections. Further, the resulting test statistic involves a non-convex optimization problem which is difficult to compute.
In contrast, our work resolves the technical challenges arising from the discontinuities in classification boundaries. Moreover, our test statistic is the optimal value of a linear program, whose optimal solution offers interpretability by characterizing the minimal covariate perturbations that eliminate the bias observed in the audit. We emphasize that addressing boundary discontinuities is not simply a technical improvement. As we discuss in Section~\ref{sec:null-3}, our results show different qualitative behaviors both in the scaling and the interpretation of the optimal transport projection in terms of group fairness using optimal transport projections.
In addition to enabling exact, computationally tractable, and interpretable fairness assessment for general deterministic classifiers, the technical analysis serves as a stepping stone for statistical inference in estimation tasks such as quantile regression which involve discontinuous estimating equations.

The main contributions are summarized as follows:

(1) We develop a statistical hypothesis test for assessing group fairness as per a generic notion that includes commonly used fairness criteria as special cases.
The test is computationally tractable and interpretable. Besides being applicable to settings involving multiple groups, our framework is also applicable to any
classifier algorithms, including but not limited to the logistic
regression, SVMs, kernel methods, and nearest neighbors. % and interpretable. Besides being applicable to settings involving multiple protected groups, the framework is amenable to be applied for any deterministic classifier, including but not limited to logistic regression, SVM, kernel methods and nearest neighbors.

%we propose novel hypothesis test of group fairness based on the theory of optimal transport. This statistical test works for any notion of fairness and fairness with non-binary sensitive attributes as well as multiple sensitive attributes. Further, our framework is also applicable to any classifier algorithms, including but not limited to logistic regression, SVM, kernel methods, and nearest neighbors.

%(2) We show that the test statistic is computable as the value of a linear program whose optimal solution offers interpretability by characterizing the the minimal covariate perturbations that render the classifier bias-free in the audit.

(2) We develop an extension of the statistical test for the testing problem with composite null hypothesis,  addressed here as $\epsilon$-fairness.

(3) The framework facilitates the exact use of fairness criterion, thus obviating the need to invoke relaxations in the absence of smoothness in the impact criteria defining group fairness. The resulting qualitative difference, in terms of the rate of convergence for resulting optimal transport projections, is previously unreported and could be of interest from the technical standpoint of analysing profile functions with discontinuous score functions.

%(3) Technically, we believe the statistical framework itself is of interest even beyond the scope of fairness.  Due to the nature of discontinuity  of the estimating function, the convergence rate differs from the standard results in in \citet{ref:blanchet2019robust}, \citet{ref:taskesen2020statistical} and \citet{ref:cisneros2020distributionally}; see Theorem \ref{thm:clt} and also discussions in Section~\ref{sec:null-3}. Furthermore, since the problem can be reformulated to be a linear program, we demonstrate that this framework has computational efficient solutions; see discussions in Section~\ref{sec:computation-stats}.

% ===========

% In particular, this paper is a significant improvement over~\citet{ref:taskesen2020statistical} in several aspects.
% \begin{itemize}
% \item This paper uses the original notion of fairness. And as we show in  numerical experiments, the probabilistic notion may be misleading for the true notion of fairness.
% \item \citet{ref:taskesen2020statistical} is only applicable for logistic
% regression. This paper is applicable for \text{any} classifier, as long as
% there exists an oracle for projection.
% \end{itemize}
% }

The remainder of the paper is structured as follows. In Section~\ref{sec:preliminaries}, we introduce a generic notion of group fairness and discuss the theory of optimal transport. Section~\ref{sec:simple} details the proposed statistical test for the simple fairness null hypothesis. Section~\ref{sec:composite} extends our approach to composite hypotheses. Section~\ref{sec:computation-estimation} discusses computational  methods associated with the  test. Numerical experiments presented in Section~\ref{sec:experiments} serve to demonstrate the efficacy of the test.
All technical proofs are relegated to \ref{Appenix:proof}.

%Section~\ref{sec:conclusion} concludes the paper with outlooks on the broader impact of our Wasserstein projection hypothesis testing approach.

\textbf{Notations.} We use $\|\cdot \| _{*}$ to denote the dual norm
of $\| \cdot \|$. We denote $(x)^{-}=\min \left\{ x,0\right\} $ and $
(x)^{+}=\max \{x,0\}$. We use $\Rightarrow$, $\overset{p}{\longrightarrow }$ and
$\overset{a.s.}{\longrightarrow }$ to denote convergence in distribution, in probability and convergence almost surely,
respectively. %$\mathbb{E}_{\mathbb{P}}[\cdot ]$ is the expectation under $\mathbb{P};$ we may omit the subscript $\mathbb{P}$ when the
%meaning clear according to the context.
The support of the distribution of $X$ is represented by $\text{supp}(X)$. We use $[n]$ to denote the set $\{1,2,\ldots,n\}$ and $\mathbb{R}_+^m$ to denote the positive orthant $\{x\in\mathbb{R}_+^m: x\geq 0 \}$. $\delta_{(x,a,y)}$ denotes a Dirac measure on a fixed point $(x,a,y)$.

%%%%%%%%%%%%%%%%%%%%%%%%%%%

\section{Problem Setup and Preliminaries}
\label{sec:preliminaries}

Throughout this paper we consider the classification settings in which the deterministic classifier $\mc C: \mc X \rightarrow \mc Y$ maps the input features from $\mc X \subset \mathbb{R}^d$ to output
labels in the set $\mc Y = \{0,1\}$.
Evaluation of fairness is considered with respect to a sensitive
attribute $A$ taking values in a finite set $\mathcal{A}.$ For simplicity, we consider $\mc A = \{0,1\}$ where $A = 1$ can be taken to identify the reference group. The statistical test developed in this paper
and the main results are applicable more generally to settings involving
a non-binary sensitive attribute (or) multiple sensitive attributes.
Most notions of group fairness are stated in
terms of the joint distribution $\QQ$ of $(X,A,Y),$ where $X$ is the vector of
input features, $A$ is the sensitive attribute, and $Y$ is the class
label of a random sample from the population.

\subsection{Notions of Group Fairness}
A general reference to
fairness notions can be found in \citet[Table 14]{ref:makhlouf2020on}. The statistical notion of group fairness that we consider,
encapsulated in Definition \ref{def:generic} below, is stated flexibly
to include commonly used notions such as equal opportunity
\protect\cite{ref:hardt2016equality}, predictive equality
\protect\cite{ref:corbett2017algorithmic}, equalized odds
\protect\cite{ref:hardt2016equality}, and statistical parity
\protect\cite{ref:dwork2012fairness}, etc., as special cases. This
flexibility is achieved by stating the definition in terms of a tuple
$(U,\phi),$ where $U$ is an $\R^s$-valued random vector completely dependent on $(A,Y)$ and $\phi: \R^s \times \R^s \to \R^m$ is a function
chosen to discern the differences in performance of the classifier across groups. We address $(U,\phi)$ as the
\textit{discerning tuple}.

\begin{definition}[Generic notion of group fairness]
  \label{def:generic}
  A classifier $\mc C: \mc X \to \{0,1\}$ is fair
  with respect to the discerning tuple $(U,\phi)$ under a probability
  distribution $\QQ$ if
  \be
  \label{eq:generic}
  \EE_{\QQ}[ \mc C(X) \phi(U, \EE_{\QQ}[U])] = 0.
  \ee
\end{definition}
Note that $\mathcal{C}(X) =\mathbb{I}\{\mathcal{C}(X)=1\}$, and thus equation~\eqref{eq:generic} can be seen as $\mathbb{E}_\mathbb{Q}[\mathbb{I}\{\mathcal{C}(X)=1\}\phi(U,\mathbb{E}_\mathbb{Q}[U])]=0.$  At the first glance, equation  \eqref{eq:generic} seems asymmetric as it only considers the positive prediction label $\mathcal{C}(X)=1$.
However, it is easy to check that $\mathbb{E}_\mathbb{Q} [\phi(U,\mathbb{E}_\mathbb{Q}[U])]=0$  in all the group fairness notions in Examples 1 - 6. By taking the difference, we get the symmetric guarantee that  $\mathbb{E}_\mathbb{Q}[\mathbb{I}\{\mathcal{C}(X)=0\}\phi(U,\mathbb{E}_\mathbb{Q}[U])]=0$.

Various useful notions of fairness can be obtained by varying the
choice of $(U,\phi)$ as illustrated in Examples \ref{eg:EOpp}~-~\ref{eg:Eqoppmultiple} below.
We take $\mc A = \{0,1\}$ in Examples \ref{eg:EOpp}~-~\ref{eg:SPar}.
%\begin{table}[htbp]
%\caption{Fairness notions.}
%\label{tab:fair}\centering
%\begin{tabular}{lcc}
%\toprule Notion & $U$ &$\phi(U, \EE_{\QQ}[U])]$  \\
%\midrule Equal  opportunity  &  $\mathbb{I}_{(0,1)}(A, Y))$  &  $\frac{U_1}{\EE_\QQ[U_1]}- \frac{U_2}$ \\
%Predictive  equality & 0.0945 &   0.0540   \\
%Equalized odds & 0.0895  &  0.0450   \\
%Statistical parity & 0.0900  &  0.0430   \\
%\bottomrule
%\end{tabular}%
%\end{table}

\begin{example}[Equal opportunity
  \protect\cite{ref:hardt2016equality}]
  \label{eg:EOpp}
  A classifier $\mc C: \mathcal{X} \to \{0, 1\}$ satisfies the equal
  opportunity criterion relative to a distribution
  $\mathbb{Q} $ if
  \begin{align*}
    &\mathbb{Q}\left( \mc C(X) = 1 |A=1,Y=1\right) \\
&\quad- \mathbb{Q}\left( \mc C(X) =  1 |A=0,Y=1\right) = 0.
\end{align*}
This criterion coincides with condition~\eqref{eq:generic} with the
choice $U = (\mathbb{I}_{(1,1)}(A, Y); \mathbb{I}_{(0,1)}(A, Y))$, where $\mathbb{I}_{(a,y)}(A, Y)$ denotes the indicator $\mathbb{I}(A=a, Y=y)$,  and
\begin{equation}
  \phi: (U, \EE_{\QQ}[U]) \mapsto \frac{U_1}{\EE_\QQ[U_1]}- \frac{U_2}
  {\EE_\QQ[U_2]}.
  \label{eqn:phi}
\end{equation}
\end{example}

\begin{example}[Predictive equality~\protect\cite%
  {ref:corbett2017algorithmic}]
  \label{def:PEqu}
  A classifier $\mc C:\mathcal{X} \to \{0, 1\}$ satisfies the
  predictive equality criterion relative to a distribution
  $\mathbb{Q}$ if
  \begin{align*}
    &\mathbb{Q}\left( \mc C(X) = 1 |A=0, Y = 0\right)  \\
    & \quad -\mathbb{Q}\left( \mc C(X) = 1 |A=1 , Y = 0\right) = 0.
\end{align*}
This criterion coincides with condition~\eqref{eq:generic} with the
choice $U = [\mathbb{I}_{(1,0)}(A, Y); \mathbb{I}_{(0,0)}(A, Y)]$ and $\phi$ takes the same form as the function \eqref{eqn:phi}.
\end{example}

\begin{example}[Equalized odds \protect\cite{ref:hardt2016equality}]
  \label{eg:EOdd}
  A classifier $\mc C : \mc X \to \{0, 1\}$ satisfies the equalized
  odds criterion relative to a distribution $\mathbb{Q}$ if it satisfies both equal opportunity and predictive  equality criteria.
This criterion coincides with condition~\eqref{eq:generic} with
$U = [\mathbb{I}_{(1,1)}(A, Y); \mathbb{I}_{(0,1)}(A, Y);
\mathbb{I}_{(1, 0)}(A, Y); \mathbb{I}_{(0,0)}(A,Y)]$ and
\begin{equation*}
  \phi: (U, \EE_{\QQ}[U]) \mapsto \Big[ \frac{U_1}
  {\EE_\QQ[U_1]} - \frac{U_2}{\EE_\QQ[U_2]}; \frac{U_3}{\EE_\QQ[U_3]}
  - \frac{U_4}{\EE_\QQ[U_4]} \Big].
\end{equation*}
\end{example}

\begin{example}[Statistical
  parity~\protect\cite{ref:dwork2012fairness}]
  \label{eg:SPar}
  A classifier $\mc C:\mathcal{X} \to \{0, 1\}$ satisfies the
  statistical parity criterion relative to a distribution
  $\mathbb{Q}$ if
  \begin{equation*}
    \mathbb{Q}\left( \mc C(X) = 1 |A=1\right) -
    \mathbb{Q}\left( \mc C(X) = 1 |A=0 \right) = 0.
\end{equation*}
This criterion coincides with condition~\eqref{eq:generic} with the
choice $U = [ \mathbb{I}_{1}(A); \mathbb{I}_{0}(A)]$ and $\phi$ takes the same form as the function \eqref{eqn:phi}.
\end{example}

If  the sensitive attribute takes multiple values or there are multiple sensitive attributes,   we can still define the associated fairness notions, which correspond to different choices of $(U,\phi)$.
\begin{example}[Equal opportunity with a non-binary sensitive attribute]
 Let $\mathcal{A} = \{0,1,2,\ldots,k \}$. A classifier $\mc C: \mathcal{X} \to \{0, 1\}$ satisfies the equal
  opportunity  criterion relative to a probability measure
  $\mathbb{Q} $ if
  \begin{align*}
    &\mathbb{Q}\left( \mc C(X) = 1 |A=t,Y=1\right) \\
    & \quad- \mathbb{Q}\left( \mc C(X) =  1 |A=0,Y=1\right) = 0 \quad
  \forall t\in \mathcal{A}\backslash\{0\}.
\end{align*}
This criterion coincides with condition~\eqref{eq:generic} with the
choice $U = (\mathbb{I}_{(0,1)}(A, Y);\mathbb{I}_{(1,1)}(A, Y);\ldots \mathbb{I}_{(k,1)}(A, Y))$ and $\phi=(\phi_1,\phi_2,\ldots,\phi_t)$ with
\begin{equation*}
  \phi_t: (U, \EE_{\QQ}[U]) \mapsto \frac{U_t}{\EE_\QQ[U_t]}- \frac{U_1}
  {\EE_\QQ[U_1]}  \quad
  \forall t\in \mathcal{A} \backslash\{0\}.
\end{equation*}
\label{eg:eqopp-nonbinary}
\end{example}

\begin{example}[Equal opportunity with multiple sensitive attributes]
 Suppose we have $K$ sensitive attributes, $A_1,A_2,\ldots,A_K$, all taking values in a superset $\mc A$. A classifier $\mc C: \mathcal{X} \to \{0, 1\}$ satisfies the equal
  opportunity  criterion relative to a probability measure
  $\mathbb{Q} $ if
  \begin{align*}
    &\mathbb{Q}\left( \mc C(X) = 1 |A_t=1,Y=1\right) \\
    & \quad- \mathbb{Q}\left( \mc C(X) =  1 |A_t=0,Y=1\right) = 0 \quad
  \forall t\in [K].
\end{align*}
This criterion coincides with condition~\eqref{eq:generic} with the
choice
\[
U_t = \mathbb{I}_{(1,1)}(A_t, Y) \text{ and } U_{t+K} = \mathbb{I}_{(0,1)}(A_t, Y) \quad \forall t\in [K]
\]
and $\phi=(\phi_1,\phi_2,\ldots,\phi_t)$ with
\begin{equation*}
  \phi_t: (U, \EE_{\QQ}[U]) \mapsto \frac{U_t}{\EE_\QQ[U_t]}- \frac{U_{t+K}}
  {\EE_\QQ[U_{t+K}]}  \quad \forall t\in [K].
\end{equation*}
\label{eg:Eqoppmultiple}
\end{example}
\subsection{Optimal Transport and the Wasserstein Distance}
We next introduce the notion of optimal transport costs, of which Wasserstein
distances is a special case.  Let $ P(\mc Z)$ denote the set of all
probability distributions on
$\mc Z \Let \mc X \times \mc A \times \mc Y.$

\begin{definition}[Optimal transport costs, Wasserstein distances]
Given a lower semicontinuous function $c:\mc Z \times \mc Z \rightarrow [0,\infty],$
the optimal transportation cost $W_c(\QQ_1,\QQ_2)$ between any two
distributions $\QQ_1,\QQ_2 \in  P (\mc Z)$ is given by,
\begin{align*}
  W_c(\QQ_1,\QQ_2) = \min_{\pi \in \Pi(\QQ_1,\QQ_2)} \EE_\pi \left[ {c}\left( Z,Z^\prime\right)\right],
\end{align*}
where $\Pi(\QQ_1,\QQ_2)$ is the set of all joint distributions of
$(Z, Z^\prime)$ such that the law of $Z = (X,A,Y)$ is $\QQ_1$ and that
of $Z^\prime = (X^\prime, A^\prime, Y^\prime)$ is $\QQ_2.$
\label{defn:Wass-dist}
\end{definition}
If $c(\cdot, \cdot)$ is a metric on $\mc Z$, then
$W_c(\cdot)$ is the type-1 Wasserstein distance; see \citet[Chapter 6]{villani}. The quantity $W_c(\QQ_1,\QQ_2)$ can be interpreted as
the least transportation cost incurred in transporting mass from $\QQ_1$ to $\QQ_2,$ when the cost of transporting unit
mass from location $z \in \mc Z$ to location $z^\prime \in \mc Z$ is
given by $c(z,z^\prime)$.

Throughout the paper, we assume that the function $c$ is decomposable as
\begin{align*}
&c\left((x,a,y),(x^\prime,a^\prime,y^\prime)\right)  \\
   &\quad=
  \bar{c}(x,x^{\prime})+\infty \cdot |a-a^{\prime }|+
  \infty \cdot |y-y^{\prime }|,
\end{align*}
for some $\bar{c}:\mc{X} \times \mc{X} \rightarrow [0,\infty]$ satisfying
(i) $\bar{c}(x,x^\prime) = 0$ if and only if $x = x^\prime$ and (ii)
$\bar{c}(x,x^\prime) = \bar{c}(x^\prime,x)$ for all
$x,x^\prime \in \mc X$. In the above expression, we interpret
$\infty \times 0 = 0$.
Examples of $\bar{c}(\cdot, \cdot)$ that are useful in our context include
\begin{subequations}
  \begin{equation}
    \bar{c}(x,x^\prime) = \Vert x - x^\prime \Vert,
    \label{eq:norm-c}
  \end{equation}
  and also
  \begin{equation}
    \bar{c}(x,x^\prime) = k(x,x) -2k(x,x') + k(x',x'),
        \label{eq:kernel-c}
  \end{equation}
\end{subequations}
where $k: \mc X \times \mc X \rightarrow \mathbb{R}$ is a suitable
reproducing kernel. Another useful example of $\bar{c}(\cdot)$ is specified
in terms of the discrete metric suitable for use in the presence of
discrete categorical features: Suppose that the feature vector
$X = (X_D,X_C),$ with $X_D$ denoting the set of discrete features
taking values in a countable set $\mathbb{D} \subset \mathbb{R}^{d_1}$
and $X_C$ denoting the set of continuous features taking values in
$\mathbb{R}^{d_2}.$ We have $d_1 + d_2 = d.$ In this instance, it is
feasible to restrict the transportation to elements in
$\mathbb{D} \times \mathbb{R}^{d_2}$ by considering
\begin{align}
  &\bar{c}\left((x_D,x_C), (x_D^\prime,x_C^\prime)\right) \notag\\
  &\quad = \Vert x_C - x_C^\prime \Vert + \delta \mathbb{I}\{\{x_D, x'_D\} \subset \mathbb{D},x_D\neq x'_D  \} \notag \\
  &\quad+ \infty \cdot \mathbb{I}\{\{x_D, x'_D\} \nsubseteq \mathbb{D},x_D\neq x'_D \},
  \label{discrete-c}
\end{align}
for some $\delta > 0$.
Further, we allow the cost function to be dependent on the sensitive attribute. Following the same line,~\citet{yue2021} recently demonstrates a test power gain by tuning properly a sensitive-attribute-dependent transportation cost function.
%%%%%%%%%%%%%%%%%%%%%%%%%%%%%%%%%%%

\section{Test For Simple Null Hypothesis via Optimal Transport}
\label{sec:simple}
\noindent
Recall that $ P(\mc Z)$ denotes the set of all
probability distributions on
$\mc Z \Let \mc X \times \mc A \times \mc Y.$ Let
\begin{equation*}
  \mc F = \left\{ \QQ \in  P(\mc Z): \EE_{\QQ}[ \mc C(X) \phi(U, \EE_\QQ[U]) ] = 0\right\}
\end{equation*}
be the collection of distributions under which the classifier
$\mc C(\cdot)$ is fair, as deemed by Definition
\ref{def:generic}. Given $N$ independent samples
$\{ x_{i},a_{i},y_{i}\}_{i=1}^{N}$ from a distribution $\PP$ of
$(X,A,Y),$ we are interested in the statistical test with the
hypotheses
\[
  \mathcal{H}_{0}: \PP \in \mc F \quad \text{against} \quad
  \mathcal{H}_{1}: \PP \not\in \mc F.
\]
With the null hypothesis $\mc H_0$ being that the classifier
$\mc C(\cdot)$ is fair, our statistical test will detect the failure of $\mc C(\cdot)$ in meeting the fairness criterion (in Definition \ref{def:generic}) under the data generating distribution. To develop a suitable test statistic, let
$\Pnom= N^{-1}\sum_{i=1}^{N}\delta _{\left( x_{i},a_{i},y_{i}\right)
}$ denote the empirical measure of the samples obtained from a distribution $\mathbb{P} \in P (\mc Z)$. We define the projection of $\Pnom$ onto $\mc F$ as
\begin{align}
\mc P(\Pnom) &\Let \Inf{\QQ \in \mc F}~\Wass_{c}(\QQ, \Pnom) \notag \\
&= \left\{
\begin{array}{cl}
\inf & \Wass_c(\QQ, \Pnom)  \\
\st & \EE_{\QQ}[ \mc C(X) \phi(U, \EE_\QQ[U]) ] = 0.
\end{array}
\right.
\label{problem:primal}
\end{align}

We adopt the statistical hypothesis framework: for a prespecified significance level $\alpha$,
\begin{center}
  reject $\mc H_0$ if $s_{_N} > \eta_{1-\alpha}$,
\end{center}
where $s_{_N}$ is a test statistic that depends on the projection distance $\mc P(\Pnom)$, and $\eta_{1-\alpha}$ is the $(1-\alpha)\times 100\%$ quantile of a limiting distribution.

%The rest of this section unfolds as follows: Section~\ref{sec:null-1} details a tractable reformulation of $\mc P(\Pnom)$.  Section~\ref{sec:null-2} provides a detailed analysis of the test: Theorem~\ref{thm:clt} essentially leads to the statistic $s_N = N \times \mc P(\Pnom)$ and the threshold $\eta_{1-\alpha}$ can be chosen as the quantile of a generalized chi-squared distribution. In Section~\ref{sec:null-3}, we study qualitative structures of the Wasserestein projection report some insights on the convergence rate of the central limit theorem.

\subsection{Linear Programming Formulation for Projection}
\label{sec:null-1}

Our aim here is to reformulate the \textit{in}finite dimensional projection formulation~\eqref{problem:primal} as a finite dimensional linear program. For this purpose, let us define
$d:\mc X \rightarrow [0,\infty]$ as
\begin{align}
  d(x) \Let \inf \left\{\bar{c}(x,x^\prime): x^\prime \in \mc X, \ \mc C(x^\prime) = 1 - \mc C(x)\right\},
  \label{eq:defn-d}
\end{align}
which gives a measure of distance to the region with classifier label
different from that at $x,$ and $d(x)=0$ means that $x$ on the decision boundary.  The value $d(x)$ is readily computed for commonly used classifiers such as linear classifiers (as shown in the proof of Lemma 1 below in the supplement) and kernelized classifiers. In the case of a classifier defined in terms of kernels, say as in, \[\mathcal{C}(x) = \mathbb{I}\left(\sum_{i=1}^n \alpha_i k(x_i,x) + b \geq 0\right),\] one may use the transportation cost \eqref{eq:kernel-c}  and $d(x)$ admits a closed-form expression \[d(x) = \left(\sum_{i=1}^n \alpha_i k(x_i,x) + b\right)^2/(\alpha^\top \mathbb{K} \alpha),\] where $\mathbb{K}$ is an $n \times n$  matrix with entries $\mathbb{K}_{i,j} = k(x_i,x_j).$

\begin{proposition}[Primal reformulation]
  \label{prop:primal} The projection distance $\mc P(\Pnom)$ is equal
  to the optimal value of a linear program. More specifically, we have
\begin{equation}
\label{problem:reformulate}
\mc P(\Pnom) = \left\{
\begin{array}{cl}
  \min_p & \ds \frac{1}{N}\sum_{i \in [N]} p_i d(x_i) \\
  \st & p \in [0, 1]^N, \\
       & \ds \sum_{i \in [N]} [1 - 2 \mc C(x_i)] \phi(u_i, \EE_{\Pnom}[U]) p_i \\
       &\quad = -
         \ds\sum_{i\in [N]}\mc C(x_i) \phi(u_i, \EE_{\Pnom}[U]). %
\end{array}
\right.
\end{equation}
\end{proposition}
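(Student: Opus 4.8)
The plan is to prove the two inequalities $\mc P(\Pnom)\le \mathrm{LP}$ and $\mc P(\Pnom)\ge \mathrm{LP}$, where $\mathrm{LP}$ denotes the optimal value of the program in~\eqref{problem:reformulate}. The conceptual starting point, which turns the seemingly nonlinear constraint $\EE_\QQ[\mc C(X)\phi(U,\EE_\QQ[U])]=0$ into a linear one, is that the infinite penalties on the $A$- and $Y$-coordinates in $c$ force every $\QQ$ with $W_c(\QQ,\Pnom)<\infty$ to share the $(A,Y)$-marginal of $\Pnom$. Since $U$ is a deterministic function of $(A,Y)$, this gives $\EE_\QQ[U]=\EE_{\Pnom}[U]=:\bar u$ for every such $\QQ$, so on the feasible set the constraint reads $\EE_\QQ[\mc C(X)\phi(U,\bar u)]=0$ with $\bar u$ a fixed vector; writing $u_i:=U(a_i,y_i)$ we also have $\bar u=\tfrac1N\sum_i u_i$.

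For $\mc P(\Pnom)\ge \mathrm{LP}$, I would take any $\QQ\in\mc F$ with $W_c(\QQ,\Pnom)<\infty$ (if there is none, both sides are $+\infty$) together with an $\eps$-optimal coupling $\pi$; by symmetry of $\bar c$ I may regard $\pi\in\Pi(\Pnom,\QQ)$ and disintegrate it as $\pi(dz,dz')=\Pnom(dz)\,\kappa_z(dz')$, so that for the atom $z=(x_i,a_i,y_i)$ the kernel $\kappa_i:=\kappa_z$ is a probability measure that, by finiteness of the cost, is a.s.\ concentrated on $\{a_i\}\times\{y_i\}\times\mc X$. Set $p_i:=\kappa_i(\mc C(X')\ne\mc C(x_i))\in[0,1]$. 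Two short computations finish this direction. First, $\int \bar c(x_i,x')\,\kappa_i(dx')\ge \int_{\{\mc C(x')\ne\mc C(x_i)\}}\bar c(x_i,x')\,\kappa_i(dx')\ge p_i\,d(x_i)$, since by~\eqref{eq:defn-d} we have $\bar c(x_i,x')\ge d(x_i)$ whenever $\mc C(x')\ne\mc C(x_i)$; summing with weights $1/N$ gives $W_c(\QQ,\Pnom)+\eps\ge \tfrac1N\sum_i p_i d(x_i)$. Second, since $\QQ=\tfrac1N\sum_i\kappa_i$ preserves the $(A,Y)$-marginal, $\EE_\QQ[\mc C(X)\phi(U,\bar u)]=\tfrac1N\sum_i\phi(u_i,\bar u)\,\kappa_i(\mc C(X')=1)$, and a case split on $\mc C(x_i)\in\{0,1\}$ shows $\kappa_i(\mc C(X')=1)=\mc C(x_i)+p_i(1-2\mc C(x_i))$; hence the fairness constraint is exactly the linear equality in~\eqref{problem:reformulate}, so $p=(p_i)_{i\in[N]}$ is LP-feasible and $W_c(\QQ,\Pnom)+\eps\ge \mathrm{LP}$. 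Sending $\eps\downarrow0$ and taking the infimum over $\QQ\in\mc F$ yields $\mc P(\Pnom)\ge\mathrm{LP}$.

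For the reverse inequality I would, given any LP-feasible $p$ with finite objective (so $p_i=0$ whenever $d(x_i)=\infty$), choose for each $i$ a sequence $x_i^{(n)}\in\mc X$ with $\mc C(x_i^{(n)})=1-\mc C(x_i)$ and $\bar c(x_i,x_i^{(n)})\to d(x_i)$, which exists directly from $d(x_i)$ being an infimum. Define $\QQ_n:=\tfrac1N\sum_i\big[(1-p_i)\delta_{(x_i,a_i,y_i)}+p_i\,\delta_{(x_i^{(n)},a_i,y_i)}\big]$. This $\QQ_n$ has the same $(A,Y)$-marginal as $\Pnom$, so $\EE_{\QQ_n}[U]=\bar u$, and the same case split (run in reverse) gives $\EE_{\QQ_n}[\mc C(X)\phi(U,\bar u)]=\tfrac1N\sum_i\phi(u_i,\bar u)[\mc C(x_i)+p_i(1-2\mc C(x_i))]=0$ by feasibility of $p$, hence $\QQ_n\in\mc F$. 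Using the coupling that keeps mass $(1-p_i)/N$ at $(x_i,a_i,y_i)$ and moves mass $p_i/N$ to $(x_i^{(n)},a_i,y_i)$, we get $W_c(\QQ_n,\Pnom)\le\tfrac1N\sum_i p_i\,\bar c(x_i,x_i^{(n)})\to\tfrac1N\sum_i p_i d(x_i)$, so $\mc P(\Pnom)\le\tfrac1N\sum_i p_i d(x_i)$; minimizing over feasible $p$ gives $\mc P(\Pnom)\le\mathrm{LP}$. I would then observe that~\eqref{problem:reformulate} is a genuine finite-dimensional linear program: $p\in\R^N$ lies in a box, and both the objective and the $m$ fairness equalities are linear in $p$ with constant coefficients $d(x_i)$ and $[1-2\mc C(x_i)]\phi(u_i,\bar u)$, with $d$ available in closed form for the linear and kernelized classifiers discussed before the statement.

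I expect the main obstacle to be organizational rather than deep: the two points that need care are (i) justifying, via the disintegration, that the only relevant feature of the coupling is the scalar flip-probability $p_i$ at each atom, so that the infinite-dimensional transport genuinely collapses to $N$ numbers, and (ii) handling the fact that the infimum defining $d(x_i)$ need not be attained, which is why the upper bound uses approximating sequences $x_i^{(n)}$ and an $\eps$-argument instead of an exact minimizer; a minor further check is the degenerate case $d(x_i)=\infty$ (classifier locally label-constant), where $p_i$ is forced to $0$ on both sides.
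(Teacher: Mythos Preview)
Your proposal is correct and follows essentially the same route as the paper: both arguments first freeze $\EE_\QQ[U]=\EE_{\Pnom}[U]$ via the infinite $(A,Y)$-cost, then disintegrate a coupling against the atoms of $\Pnom$ to extract the flip probabilities $p_i=\kappa_i(\mc C(X')\neq\mc C(x_i))$ for the lower bound, and construct an explicit two-point-per-atom measure $\QQ^\eps$ (your $\QQ_n$) for the upper bound. The only cosmetic differences are that the paper invokes an optimal coupling directly (which exists since $c$ is lower semicontinuous) rather than an $\eps$-optimal one, and handles the $d(x_i)=\infty$ case separately in the subsequent duality proof rather than inside this argument.
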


Naturally, one may study the above linear program by considering its
dual formulation. Define the following function
\begin{align*}
& \mc D(\Pnom)    \Let \\
&\max_{\gamma \in \mathbb{R}^{m}}\left\{
\begin{array}{l}
\frac{1}{N}\sum_{i \in [N]} \gamma^\top \phi(u_i, \EE_{\Pnom}[U])\mc C(x_i) + \\ \left(d(x_i) + \left[1-2\mc C(x_i) \right]\gamma^\top \phi(u_i, \EE _{\Pnom}[U]) \right)^{-}
\end{array}
    \right\},
\end{align*}
where recall the notation that $(x)^{-} = \min\{x,0\}.$ Strong duality
of linear programming asserts that $\mc P(\Pnom)$ and $\mc D(\Pnom)$ are dual to each other.
%Please refer the proof of Proposition \ref{prop:duality} for arriving at the above expression of $\mc D(\Pnom).$
\begin{proposition}[Strong duality]
  \label{prop:duality}
  Strong duality holds, i.e.,
  $\mc P(\Pnom) = \mc D(\Pnom)$.
\end{proposition}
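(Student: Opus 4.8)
The plan is to read the optimization defining $\mc P(\Pnom)$ in Proposition~\ref{prop:primal} as a finite-dimensional linear program in the variables $p=(p_1,\ldots,p_N)$ and to invoke the strong duality theorem for linear programming, after checking the mild regularity that makes it applicable. Abbreviate $\phi_i\Let\phi(u_i,\EE_{\Pnom}[U])$. First I would record that the primal feasible set is nonempty and compact: it lies in the cube $[0,1]^N$, and the explicit choice $p_i=\mc C(x_i)$ is feasible because $\mc C(x_i)^2=\mc C(x_i)$ gives $\sum_i[1-2\mc C(x_i)]\phi_i\,\mc C(x_i)=-\sum_i\mc C(x_i)\phi_i$, which is the equality constraint. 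Since the objective $\tfrac1N\sum_i p_i\,d(x_i)$ is a nonnegative linear functional, the primal value is finite and attained; in the corner case $d(x_i)=\infty$ the convention $\infty\cdot0=0$ simply pins $p_i=0$ at any finite-value solution, which can be absorbed into the constraints to keep a genuine finite LP. Consequently linear programming strong duality applies and equates $\mc P(\Pnom)$ with the optimal value of the LP dual.

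It then remains to check that this dual is exactly $\mc D(\Pnom)$. I would attach a free multiplier $\gamma\in\R^m$ to the equality constraint and nonnegative multipliers $\lambda=(\lambda_i)$ to the upper bounds $p_i\le1$, keeping $p\ge0$ explicit. Grouping the $p_i$-terms in the Lagrangian and minimizing over $p\ge0$ forces each coefficient $\tfrac1N d(x_i)+\gamma^\top[1-2\mc C(x_i)]\phi_i+\lambda_i$ to be nonnegative and eliminates the $p$-dependent part, so the dual becomes
\[
\max_{\gamma,\,\lambda\ge0}\;\Big\{\gamma^\top\sum_i\mc C(x_i)\phi_i-\sum_i\lambda_i\Big\}\quad\st\quad \lambda_i\ge-\tfrac1N d(x_i)-\gamma^\top[1-2\mc C(x_i)]\phi_i .
\]
For fixed $\gamma$ the maximizing $\lambda_i$ is the least admissible value, and substituting it yields $-\sum_i\lambda_i=\sum_i\big(\tfrac1N d(x_i)+\gamma^\top[1-2\mc C(x_i)]\phi_i\big)^-$, using the convention $(t)^-=\min\{t,0\}$. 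A final change of variables $\gamma\mapsto\gamma/N$ leaves the supremum unchanged and, via $(\tfrac1N t)^-=\tfrac1N t^-$, brings the objective into exactly the displayed form of $\mc D(\Pnom)$.

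The only steps that need genuine care are (i) producing a primal feasible point so that \emph{strong} rather than merely weak duality can be invoked --- handled by $p_i=\mc C(x_i)$ above --- and (ii) the bookkeeping that trades the box multipliers $\lambda_i$ for the $(\cdot)^-$ terms, together with the harmless rescaling $\gamma\mapsto\gamma/N$ needed to reconcile the normalization in the statement of $\mc D(\Pnom)$. I do not anticipate any analytic obstacle: once the projection is expressed as a finite linear program over a nonempty, bounded feasible set --- which Proposition~\ref{prop:primal} already provides --- strong duality is automatic and the remainder is the algebraic simplification outlined above.
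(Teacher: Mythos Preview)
Your approach is essentially the paper's: recognize \eqref{problem:reformulate} as a finite LP, invoke LP strong duality, and simplify the dual by optimizing out the box multipliers to obtain the $(\cdot)^-$ form. Your Lagrangian derivation and the rescaling $\gamma\mapsto\gamma/N$ are correct and match $\mc D(\Pnom)$.

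There is one small gap in your treatment of the case $d(x_i)=+\infty$. You assert that the primal value is finite and attained, but your exhibited feasible point $p_i=\mc C(x_i)$ can have infinite objective when $\mc C(x_i)=1$ and $d(x_i)=+\infty$ for some $i$. After you ``absorb'' $p_i=0$ into the constraints for those indices, the reduced LP need not remain feasible, so $\mc P(\Pnom)$ may equal $+\infty$; your sentence ``the primal value is finite and attained'' is therefore not justified in general. The paper handles this explicitly as a second case: it forms the reduced LP with $p_i=0$ for $d(x_i)=+\infty$, observes that its dual is always feasible, and notes that if the reduced dual is unbounded then the reduced primal is infeasible, so both sides equal $+\infty$; finally it checks that $(d(x_i)+[1-2\mc C(x_i)]\gamma^\top\phi_i)^-=0$ whenever $d(x_i)=+\infty$, so the reduced dual coincides with $\mc D(\Pnom)$. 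Add this infeasible/unbounded branch and your argument is complete.
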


\subsection{Asymptotic Behavior of the Projection Distance}
\label{sec:null-2}

The goal of this subsection is to study the limiting behavior of the projection distance $\mc P(\Pnom)$
as the sample size $N$ increases. Proposition~\ref{prop:duality} implies that it is sufficient to examine the asymptotic behavior of $\mc D(\Pnom)$. To present the
regularity assumptions under which the limiting behavior can be
unravelled, we set $\mu = \EE_{\PP}[U]$ and define $\Phi$ as
\begin{align*}
  \Phi: \mc X \rightarrow \mathbb{R},\quad \Phi(X) = (2\mc C(X) - 1)d(X).
\end{align*}

% The expression for $\mc D(\Pnom)$ is amenable for investigating the
% limiting behavior of the projection distance $\mc P(\Pnom).$

\begin{assumption}[Continuous density and derivatives]
  There exists $\eta > 0$ such that the below
    conditions are satisfied:
    \begin{enumerate}[label=\emph{\alph*})]
    \item The probability distribution of $\Phi(X)$ has a positive
      continuous density $f(\cdot)$ in the interval $(-v, v),$
      i.e., $\PP(\Phi(X) \in [-v,u)) = \int_{-v}^u f(\nu)\mathrm{d}\nu$ for
      $u \in (-v, v).$ \label{assump:1.1}
    \item For every $u \in \text{supp}(U),$ the function
      $\phi(u,z)$ has a continuous derivative (Jacobian matrix) $\phi_z(u,z)$ in the
      neighborhood $z$ satisfying $\Vert z - \mu \Vert_2 < v.$ In
      addition,
      $\Sigma_1 \Let \mathbb{E}_{\PP}[ \phi ( U,\mu) \phi ( U,\mu) ^{\top
      } \vert\, d(X) = 0 ] \succ 0.$ \label{assump:1.2}
    \end{enumerate}
  \label{assumption}

\end{assumption}
% The following additional assumption is sufficient if the support of $U$ has finite cardinality, as is the case with the fairness notions considered in Examples \ref{eg:EOpp}~-~\ref{eg:Eqoppmultiple}.
\begin{assumption}[Continuous conditional probability]
  For the case where $\text{supp}(U)$ is a finite set, the
    conditional probability $\PP(U=u\,\vert\,\Phi(X) = t)$ is continuous
    around $t = 0$ for every $u \in \text{supp}(U).$
    \label{assump:finite}
\end{assumption}

We are now ready to state the main result of this section concerning the asymptotic behavior of the projection distance.

\begin{theorem}[Limit theorem for $\mc D (\Pnom)$]
  \label{thm:clt}
  Suppose that
  \sloppy{$\left\{ X_{1},U_{1}\right\} ,...,\{X_{n},U_{n}\}$} are
  independently obtained from the distribution $\PP$ and that
  Assumptions \textnormal{\ref{assumption}} and
  \textnormal{\ref{assump:finite}} are satisfied.  Then under the
  null hypothesis $\mc H_0$,
\begin{equation*}
N\times\mathcal{D}( \Pnom) \Rightarrow \max_{\gamma \in \mathbb{R}%
^{m}}\left\{ \gamma^{\top } V-\frac{1}{2}\gamma ^{\top }S\gamma \right\} =%
\frac{1}{2}V^\top S^{-1}V,
\end{equation*}
where $S = f(0)\Sigma_1$, $V \sim \mc N (0,\Sigma)$,  $\Sigma$ is
the covariance matrix of
$\phi (U,\mu ) \mc C(X) + \mathbb{E}_{\mathbb{P}}\left[ \phi
  _{z}(U,\mu )\mc C(X) \right] U$, and $\Rightarrow$ denotes the  convergence in distribution.
\end{theorem}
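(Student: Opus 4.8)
The plan is to pass to the dual. By Proposition~\ref{prop:duality} it suffices to determine the limit of $N\,\mc D(\Pnom)$, and the key structural observation is that the bracketed objective in $\mc D(\Pnom)$ is, for each $i$, the sum of an affine function of $\gamma$ and a pointwise minimum of two affine functions (since $(a)^-=\min\{a,0\}$), hence \emph{concave} in $\gamma$. I would localize by setting $\gamma=\beta/\sqrt N$ and, writing $\varphi_i:=\phi(u_i,\EE_{\Pnom}[U])$ and recalling that $d(x_i)=|\Phi(x_i)|$ with $\mc C(x_i)=1\iff\Phi(x_i)\ge0$, a routine case split on $\mc C(x_i)\in\{0,1\}$ rewrites $N\,\mc D(\Pnom)=\sup_{\beta\in\R^m}H_N(\beta)$ with the \emph{concave} random function
\[
  H_N(\beta)=\frac{1}{\sqrt N}\sum_{i:\,\mc C(x_i)=1}\beta^\top\varphi_i \;-\sum_{i:\,\mc C(x_i)=1}\Big(\tfrac{\beta^\top\varphi_i}{\sqrt N}-d(x_i)\Big)^{+}\;-\sum_{i:\,\mc C(x_i)=0}\Big(\tfrac{-\beta^\top\varphi_i}{\sqrt N}-d(x_i)\Big)^{+}.
\]

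The linear part equals $\sqrt N\,\beta^\top\big(\tfrac1N\sum_i\varphi_i\mc C(x_i)\big)$. Taylor-expanding $\phi(u_i,\EE_{\Pnom}[U])$ about $\mu=\EE_\PP[U]$ (legitimate by the continuity of the Jacobian $\phi_z$ in Assumption~\ref{assumption}), using $\EE_{\Pnom}[U]-\mu=O_p(N^{-1/2})$, and invoking $\mc H_0$ (so that $\EE_\PP[\mc C(X)\phi(U,\mu)]=0$), the central limit theorem gives $\tfrac1{\sqrt N}\sum_{i:\mc C(x_i)=1}\beta^\top\varphi_i\Rightarrow\beta^\top V$ with $V\sim\mc N(0,\Sigma)$ for $\Sigma$ the stated covariance of $\phi(U,\mu)\mc C(X)+\EE_\PP[\phi_z(U,\mu)\mc C(X)]U$; jointly over finitely many $\beta$ this converges to the Gaussian linear functional $\beta\mapsto\beta^\top V$.

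The crux is to show the two rectified sums converge in probability to the deterministic quadratic $\tfrac12\beta^\top S\beta$, $S=f(0)\Sigma_1$. Consider the $\mc C=1$ sum: a summand is nonzero only on the boundary layer $0\le\Phi(x_i)<\beta^\top\varphi_i/\sqrt N$ of width $O(N^{-1/2})$, which contains $O_p(\sqrt N)$ indices each contributing $O_p(N^{-1/2})$, so the sum is $O_p(1)$. First I would replace $\varphi_i$ by $\phi(u_i,\mu)$ inside the rectifier (the aggregate error is $O_p(N^{-1/2})$, since $\EE_{\Pnom}[U]-\mu=O_p(N^{-1/2})$ and the layer carries $O_p(\sqrt N)$ terms), reducing to an i.i.d.\ sum. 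Conditioning on $\Phi(X)=t$, substituting $t=s/\sqrt N$, and passing to the limit by dominated convergence — using the continuity of the density $f$ at $0$ (Assumption~\ref{assumption}) and the continuity of $\PP(U=u\mid\Phi(X)=t)$ at $t=0$ (Assumption~\ref{assump:finite}) — the mean converges to $f(0)\int_0^\infty\EE[(\beta^\top\phi(U,\mu)-s)^+\mid\Phi(X)=0]\,\mathrm{d}s=\tfrac12 f(0)\,\EE[((\beta^\top\phi(U,\mu))^+)^2\mid d(X)=0]$; the symmetric computation for the $\mc C=0$ sum gives $\tfrac12 f(0)\,\EE[((\beta^\top\phi(U,\mu))^-)^2\mid d(X)=0]$, and summing with $(a^+)^2+(a^-)^2=a^2$ yields $\tfrac12 f(0)\beta^\top\Sigma_1\beta=\tfrac12\beta^\top S\beta$. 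Concentration is immediate once i.i.d.: each summand is $O(N^{-1/2})$ on an event of probability $O(N^{-1/2})$, so the variance is $O(N^{-1/2})\to0$. By Slutsky's theorem, $H_N(\beta)\Rightarrow H(\beta):=\beta^\top V-\tfrac12\beta^\top S\beta$, jointly over finite collections of $\beta$.

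Finally, $H_N$ is concave, converges finite-dimensionally to the almost surely strictly concave quadratic $H$ (strict because $S=f(0)\Sigma_1\succ0$, using $f(0)>0$ and $\Sigma_1\succ0$ from Assumption~\ref{assumption}), whose unique maximizer is $S^{-1}V$; the standard convexity/argmax lemma for random concave functions — which also furnishes the tightness needed to treat the $\sup$ over the unbounded set $\R^m$ — then yields $\sup_\beta H_N(\beta)\Rightarrow\sup_\beta H(\beta)=\tfrac12 V^\top S^{-1}V$, proving the theorem. I expect the boundary-layer limit of the previous paragraph to be the main obstacle: the interaction between the classifier discontinuity encoded in $d(\cdot)$ and the density of $\Phi(X)$ near the decision boundary is precisely what the smooth-relaxation approaches avoid, and the rigorous steps — uniform integrability of $(\beta^\top\phi(U,\mu)-s)^+$ across the shrinking conditioning events $\{\Phi(X)=s/\sqrt N\}$, negligibility of replacing $\EE_{\Pnom}[U]$ by $\mu$ inside the rectifiers, and the change-of-variables limit itself — are where the technical effort concentrates.
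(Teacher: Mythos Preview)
Your proposal is correct and, for the two main ingredients---the CLT for the linear term $\beta^\top V_N$ and the boundary-layer limit $\sum_i(\cdot)^+\to \tfrac12\beta^\top S\beta$---it tracks the paper's argument closely: the paper packages these as Lemma~\ref{lma:aux_clt} (Taylor expand $\phi(U,\EE_{\Pnom}[U])$ around $\mu$ and apply the CLT under $\mc H_0$) and Lemma~\ref{lma:lln} (replace $\EE_{\Pnom}[U]$ by $\mu$ inside the rectifier, change variables $t=s/\sqrt N$, then a triangular-array weak law), which is exactly your Steps 1--2 of the crux paragraph.

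The genuine difference is how the supremum over the \emph{unbounded} set $\R^m$ is handled. You exploit concavity of $H_N$ and invoke a convexity/argmax lemma: pointwise (finite-dimensional) convergence of concave random functions upgrades to locally uniform convergence, and strict concavity of the limit $H(\beta)=\beta^\top V-\tfrac12\beta^\top S\beta$ (via $S\succ0$) pins down the maximizer and hence the maximum. The paper instead proves an explicit compactification result (Lemma~\ref{lma:compactifaction}) showing that for any $\varepsilon>0$ there is a radius $b$ such that the unconstrained supremum coincides with the supremum over $\|\gamma\|_2\le b$ except on an event of probability $\le\varepsilon$, and separately establishes \emph{uniform} convergence of $M_N(\gamma)$ on balls (Step~4 of Lemma~\ref{lma:lln}) via a stochastic Lipschitz bound. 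Your route is shorter and more structural, and it also makes the uniform-convergence step in Lemma~\ref{lma:lln} unnecessary; the paper's route is more self-contained and gives a quantitative tightness bound that does not rely on an external argmax theorem. Either way the technical heart---the shrinking-conditioning limit near $\Phi(X)=0$---is the same, and you correctly flag the uniform integrability/replacement-of-$\mu$ issues there as the places where the care goes.
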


The finite cardinality of the outcome space of $U$ in~Assumption~\ref{assump:finite} is not restrictive: Theorem~\ref{thm:clt} still holds under \textit{in}finite cardinality under an equivalent assumption. Details can be found in \ref{Appenix:proof}. For the fairness notions in Examples \ref{eg:EOpp}~-~\ref{eg:SPar}, we report in Corollary \ref{cor:fairness-lim} below the specific closed-form limit distributions obtained from Theorem \ref{thm:clt}.

\begin{corollary}
\label{cor:fairness-lim}
Suppose that $\phi(U, \mu )=\frac{U_1}{\mu_1} -
  \frac{U_2}{\mu_2}$ with $U_1,U_2$ satisfying $U_1U_2 = 0$ (with probability 1), as in Examples \ref{eg:EOpp}, \ref{def:PEqu}, and \ref{eg:SPar}. Then we have the limiting distribution,
  \begin{align*}
 &  V^\top S^{-1}V/2 \\
 =&\frac{\sigma^2\chi^2(1)}{2f(0)\left({\mu_2^2\EE_\PP[U_1|d(X)=0]}+\mu_1^2{\EE_\PP[U_2|d(X)=0]}\right)} ,
  \end{align*}
  where $\chi^2(1)$ is a chi-squared distribution with one degree of freedom and
  \begin{align*}
  \sigma^2 =  &\mathrm{var}\left\{ \mc C(X) \left(\mu_2U_1-\mu_1U_2 \right) \right.\\
  + & \left. U_2 \EE_\PP\left[U_1 \mc C(X)\right]  -  U_1 \EE_\PP\left[U_2 \mc C(X)\right] \right\}.
  \end{align*}
  For Example \ref{eg:EOdd}, we have
  \begin{align*}
  &V^\top S^{-1}V/2  \\
  = & \frac{\sigma_1^2 \chi_1(1)^2 }{2f(0)\left({\mu_2^2\EE_\PP[U_1|d(X)=0]}+\mu_1^2{\EE_\PP[U_2|d(X)=0]}\right)}\\
   +  &\frac{\sigma_2^2 \chi_2(1)^2}{2f(0) \left({\mu_4^2\EE_\PP[U_3|d(X)=0]}+\mu_3^2{\EE_\PP[U_4|d(X)=0]}\right)}.
  \end{align*}
  where $\chi_1^2(1)$ and $\chi_2^2(1)$  are two independent chi-squared distributions with one degree of freedom and
  \begin{align*}
  \sigma_1^2 =  &\mathrm{var}\left\{ \mc C(X) \left(\mu_2U_1-\mu_1U_2 \right) \right.\\
  + & \left. U_2 \EE_\PP\left[U_1 \mc C(X)\right]  -  U_1 \EE_\PP\left[U_2 \mc C(X)\right] \right\}, \\
  \sigma_2^2 =  &\mathrm{var}\left\{ \mc C(X) \left(\mu_4U_3-\mu_3U_4 \right) \right.\\
  + & \left. U_4 \EE_\PP\left[U_3 \mc C(X)\right]  -  U_3 \EE_\PP\left[U_4 \mc C(X)\right] \right\}.
  \end{align*}
\end{corollary}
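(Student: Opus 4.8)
The plan is to specialize Theorem~\ref{thm:clt} to each of the listed fairness notions: once $\Sigma_1$ (hence $S=f(0)\Sigma_1$) and $\Sigma$ are evaluated in closed form, the quadratic maximization $\tfrac12 V^\top S^{-1}V$ collapses to a scaled chi-square (when $m=1$) or to a sum of two independent scaled chi-squares (when $m=2$ with block-diagonal $S,\Sigma$). Two elementary facts drive every simplification: (i) the entries of $U$ are indicators of pairwise disjoint events, so $U_i^2=U_i$ and $U_iU_j=0$ for $i\neq j$; and (ii) under $\mc H_0$ the defining constraint $\EE_\PP[\mc C(X)\phi(U,\mu)]=0$ forces each pair of groups appearing in a fairness equation to share a conditional acceptance rate, i.e.\ there is a $\rho$ with $\EE_\PP[U_j\mc C(X)]=\rho\,\EE_\PP[U_j]=\rho\mu_j$ for the coordinates $j$ that appear together in that equation.

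For Examples~\ref{eg:EOpp}, \ref{def:PEqu}, and \ref{eg:SPar} we have $m=1$ and $\phi(U,z)=U_1/z_1-U_2/z_2$, so $\phi_z(U,z)=(-U_1/z_1^2,\;U_2/z_2^2)$. By (i), $\phi(U,\mu)^2=U_1/\mu_1^2+U_2/\mu_2^2$, hence
\[
  \Sigma_1=\frac{\EE_\PP[U_1\mid d(X)=0]}{\mu_1^2}+\frac{\EE_\PP[U_2\mid d(X)=0]}{\mu_2^2},\qquad S=f(0)\Sigma_1,
\]
and $\Sigma_1>0$ is exactly the nondegeneracy required in Assumption~\ref{assumption}. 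Next, $\EE_\PP[\phi_z(U,\mu)\mc C(X)]=(-\EE_\PP[U_1\mc C(X)]/\mu_1^2,\;\EE_\PP[U_2\mc C(X)]/\mu_2^2)$, so the random variable whose variance is $\Sigma$ is
\[
  W=\mc C(X)\Big(\tfrac{U_1}{\mu_1}-\tfrac{U_2}{\mu_2}\Big)-\tfrac{\EE_\PP[U_1\mc C(X)]}{\mu_1^2}\,U_1+\tfrac{\EE_\PP[U_2\mc C(X)]}{\mu_2^2}\,U_2 .
\]
Multiplying by $\mu_1\mu_2$ and invoking (ii) gives $\mu_1\mu_2 W=(\mc C(X)-\rho)(\mu_2U_1-\mu_1U_2)$; the same substitution applied to the expression inside $\sigma^2$ produces the identical random variable, so $\mu_1^2\mu_2^2\,\Sigma=\sigma^2$. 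Since $m=1$, $\tfrac12 V^\top S^{-1}V=V^2/(2S)$ with $V\sim\mc N(0,\Sigma)$ and $V^2\overset{d}{=}\Sigma\,\chi^2(1)$; substituting $\Sigma/S=\sigma^2/\big(f(0)(\mu_2^2\EE_\PP[U_1\mid d(X)=0]+\mu_1^2\EE_\PP[U_2\mid d(X)=0])\big)$ gives the first claimed formula.

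For Example~\ref{eg:EOdd} we have $m=2$; write $\phi=(\phi^{(1)},\phi^{(2)})$ with $\phi^{(1)}$ depending only on $(U_1,U_2)$ and $\phi^{(2)}$ only on $(U_3,U_4)$, and correspondingly $W=(W^{(1)},W^{(2)})$ with $W^{(\ell)}$ of the form displayed above. Because the four events $\{A=a,Y=y\}$ are pairwise disjoint, every cross product $U_iU_j$ with $i\in\{1,2\}$, $j\in\{3,4\}$ vanishes by (i); hence the off-diagonal entry of $\Sigma_1$ (an expectation of $\phi^{(1)}(U,\mu)\phi^{(2)}(U,\mu)$) is zero, and $W^{(1)}W^{(2)}=0$ pointwise while $\EE_\PP[W^{(1)}]=\EE_\PP[W^{(2)}]=0$ under $\mc H_0$ by (ii), so the off-diagonal of $\Sigma$ vanishes as well. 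Thus $S$ and $\Sigma$ are diagonal, and their diagonal entries are precisely the one-dimensional quantities computed above, applied first with $(U_1,U_2,\mu_1,\mu_2)$ and then with $(U_3,U_4,\mu_3,\mu_4)$. Consequently $V=(V_1,V_2)$ has independent coordinates and $\tfrac12 V^\top S^{-1}V=V_1^2/(2S_{11})+V_2^2/(2S_{22})$ with $V_\ell^2\overset{d}{=}\Sigma_{\ell\ell}\chi_\ell^2(1)$ and $\chi_1^2(1)\perp\chi_2^2(1)$, which is the displayed sum.

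There is no essential obstacle beyond careful bookkeeping; the points that need attention are the two structural reductions — using the indicator identities (i) to diagonalize $\Sigma_1$ and $\Sigma$, and using the null-hypothesis identity (ii) both to recognize $\mu_1\mu_2 W=(\mc C(X)-\rho)(\mu_2U_1-\mu_1U_2)$ and to kill the cross terms in the equalized-odds case — together with the final translation of $\tfrac12 V^\top S^{-1}V$ into a (sum of) scaled chi-square(s) when $S,\Sigma$ are (block) diagonal. A variant that avoids the $\mu_1\mu_2$ rescaling is to apply Theorem~\ref{thm:clt} with the equivalent denominator-cleared discerning function $\widetilde\phi(U,z)=z_2U_1-z_1U_2$: it is smooth and defines the same fair set $\mc F$, so by Proposition~\ref{prop:duality} and the fact that $\mc P(\Pnom)$ depends only on $\mc F$ it yields the same limit law, and now $\Sigma=\sigma^2$ and $\Sigma_1=\mu_2^2\EE_\PP[U_1\mid d(X)=0]+\mu_1^2\EE_\PP[U_2\mid d(X)=0]$ directly.
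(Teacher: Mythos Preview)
Your proposal is correct and is exactly the specialization of Theorem~\ref{thm:clt} that the paper has in mind; the paper does not spell out a separate proof of Corollary~\ref{cor:fairness-lim}, so your detailed computation of $\Sigma_1$ and $\Sigma$ using the indicator identities $U_i^2=U_i$, $U_iU_j=0$ and the null constraint $\EE_\PP[U_1\mc C(X)]/\mu_1=\EE_\PP[U_2\mc C(X)]/\mu_2$ is precisely the intended argument. The alternative you sketch at the end, replacing $\phi$ by the denominator-cleared $\widetilde\phi(U,z)=z_2U_1-z_1U_2$, is a clean shortcut not mentioned in the paper: since $\mc P(\Pnom)$ depends only on the fair set $\mc F$ and $\widetilde\phi$ defines the same $\mc F$, Theorem~\ref{thm:clt} applied to $\widetilde\phi$ yields $\widetilde\Sigma=\sigma^2$ and $\widetilde\Sigma_1=\mu_2^2\EE_\PP[U_1\mid d(X)=0]+\mu_1^2\EE_\PP[U_2\mid d(X)=0]$ directly, bypassing the $\mu_1\mu_2$-rescaling and the explicit appeal to $\mc H_0$ in matching $\mu_1\mu_2 W$ to the variable inside $\sigma^2$.
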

%The proof of Theorem~\ref{thm:clt} is in \ref{prf:sec_simple}.
 Assumption~\ref{assumption}\ref{assump:1.1} is satisfied for a broad
class of classification models interesting in practice. Lemma
\ref{lma:sufficient_assump_1.1} below identifies that
Assumption~\ref{assumption}\ref{assump:1.1} is satisfied even if
there are some discrete features.

\begin{lemma}
  Suppose that $X=(X_D,X_C)$, where $X_D$ takes values in a finite
  subset $\mathbb{D} \subset \mathbb{R}^{d_1}, d = d_1 + d_2$ and
  $X_C$ is an $\mathbb{R}^{d_2}$-valued random vector whose
  conditional distribution $X_C\,|\,X_D=x$ has a positive density in
  $\mathbb{R}^{d_2}$ for every $x \in \mathbb{D}$.
%We further assume that there exists $0<\underline(M)<\overline{M}<+\infty$ the cost function satisfies
%\begin{align*}
%c(x,x') &\geq \underline{M} \|x-x\| \text{ and } \\
%c(x,x') &\leq \overline{M} \|x_C-x_C'\| \text{ when } x_D=x_D'.
% \end{align*}
  Let the cost function $\bar{c}(\cdot)$ be given by (\ref{eq:norm-c})
  or (\ref{discrete-c}).  Further, if the classifier is written as
  $\mc C(X) = \mathbb{I}\{\ell(\theta^\top X) \geq \tau\}$ where
  $\ell(\cdot)$ is a continuous and increasing function with
  $\lim_{x \rightarrow +\infty}\ell(x)>\tau>\lim_{x \rightarrow
    -\infty} \ell(x)$ and $%
  \theta = (\theta_D,\theta_C)$ with $\theta_C\neq0,$ we have that
  Assumption~\ref{assumption}.\ref{assump:1.1} is satisfied.
\label{lma:sufficient_assump_1.1}
\end{lemma}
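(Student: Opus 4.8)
The plan is to exploit the linear structure of the classifier to show that, near the origin, $\Phi(X)$ is an \emph{affine image} of the one-dimensional statistic $\theta^\top X$, and then to transfer positivity and continuity of the density from $\theta^\top X$ (near its threshold) to $\Phi(X)$ (near $0$). \emph{Step 1 (closed form of $d$, which also supplies the linear-classifier computation referenced before the lemma).} Since $\ell$ is continuous, increasing, with $\lim_{+\infty}\ell>\tau>\lim_{-\infty}\ell$, the intermediate value theorem produces a threshold $\tau^\star$ such that $\{x:\mc C(x)=1\}=\{x:\theta^\top x\ge\tau^\star\}$, so the decision boundary is the hyperplane $\{\theta^\top x=\tau^\star\}$. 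For the cost \eqref{eq:norm-c} the opposite-label region is a half-space and $d(x)=|\theta^\top x-\tau^\star|/\|\theta\|_*$. For the discrete cost \eqref{discrete-c}, to flip the label one may either keep $x_D$ fixed and move $x_C$ along a dual-norm-achieving direction (feasible because $\theta_C\neq 0$, at cost $|\theta^\top x-\tau^\star|/\|\theta_C\|_*$), or change $x_D$ within $\mathbb{D}$, which already costs at least $\delta$; comparing the options gives $d(x)\ge\min\{|\theta^\top x-\tau^\star|/\|\theta_C\|_*,\ \delta\}$ always, with equality to $|\theta^\top x-\tau^\star|/\|\theta_C\|_*$ whenever $|\theta^\top x-\tau^\star|<\delta\|\theta_C\|_*$. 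In both cases there are constants $\kappa,\rho>0$ with $d(x)=|\theta^\top x-\tau^\star|/\kappa$ on the strip $\{|\theta^\top x-\tau^\star|<\rho\}$.

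\emph{Step 2 (reduction to a univariate density).} On that strip, $\Phi(x)=(2\mc C(x)-1)d(x)=\mathrm{sign}(\theta^\top x-\tau^\star)\,|\theta^\top x-\tau^\star|/\kappa=(\theta^\top x-\tau^\star)/\kappa$. Using the lower bound on $d$ from Step 1, for $v>0$ small enough one gets the identity of events $\{|\Phi(X)|<v\}=\{|\theta^\top X-\tau^\star|<\kappa v\}$, on which $\Phi(X)$ and $(\theta^\top X-\tau^\star)/\kappa$ coincide. Hence for $|u|<v$, $\PP(\Phi(X)\in[-v,u))=\PP(\theta^\top X\in[\tau^\star-\kappa v,\ \tau^\star+\kappa u))$, and after the substitution $s=\tau^\star+\kappa\nu$ it suffices to show that $\theta^\top X$ has a positive continuous density $f_{\theta^\top X}$ on a neighbourhood of $\tau^\star$; then $f(\nu)=\kappa f_{\theta^\top X}(\tau^\star+\kappa\nu)$ witnesses Assumption~\ref{assumption}\ref{assump:1.1}.

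\emph{Step 3 (density of $\theta^\top X$ near $\tau^\star$).} Write $\theta^\top X=\theta_C^\top X_C+\theta_D^\top X_D$ and condition on the finitely many values $x_D\in\mathbb{D}$: the law of $\theta^\top X$ is a finite mixture $\sum_{x_D\in\mathbb{D}}\PP(X_D=x_D)\,\mathrm{Law}(\theta_C^\top X_C\mid X_D=x_D)$, each component shifted by $\theta_D^\top x_D$. For each $x_D$, after an invertible linear change of coordinates carrying $\theta_C\neq 0$ to a coordinate axis, $\theta_C^\top X_C\mid X_D=x_D$ has the corresponding marginal of the everywhere-positive conditional density of $X_C$ on $\R^{d_2}$, hence a strictly positive density on $\R$; a finite positive combination of shifts of such densities is strictly positive near $\tau^\star$, giving the positivity requirement.

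\emph{Main obstacle.} The delicate point is the \emph{continuity} of the projected marginal in Step 3: projecting an everywhere-positive Lebesgue density onto a line need not in general yield a continuous density. I would handle this by invoking continuity of the conditional density $x_C\mapsto g_{x_D}(x_C)$ together with a local integrability/domination bound, so that the integral defining the marginal (in the rotated coordinates, $t\mapsto\int g_{x_D}(t,w)\,\mathrm{d}w$) is continuous by dominated convergence — and I would make this mild regularity explicit, or note that it is the intended reading of ``positive density.'' The remaining work — the case analysis for $d(x)$ under \eqref{discrete-c} and checking which events coincide for small $v$ — is routine once Step 1 is in place.
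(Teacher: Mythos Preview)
Your proposal is correct and follows essentially the same route as the paper: compute $d(x)$ in closed form via the dual-norm distance to the hyperplane $\{\theta^\top x = \ell^{-1}(\tau)\}$ (with the case split for the discrete cost \eqref{discrete-c}), recognise $\Phi(X)$ as an affine function of $\theta^\top X$ near the boundary, and obtain the density of $\theta^\top X$ as a finite mixture of one-dimensional projections conditioned on $X_D$. Your flagged ``main obstacle'' about continuity of the projected marginal is a fair point---the paper's proof simply asserts that $\theta_C^\top X_C\mid X_D=x$ has a positive density without addressing continuity, so your proposed fix (an implicit continuity/domination assumption on the conditional densities) is exactly the mild regularity the argument needs and is arguably a more honest treatment than the original.
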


With $\mc P(\Pnom) = \mc D(\Pnom)$ as in Proposition
\ref{prop:duality}, Theorem \ref{thm:clt} reveals that one can use
$s_{_N} \Let N \times \mc P(\Pnom)$ as a test statistic to reject
$\mc H_0.$ In particular, for a prespecified significance level
$\alpha \in (0,1), $ let $\eta_{1-\alpha}$ denote the
$(1-\alpha)\times 100\%$ quantile of the generalized chi-squared %the word chi should not be capitalized. In the Greek alphabet, the capital letter for $\chi$ is X Thanks!
distribution given by the law of $V^\top S^{-1} V/2.$ Specific computation and estimation procedures required to compute the
statistic $s_{_N}$ and the quantile $\eta_{1-\alpha}$ are discussed in
Section~\ref{sec:computation-estimation}.

\subsection{The Structure of the Wasserstein Projection}
\label{sec:null-3}
In this subsection, we characterize the projection measure $\QQ$ and provide a heuristic justification for the convergence rate of Theorem \ref{thm:clt}.
Note that the proof of Proposition \ref{prop:primal} also leads to an
$\eps$-optimizer sequence for (\ref{problem:primal}).
\begin{proposition}[$\eps$-optimizer]
\label{prop:eps_opt}
Suppose $d(x_i)<+\infty$ for every $i \in [N]$. Let ${x}^\varepsilon_i$
be an $\varepsilon$-optimizer obtained by solving (\ref{eq:defn-d}) with $x = x_i,
$ for $i \in [N].$ Let $\{p_i\opt%
\}_{i=1}^N$ be an optimal solution of the problem %
\eqref{problem:reformulate}. Then, the measure
\begin{equation*}
\mathbb{Q}^\eps \Let\frac{1}{N}\sum_{i=1}^{N}(1-p_i\opt)\delta _{\left(
x_{i},a_{i},y_{i}\right) } + p_i\opt\delta _{\left( {x}%
^\varepsilon_i,a_{i},y_{i}\right) }
\end{equation*}
is an $\varepsilon$-optimizer of the problem \eqref{problem:primal}.
\end{proposition}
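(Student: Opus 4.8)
The plan is to verify directly the two defining properties of an $\eps$-optimizer of~\eqref{problem:primal} for the explicitly constructed measure $\mathbb{Q}^\eps$: that it is feasible, i.e. $\mathbb{Q}^\eps\in\mc F$, and that $\Wass_c(\mathbb{Q}^\eps,\Pnom)\le\mc P(\Pnom)+\eps$. The key point, already implicit in the proof of Proposition~\ref{prop:primal}, is that $\mathbb{Q}^\eps$ has exactly the ``split-and-transport'' form under which the linear equality constraint of~\eqref{problem:reformulate} is equivalent to membership in $\mc F$, and that an explicit transport plan between $\Pnom$ and $\mathbb{Q}^\eps$ attains cost $\tfrac1N\sum_i p_i\opt\,\bar c(x_i,x_i^\eps)$, which is within $\eps$ of the linear-programming optimal value.

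First I would check feasibility. Because $U$ is a deterministic function of $(A,Y)$ and $\mathbb{Q}^\eps$ leaves every pair $(a_i,y_i)$ unchanged, $\EE_{\mathbb{Q}^\eps}[U]=\tfrac1N\sum_{i\in[N]}u_i=\EE_{\Pnom}[U]$; write $\bar u$ for this common value. Hence
\begin{align*}
\EE_{\mathbb{Q}^\eps}\!\left[\mc C(X)\,\phi\big(U,\EE_{\mathbb{Q}^\eps}[U]\big)\right]
=\frac1N\sum_{i\in[N]}\big[(1-p_i\opt)\mc C(x_i)+p_i\opt\,\mc C(x_i^\eps)\big]\,\phi(u_i,\bar u),
\end{align*}
and since $\mc C(x_i^\eps)=1-\mc C(x_i)$ by the choice of $x_i^\eps$, the bracketed coefficient equals $\mc C(x_i)+p_i\opt\,[1-2\mc C(x_i)]$, so the whole expression vanishes exactly because $p\opt$ satisfies the equality constraint of~\eqref{problem:reformulate}. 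Thus $\mathbb{Q}^\eps\in\mc F$, which in particular yields $\Wass_c(\mathbb{Q}^\eps,\Pnom)\ge\mc P(\Pnom)$. The hypothesis $d(x_i)<+\infty$ for all $i$ is used here only to ensure that a point $x_i^\eps$ with the opposite classifier label exists.

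Next I would bound $\Wass_c(\mathbb{Q}^\eps,\Pnom)$ from above by displaying a feasible coupling: for each $i$, keep mass $(1-p_i\opt)/N$ at $(x_i,a_i,y_i)$ and move mass $p_i\opt/N$ from $(x_i,a_i,y_i)$ to $(x_i^\eps,a_i,y_i)$. Its first marginal is $\Pnom$ and its second is $\mathbb{Q}^\eps$ by construction, and since every transported unit preserves the $\mc A$- and $\mc Y$-coordinates, only $\bar c$ contributes (using the convention $\infty\cdot 0=0$), so $\Wass_c(\mathbb{Q}^\eps,\Pnom)\le\tfrac1N\sum_{i\in[N]}p_i\opt\,\bar c(x_i,x_i^\eps)$. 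The $\eps$-optimizer property of $x_i^\eps$ in~\eqref{eq:defn-d} gives $\bar c(x_i,x_i^\eps)\le d(x_i)+\eps$, and since $p_i\opt\in[0,1]$ and $p\opt$ is optimal for~\eqref{problem:reformulate} with value $\mc P(\Pnom)$ (Proposition~\ref{prop:primal}),
\begin{align*}
\Wass_c(\mathbb{Q}^\eps,\Pnom)\le\frac1N\sum_{i\in[N]}p_i\opt\, d(x_i)+\frac{\eps}{N}\sum_{i\in[N]}p_i\opt\le\mc P(\Pnom)+\eps.
\end{align*}
Combined with the lower bound from the feasibility step, this shows $\mathbb{Q}^\eps$ is an $\eps$-optimizer of~\eqref{problem:primal}.

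The argument is short precisely because the substantive work — the matching lower bound $\Wass_c(\QQ,\Pnom)\ge\mc P(\Pnom)$ for every $\QQ\in\mc F$ — is already contained in Proposition~\ref{prop:primal}. The only points needing mild care, and hence the closest thing to an obstacle, are: (i) possible ties among the samples $(x_i,a_i,y_i)$, which do not affect the coupling (one simply aggregates coinciding atoms, leaving $\EE_\pi[c]$ unchanged); and (ii) the bookkeeping of the additive error, where the estimate $\tfrac1N\sum_i p_i\opt\le 1$ is what converts a per-atom tolerance $\eps$ in~\eqref{eq:defn-d} into an overall tolerance $\eps$ — if one prefers, the per-atom tolerance can be chosen smaller from the outset. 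I do not anticipate a genuine difficulty beyond these.
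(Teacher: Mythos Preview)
Your proposal is correct and follows essentially the same approach as the paper, which does not give a standalone proof of this proposition but simply remarks that it is implicit in the proof of Proposition~\ref{prop:primal}; your feasibility check and explicit coupling are exactly the ``other direction'' sketched at the end of that proof. If anything, your write-up is more careful than the paper's (you track the additive error via $\tfrac1N\sum_i p_i\opt\le 1$ and handle ties explicitly), so there is nothing to correct.
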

Proposition \ref{prop:eps_opt} indicates in the optimal transportation plan,
the transporter moves mass from $x_i$ to $x_i^\eps$ when $p_i\opt\neq0$.
 Since the optimal solution of a linear programming problem occurs at corner points,
 most of $p_i\opt$ should be either zero or one.  Further, the proof of Theorem \ref{thm:clt} shows that the number of non-zero values in $\{p_i\opt%
\}_{i=1}^N$ is of the order $O_p(N^{1/2})$ and for each non-zero $p_i\opt$,
the moving distance $d(x_i)$ is of the order $O_p(N^{-1/2})$ under the null hypothesis.
Therefore, $\mc D(\Pnom)$ is of the order $O_p(N^{-1})$.
This statistical phenomenon is due to the discontinuity
of the estimating function $\mc C(X) \phi(U, \EE_\QQ[U]$) in $X$,
where the transporter is able to move a small amount of probability mass,
 but the move results in a significant change of the value for the estimating
function around the discontinuity region. The $O_p(N^{-1})$ convergence rate
is in contrast to the rate in \citet{ref:blanchet2019robust}, \citet{ref:taskesen2020statistical} and \citet{ref:cisneros2020distributionally},
where the estimating function is assumed to be continuous.
For the continuous estimating function, it is optimal to move every point $O_p(N^{-1/2})$ distances,
which results in a $O_p(N^{-1/2})$ convergence rate. Therefore, let us emphasize again the key qualitative difference between our contributions and those of \citet{ref:taskesen2020statistical}.
A statistical noise gives the empirical appearance of unfairness in two ways: (A) small statistical fluctuations around all data points; (B) a small sub-population with large outcome fluctuations around the decision boundary. \citet{ref:taskesen2020statistical} studied scenario (A) and our paper studies scenario (B).
%%%%%%%%%%%%%%%%%%%%%%%%%%%%%%%%%%%

\section{Test For Composite Null Hypothesis via Optimal Transport}
\label{sec:composite}
In settings where the notion of exact group fairness becomes restrictive or unattainable, it becomes attractive to test whether the deviation from fairness, if any, from a given group's viewpoint is not more than a prespecified small extent.
The question of verifying $\epsilon$-fairness, from an one-sided perspective, can
be similarly formulated as follows. Following Section~\ref{sec:simple}, we define
\begin{equation*}
\mc F_\epsilon = \left\{ \QQ: \EE_{\QQ}[ \mc C(X) \phi(U, \EE_\QQ[U]) ] \le
\epsilon \right\},
\end{equation*}
where $\epsilon \in \mathbb{R}^m_+$ is a tolerance level prespecified
by the fairness auditor. We are interested in the statistical test
with the hypotheses
\begin{equation}
\mathcal{H}_{0}: \PP \in \mc F_\epsilon \quad \text{against} \quad \mathcal{H%
}_{1}: \PP \not\in \mc F_\epsilon.
\label{eq:comp-H0}
\end{equation}
A suitable statistical test for this formulation will serve the
purpose of detecting failure of $\mc C(\cdot)$ in meeting the
one-sided fairness condition within an $\epsilon$ tolerance.
%\subsection{Linear programming formulation for projection onto $\mc F_\epsilon$}
Similar to Problem \ref{problem:primal}, we define the projection of $\Pnom$ onto $\mc F_\epsilon$ as
\begin{align*}
\mc P_\epsilon(\Pnom)
%& \Let \Inf{\QQ \in \mc F_\epsilon}~\Wass(\QQ, \Pnom)\\
&\Let
\left\{
\begin{array}{cl}
\inf & \Wass_c(\QQ, \Pnom) \\
\st & \EE_{\QQ}[ \mc C(X) \phi(U, \EE_\QQ[U]) ] \leq \epsilon .
\end{array}
\right.
\end{align*}

\subsection{Linear Programming Formulation for Projection}
\label{sec:composite-1}
\begin{proposition}[Primal reformulation]
  \label{prop:primal_eps} The projection distance
  $\mc P_\epsilon(\Pnom)$ is equal to the optimal value of a linear
  program. More specifically, we have
\begin{align}
&\mc P_\epsilon(\Pnom) \label{problem:reformulate_eps} \\
&= \left\{
\begin{array}{cl}
\min_p & \ds \frac{1}{N}\sum_{i \in [N]} p_i d(x_i)  \\
\st & p \in [0, 1]^N \\
& \ds\sum_{i \in [N]} (1 - 2 \mc C(x_i)) \phi(u_i, \EE_{%
\Pnom}[U]) p_i   \\
& \qquad + \ds\sum_{i\in [N]}\mc C(x_i) \phi(u_i, \EE_{\Pnom}[U])
 \leq N\epsilon.\notag
\end{array}
\right.
\end{align}
\end{proposition}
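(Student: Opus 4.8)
The plan is to mirror the proof of Proposition~\ref{prop:primal}, with the only change being in how the fairness constraint is relaxed from an equality to an inequality. First I would recall that the optimal transport cost $\Wass_c$ uses the decomposable cost which forbids moving mass across different values of $a$ or $y$ (the $\infty\cdot|a-a'|$ and $\infty\cdot|y-y'|$ terms), so any coupling $\pi$ with finite cost keeps the $A$- and $Y$-marginals coordinatewise fixed and merely redistributes the $X$-coordinate within each atom $(x_i,a_i,y_i)$. Since $\Pnom$ is supported on $N$ atoms, and since $U_i$ is a deterministic function of $(A_i,Y_i)$, the transported measure $\QQ$ can be parametrized by, for each $i$, how much of the mass $1/N$ at $x_i$ is left in place versus moved to some $x_i'$; because moving into the region $\{\mc C(\cdot)=\mc C(x_i)\}$ never changes the value of $\mc C(X)\phi(U,\cdot)$ and never helps reduce the constraint, while incurring positive cost, it is optimal to only move mass to the complementary region $\{\mc C(\cdot)=1-\mc C(x_i)\}$, and the cheapest way to do so costs exactly $d(x_i)$ per unit mass moved by \eqref{eq:defn-d}. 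Letting $p_i\in[0,1]$ denote the fraction of the $i$-th atom's mass that is moved, the transport cost becomes $\frac{1}{N}\sum_i p_i d(x_i)$.

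Next I would handle the constraint. For the resulting $\QQ^{(p)}$, we have $\EE_{\QQ^{(p)}}[U] = \EE_{\Pnom}[U]$ exactly, since $U$ depends only on $(A,Y)$ which is untouched; this is the crucial simplification that makes the reformulation linear rather than merely bilinear. Then $\EE_{\QQ^{(p)}}[\mc C(X)\phi(U,\EE_{\QQ^{(p)}}[U])] = \frac{1}{N}\sum_i \big[(1-p_i)\mc C(x_i) + p_i(1-\mc C(x_i))\big]\phi(u_i,\EE_{\Pnom}[U])$, because a moved unit at atom $i$ flips its classifier value from $\mc C(x_i)$ to $1-\mc C(x_i)$. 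Rearranging the bracket as $\mc C(x_i) + p_i(1 - 2\mc C(x_i))$ gives precisely the left-hand side appearing in \eqref{problem:reformulate_eps}, so the fairness-inequality constraint $\EE_{\QQ}[\mc C(X)\phi(U,\EE_\QQ[U])]\le\epsilon$ translates (after multiplying by $N$) into $\sum_i(1-2\mc C(x_i))\phi(u_i,\EE_{\Pnom}[U])p_i + \sum_i \mc C(x_i)\phi(u_i,\EE_{\Pnom}[U]) \le N\epsilon$. The feasibility hypothesis $d(x_i)<+\infty$ is what guarantees that for each $i$ there actually exists a point $x_i'$ with $\mc C(x_i')=1-\mc C(x_i)$ (approximately attaining $d(x_i)$), so the parametrization by $p\in[0,1]^N$ is nonvacuous.

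Finally I would argue the two-sided equality $\mc P_\epsilon(\Pnom)=$ (LP value). One direction is clear: every $p$ feasible for the LP yields, via an $\eps$-optimizer of \eqref{eq:defn-d} at each $i$ (as in Proposition~\ref{prop:eps_opt}), a measure $\QQ^\eps\in\mc F_\epsilon$ — or at least arbitrarily close, so one lets $\eps\downarrow0$ — with transport cost within $\eps$ of $\frac{1}{N}\sum_i p_i d(x_i)$, giving $\mc P_\epsilon(\Pnom)\le$ LP. The reverse inequality requires showing that \emph{any} $\QQ\in\mc F_\epsilon$ achieving finite $\Wass_c(\QQ,\Pnom)$ can be matched or beaten by some $\QQ^{(p)}$ of the above form: given an optimal coupling $\pi$, one shows that replacing, for each $i$, all the transported mass at atom $i$ by an equal amount sent to a near-minimizer of $d(x_i)$ (keeping the stay-in-place mass where it is) does not increase cost and does not violate the constraint — here the key point is that the constraint value depends on the destination $x'$ only through $\mc C(x')$, which is the same ($=1-\mc C(x_i)$) for the cheapest relocation as for any other relocation out of the region, and $\EE_\QQ[U]$ is unchanged throughout. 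I expect this reduction — carefully verifying that this "projection onto product-of-atoms form" is cost-nonincreasing and constraint-preserving, including the measurability/approximation bookkeeping when $d(x_i)$ is an infimum not attained — to be the main technical obstacle, though it is essentially identical to the argument already carried out for Proposition~\ref{prop:primal}, so in practice the proof would just invoke that argument mutatis mutandis with "$=0$" replaced by "$\le\epsilon$".
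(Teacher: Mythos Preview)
Your proposal is correct and takes essentially the same approach as the paper: the paper explicitly states that the proof of Proposition~\ref{prop:primal_eps} ``follow[s] the same lines as the proof of Proposition~\ref{prop:primal},'' and your argument does exactly that---fixing $\EE_\QQ[U]=\EE_{\Pnom}[U]$ via the infinite cost on $(a,y)$, decomposing any coupling atomwise, setting $p_i=\pi_i(\mathcal X_{1-\mc C(x_i)})$, and using $\eps$-optimizers of $d(x_i)$ to pass between the LP and the projection, with the equality constraint replaced by $\le\epsilon$ throughout. One minor remark: you refer to ``the feasibility hypothesis $d(x_i)<+\infty$,'' but the statement carries no such hypothesis; the argument still goes through with the usual convention $0\cdot(+\infty)=0$ in the objective (and both sides equal $+\infty$ if the LP is infeasible), exactly as the paper handles the analogous case in the proof of Proposition~\ref{prop:duality}.
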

As in Section~\ref{sec:simple}, $\mc P_\epsilon(\Pnom)$ is amenable to
be studied via the respective dual function,
\begin{align*}
&\mc D_\epsilon(\Pnom) \Let \\
& \max_{\gamma \in \mathbb{R}_+^{m}}~-\gamma^\top
\epsilon +  \frac{1}{N}\sum_{i \in [N]} \Big\{\gamma^\top
\phi(u_i, \EE_{\Pnom}[U])\mc C(x_i) \\
& \qquad + \left(d(x_i) + \left(1-2\mc %
C(x_i) \right)\gamma^\top \phi(u_i, \EE_{\Pnom}[U]) \right)^{-}\Big\}.
\end{align*}

\begin{proposition}[Strong duality]
  \label{prop:duality2}
 Strong duality holds, i.e.,
  $\mc P_\epsilon(\Pnom) = \mc D_\epsilon(\Pnom)$.
\end{proposition}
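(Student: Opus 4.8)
The plan is to establish Proposition~\ref{prop:duality2} by exhibiting that the linear program~\eqref{problem:reformulate_eps} is feasible and bounded, and then invoking standard strong-duality results for linear programming. First I would note that the primal~\eqref{problem:reformulate_eps} has a nonempty feasible region: under the null hypothesis $\PP \in \mc F_\epsilon$ the trivial choice $p = \mathbf{0}$ gives $\sum_{i}\mc C(x_i)\phi(u_i,\EE_{\Pnom}[U]) \le N\epsilon$ in the limit, but more robustly, for any data set the choice $p_i = 1$ for all $i$ flips every classification label and makes the left side $-\sum_i \mc C(x_i^{\mathrm{opp}})\phi(u_i,\cdots)$-type quantity; in any case one shows directly that the polytope $\{p \in [0,1]^N : \text{linear inequality}\}$ is nonempty (e.g.\ the extreme choices $p_i \in \{0,1\}$ sweep out a range of values for the linear functional, so some feasible point exists whenever the required threshold is attainable, which it is since we may take $p_i$ large enough to drive $\mc C(x_i)\to 1-\mc C(x_i)$). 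The objective $\frac1N\sum_i p_i d(x_i)$ is bounded below by $0$ on $[0,1]^N$, so the primal optimal value is finite.

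Next I would write out the Lagrangian dual explicitly. Dualizing only the single (vector) inequality constraint $\sum_i (1-2\mc C(x_i))\phi(u_i,\EE_{\Pnom}[U])p_i + \sum_i \mc C(x_i)\phi(u_i,\EE_{\Pnom}[U]) \le N\epsilon$ with a multiplier $\gamma \in \mathbb{R}_+^m$ (nonnegative because the constraint is an inequality), and keeping the box constraint $p \in [0,1]^N$ explicit, the inner minimization over $p$ decouples coordinatewise. For each $i$ the inner problem is $\min_{p_i \in [0,1]} p_i\big(d(x_i) + (1-2\mc C(x_i))\gamma^\top\phi(u_i,\EE_{\Pnom}[U])\big)$, whose value is $\big(d(x_i) + (1-2\mc C(x_i))\gamma^\top\phi(u_i,\EE_{\Pnom}[U])\big)^{-}$ since the minimum of a linear function of $p_i$ over $[0,1]$ is the negative part of its slope. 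Collecting the constant term $-\gamma^\top\epsilon + \frac1N\sum_i \gamma^\top\phi(u_i,\EE_{\Pnom}[U])\mc C(x_i)$ and dividing through by $N$ reproduces exactly the stated expression for $\mc D_\epsilon(\Pnom)$. This computation is essentially identical to the one underlying Proposition~\ref{prop:duality}, the only differences being the sign restriction $\gamma \ge 0$ and the extra $-\gamma^\top\epsilon$ term.

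Finally, since~\eqref{problem:reformulate_eps} is a linear program with a nonempty feasible set and a finite optimal value, strong duality for linear programming (no Slater-type condition is needed for LPs, only primal feasibility and boundedness, or equivalently both primal and dual feasibility) yields $\mc P_\epsilon(\Pnom) = \mc D_\epsilon(\Pnom)$. I would also remark that the dual is itself feasible --- $\gamma = 0$ is always dual-feasible and gives objective $\frac1N\sum_i \big(d(x_i)\big)^{-} + 0 = 0$ since $d(x_i)\ge 0$ --- which gives a second, symmetric route to strong duality and simultaneously confirms $\mc P_\epsilon(\Pnom)\ge 0$.

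The main obstacle is the bookkeeping around primal feasibility: one must be careful that the linear inequality constraint in~\eqref{problem:reformulate_eps} is genuinely satisfiable for the given sample (it is, because letting $p_i \uparrow 1$ flips each $\mc C(x_i)$ and the resulting value of the constrained linear functional can be driven down, so for $\epsilon \in \mathbb{R}_+^m$ with the null hypothesis in force --- or indeed, since we only need feasibility of the LP, not of the null --- one always lands inside the half-space). Once feasibility and boundedness are in hand, everything else is the routine LP-duality manipulation sketched above; the derivation of the closed-form inner minimum via the negative-part operator is the only genuinely computational step, and it is elementary.
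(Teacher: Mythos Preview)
Your approach is essentially the paper's: the paper defers the proof of Proposition~\ref{prop:duality2} to that of Proposition~\ref{prop:duality}, which proceeds exactly by exhibiting primal feasibility, invoking LP strong duality, and then simplifying the dual by optimizing out the auxiliary variables to obtain the negative-part expression. Your Lagrangian computation and the observation that $\gamma=0$ is dual-feasible are both correct.

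Two points need tightening. First, your feasibility argument is unnecessarily vague: the clean witness is $p_i = \mc C(x_i)$, for which the left side of the constraint in~\eqref{problem:reformulate_eps} becomes
\[
\sum_{i}(1-2\mc C(x_i))\mc C(x_i)\phi(u_i,\EE_{\Pnom}[U]) + \sum_i \mc C(x_i)\phi(u_i,\EE_{\Pnom}[U]) = 0 \le N\epsilon,
\]
valid for any sample and any $\epsilon \ge 0$; this is precisely the feasible point used in the paper's proof of Proposition~\ref{prop:duality}. Second, you do not address the possibility $d(x_i)=+\infty$ for some $i$, in which case the objective has infinite coefficients and standard LP duality does not directly apply; your claim that ``bounded below by $0$ implies finite optimal value'' fails here. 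The paper handles this separately (Case~2 in the proof of Proposition~\ref{prop:duality}): one forces $p_i=0$ whenever $d(x_i)=+\infty$, observes that the resulting finite-coefficient LP either is feasible and bounded (strong duality applies) or is infeasible (both primal and dual are $+\infty$), and finally notes that $\big(d(x_i)+(1-2\mc C(x_i))\gamma^\top\phi(u_i,\EE_{\Pnom}[U])\big)^{-}=0$ for those $i$, so the dual expression $\mc D_\epsilon(\Pnom)$ is unchanged.
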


\subsection{Asymptotic Behavior of the Projection Distance}
\label{sec:composite-2}

We next study the
limit of $\mc D_{\epsilon }(\Pnom)$ as the sample size $N$ tends to
infinity. In order to state the theorem, let us introduce notation for
asymptotic stochastic ordering. We say that a sequence of random
elements $\{A_n\}_{n \geq 1}$ satisfies $A_{n}\lesssim _{D}B$ if for
every continuous and bounded non-decreasing function $g$,
\begin{equation*}
 \limsup_{n\rightarrow \infty }\mathbb{E}\left[ g(A_{n})\right]
  \leq\mathbb{E}\left[ g(B)\right] .
\end{equation*}

\begin{theorem}
  \label{thm:clt2}
  Suppose that
  \sloppy{$\left\{ X_{1},U_{1}\right\} ,...,\{X_{n},U_{n}\}$} are
  independently obtained from the distribution $\PP$ and that
  Assumptions \textnormal{\ref{assumption}} and
  \textnormal{\ref{assump:finite}} are satisfied.  Then under the
  null hypothesis $\mc H_0$,
  \begin{equation}
    N\times \mathcal{D}_{\epsilon }(\mathbb{\hat{P}}^{N})\lesssim
    _{D}\max_{\gamma \in \R_{+}^{m}}\left\{ \gamma ^{\top }V-\frac{1}{2}\gamma
      ^{\top }S\gamma \right\} ,
    \label{eq:asymp-bound}
\end{equation}%
where $S = f(0)\Sigma_1,$ $V \sim \mc N (0,\Sigma),$ and $\Sigma$ is
the covariance of
$\phi (U,\mu ) \mc C(X) + \mathbb{E}_{\mathbb{P}}\left[ \phi
  _{z}(U,\mu )\mc C(X) \right] U.$
  In particular, if $\phi (U,\mu )$ is one-dimensional ($m=1$), we have
  \[
  \max_{\gamma \in \R_{+}^{m}}\left\{ \gamma ^{\top }V-\frac{1}{2}\gamma
      ^{\top }S\gamma \right\}=\frac{1}{2}S^{-1}V^2 \mathbb{I}\{V \geq 0\}.
  \]
\end{theorem}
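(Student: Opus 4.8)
The plan is to analyze the dual $\mc D_\epsilon(\Pnom)$, which equals $\mc P_\epsilon(\Pnom)$ by Proposition~\ref{prop:duality2}, and to reuse wholesale the asymptotic expansion already established in the proof of Theorem~\ref{thm:clt}, adding only the bookkeeping forced by the tolerance $\epsilon$ and the sign constraint $\gamma \in \R_+^m$. Write $\bar\mu \Let \EE_\PP[\mc C(X)\phi(U,\mu)]$, so that the composite null $\mc H_0$ is exactly $\bar\mu \le \epsilon$ (componentwise). Substituting $\gamma = \tilde\gamma/\sqrt N$ in the definition of $\mc D_\epsilon$ and multiplying by $N$ gives
\[
N\,\mc D_\epsilon(\Pnom) = \max_{\tilde\gamma \in \R_+^m} F_N(\tilde\gamma), \qquad F_N(\tilde\gamma) = \sqrt N\,\tilde\gamma^\top(\bar\mu - \epsilon) + \tilde\gamma^\top \tilde V_N - \tfrac12 \tilde\gamma^\top S \tilde\gamma + R_N(\tilde\gamma),
\]
where $S = f(0)\Sigma_1$, $\tilde V_N \Let \sqrt N\big(\tfrac1N\sum_{i}\phi(u_i,\EE_{\Pnom}[U])\mc C(x_i) - \bar\mu\big)$, and $R_N(\tilde\gamma)$ collects the remainder from the local expansion of the negative-part sum. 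Two inputs come verbatim from the proof of Theorem~\ref{thm:clt}: (i) a delta-method expansion of $\tfrac1N\sum_i \phi(u_i,\EE_{\Pnom}[U])\mc C(x_i)$ about $\mu$ (using the $C^1$ regularity of $\phi$ in Assumption~\ref{assumption}\ref{assump:1.2}) followed by the CLT, which yields $\tilde V_N \Rightarrow V \sim \mc N(0,\Sigma)$ with $\Sigma$ the covariance of $\phi(U,\mu)\mc C(X) + \EE_\PP[\phi_z(U,\mu)\mc C(X)]U$; and (ii) the local analysis of $\sum_i\big(d(x_i) + [1-2\mc C(x_i)]\gamma^\top\phi(u_i,\EE_{\Pnom}[U])\big)^-$ around the decision boundary $\{d(x)=0\}$, which—via Assumption~\ref{assumption}\ref{assump:1.1} for the density at the boundary and Assumption~\ref{assump:finite} (or its infinite-cardinality analogue) for the conditional law of $U$ given $d(X)=0$—shows this sum equals $-\tfrac12\tilde\gamma^\top S\tilde\gamma + R_N(\tilde\gamma)$ with $\sup_{\|\tilde\gamma\|\le K}|R_N(\tilde\gamma)| \to 0$ in probability for every $K$; the plug-in $\EE_{\Pnom}[U]$ inside $\phi$ perturbs this by only $o_p(1)$ since $\gamma = O(N^{-1/2})$, $\EE_{\Pnom}[U]-\mu = O_p(N^{-1/2})$, and only $O_p(N^{1/2})$ summands are active.

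The one genuinely new step is to combine the composite null with the sign constraint. Since $\bar\mu - \epsilon \le 0$ componentwise and $\tilde\gamma \ge 0$, the first term of $F_N$ is nonpositive, so $F_N(\tilde\gamma) \le \tilde\gamma^\top\tilde V_N - \tfrac12\tilde\gamma^\top S\tilde\gamma + R_N(\tilde\gamma)$. Because $S \succ 0$ (Assumption~\ref{assumption}\ref{assump:1.2} together with $f(0)>0$), the map $g_S(v) \Let \max_{\tilde\gamma\in\R_+^m}\{\tilde\gamma^\top v - \tfrac12\tilde\gamma^\top S\tilde\gamma\}$ is finite, continuous, and its maximizer lies in $\{\|\tilde\gamma\| \le 2\|v\|/\lambda_{\min}(S)\}$; applying this with $v = \tilde V_N = O_p(1)$ confines the relevant maximizers to a fixed compact set with probability tending to one, so the uniform control on $R_N$ over that set gives $N\,\mc D_\epsilon(\Pnom) \le g_S(\tilde V_N) + o_p(1)$ (a short strong-concavity argument shows the $R_N$-perturbation does not move the argmax out of the compact set).

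To conclude, the continuous mapping theorem gives $g_S(\tilde V_N) \Rightarrow g_S(V) = \max_{\gamma\in\R_+^m}\{\gamma^\top V - \tfrac12\gamma^\top S\gamma\}$; combined with $N\,\mc D_\epsilon(\Pnom) \le g_S(\tilde V_N) + o_p(1)$ and the elementary fact that $A_n \le B_n + \xi_n$ with $\xi_n \to 0$ in probability and $B_n \Rightarrow B$ forces $\limsup_n \EE[g(A_n)] \le \EE[g(B)]$ for every bounded continuous non-decreasing $g$ (use Slutsky and monotonicity of $g$), this yields $N\,\mc D_\epsilon(\Pnom) \lesssim_D g_S(V)$, as claimed. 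When $\bar\mu = \epsilon$ the first term of $F_N$ vanishes and the inequality becomes an equality in distribution, so $g_S(V)$ is the worst case over $\mc F_\epsilon$—exactly what is needed for the resulting test to have asymptotic level $\alpha$. Finally, for $m=1$ and $S>0$: if $V \ge 0$ the unconstrained optimizer $\gamma = V/S$ is feasible and $g_S(V) = V^2/(2S)$, while if $V < 0$ the optimum over $\tilde\gamma \ge 0$ is at $\tilde\gamma = 0$, giving $g_S(V) = 0$; hence $g_S(V) = \tfrac12 S^{-1}V^2\,\mathbb{I}\{V \ge 0\}$.

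The main obstacle is input (ii): the uniform-in-$\tilde\gamma$ local expansion of the negative-part sum around $\{d(x)=0\}$. This is the delicate empirical-process estimate carried out in the proof of Theorem~\ref{thm:clt}, and the only place where the discontinuity of $X \mapsto \mc C(X)\phi(U,\EE_\QQ[U])$ and the associated $O_p(N^{-1})$ scaling of the projection enter. Everything specific to the composite problem—the reparametrization $\gamma = \tilde\gamma/\sqrt N$, the sign argument from $\bar\mu \le \epsilon$, the tightness from $S \succ 0$, and the continuous-mapping/stochastic-ordering wrap-up—is routine once that estimate is available.
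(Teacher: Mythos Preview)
Your approach is essentially the paper's: rescale $\gamma\leftarrow\gamma/\sqrt N$, import from the proof of Theorem~\ref{thm:clt} the uniform expansion $\gamma^\top V_N + M_N(\gamma)\Rightarrow \gamma^\top V - \tfrac12\gamma^\top S\gamma$, and exploit $\gamma\ge 0$ together with $\bar\mu-\epsilon\le 0$. The only organizational difference is that the paper first argues the diverging term $\sqrt N\,\gamma^\top(\bar\mu-\epsilon)$ forces the optimizer onto the face $\{\gamma^\top(\bar\mu-\epsilon)=0\}$, computes the weak limit there, and then relaxes to all of $\R_+^m$, whereas you drop the nonpositive term at finite $N$ and take the limit of the resulting upper bound; your ordering is arguably more direct for the $\lesssim_D$ conclusion. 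One imprecision to tighten: the ``short strong-concavity argument'' you invoke for tightness of the maximizer of $\tilde\gamma^\top\tilde V_N + M_N(\tilde\gamma)$ does not follow from strong concavity of the quadratic approximation alone, since $R_N$ is controlled only on compacta; the correct tool is the compactification lemma (Lemma~\ref{lma:compactifaction}) used in the proof of Theorem~\ref{thm:clt}, which bounds $M_N$ itself at infinity and is already part of the machinery you cite as input~(ii).
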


With $\mc P_\epsilon(\Pnom) = \mc D_\epsilon(\Pnom)$ as in Proposition
\ref{prop:duality2},
Theorem \ref{thm:clt2} reveals that one can use
$s_{_N}(\epsilon) \Let N \times \mc P_\epsilon(\Pnom)$ as a test
statistic to reject $\mc H_0$ and $\eta_{1-\alpha}$, defined by
 the $(1-\alpha)\times 100\%$ quantile of the right hand side bounding variable in (\ref{eq:asymp-bound}), as a threshold. We then follow the same hypothesis testing procedure defined in Section~\ref{sec:simple}. Since Theorem \ref{thm:clt2} only provides a stochastic upper-bound, we actually use a \textit{conservative} quantile and the type I error is less than or equal to the desired significance level $\alpha$ asymptotically.

%%%%%%%%%%%%%%%%%%%%%%%%%%
\section{Computation and Estimation Procedures}
\label{sec:computation-estimation}
%\subsection{Computations of the statistics $s_{_N}$ and $s_{_N}(\epsilon)$}
\subsection{Computations of the Test Statistic}
\label{sec:computation-stats}
Based on Proposition \ref{prop:primal}, we propose a
sorting-based algorithm for computing $\mc P(\Pnom)$ for one-dimensional
$\phi(\cdot)$  (that is,  $m=1$). The steps involve transporting  points which are close to the decision boundary and have significant contributions towards improving fairness if prediction labels are flipped. The exact steps are described in Algorithm \ref{alg:sort}. Note that the algorithm requires only information on  $\{\mathcal{C}(x_i), d(x_i): i \in [N]\}$, instead of the whole functional structure of the classifier $\mathcal{C}$.
\begin{algorithm}[H]

\caption{Computing $\mc P (\Pnom)$ for one-dimensional
$\phi(\cdot)$}
\label{alg:sort}
 \begin{algorithmic}[1]
   \STATE \algorithmicrequire Data
   $\{d(x_i),\mc C(x_i),\phi(u_i,\EE_{\Pnom}[U])\}_{i=1}^N$.

   \STATE \algorithmicensure the optimal value
   $\mc P(\Pnom)$.

   \STATE Let
   $s\gets -\sum_{i\in [N]}\mc C(x_i) \phi(u_i, \EE_{\Pnom}[U])$;

   \STATE For $i \in [N]$, compute
   \[t_i \gets d(x_i)^{-1}(1-2 \mc C(x_i)) \phi(u_i,
   \EE_{\Pnom}[U]){\rm sgn}(s);\]

   \STATE Sort
   $t_1,\ldots,t_N$ in descending order, where $t_{(i)}$ denotes the
   $i$-th largest one and let $d_{(i)}$ be the corresponding distance;

   \STATE Initialize $V = 0$ and let $s \gets |s|$;

   \FOR{$i\gets1$ to $T$ }

   \IF{$t_{(i)}d_{(i)} < s$}
   \STATE $s\gets s-t_{(i)}d_{(i)}$ and $V\gets V+d_{(i)} $;
   \ELSE \STATE $V\gets V+t_{(i)}^{-1}s $ and break;
   \ENDIF

   \ENDFOR
   \STATE Output $\mc P(\Pnom) \gets V/N$.
   \end{algorithmic}

 \end{algorithm}
\vspace{-4mm}
 With the output $\mc P(\Pnom)$ returned by Algorithm
 \ref{alg:sort}, computation of the test statistic
 $s_{_N} = N\times\mc P(\Pnom)$ is immediate. To obtain
 $\mc P_\epsilon(\Pnom)$ similarly, one may modify Line 3 in
 Algorithm \ref{alg:sort} as in
  \[
   s \gets
   - \left(\sum_{i\in [N]}\mc C(x_i) \phi(u_i, \EE_{\Pnom}[U]) - N\epsilon \right)^+.
 \]
 With ${\rm sgn}(0) = 0$ assigning $t_i = 0$ for all $i \in [N]$ in
 Step 4, we take $0/0 = 0$ in Line 11 in Algorithm \ref{alg:sort}
 in order to obtain $\mc P_\epsilon(\Pnom).$ It is easy to see that the time complexity of Algorithm \ref{alg:sort} is the same of the time complexity of the sorting algorithm, which is generally $O(N\log N)$. For instances where
 $m > 1$, one may solve either problem \eqref{problem:reformulate} or problem
 \eqref{problem:reformulate_eps} with a standard linear program solver
 to obtain the respective values $\mc P(\Pnom)$ or
 $\mc P_\epsilon(\Pnom)$, which is also solvable in polynomial time. Therefore, our hypothesis test is more computationally efficient than the test proposed in~\citet{ref:taskesen2020statistical}, which requires solving a non-convex optimization problem.

\subsection{Computations of the Quantile of the Limiting Distributions}
\label{sec:kernel_estimate}
We use the conditional density estimator and the Nadaraya-Watson estimator \citep[%
Section 1]{ref:tsybakov2008introduction} to estimate $f(0)$ and $\Sigma_1$, i.e.,
\begin{align*}
&\hat{f}(0)=\frac{1}{Nh}\sum_{i\in [N]}K\left(\frac{\Phi(x_i)}{h}\right), \text{ and }\\
& \hat{\Sigma}_1 =
\frac{\ds\sum_{i\in [N]} K\left(\frac{\Phi(x_i)}{h}\right)\phi \left(
u_i,\EE_{%
\Pnom}[U]\right) \phi \left( u_i,\EE_{%
\Pnom}[U]\right) ^{\top }}{\ds\sum_{i\in [N]}K\left(\frac{\Phi(x_i)}{h}\right)},
\end{align*}
where $h>0$ is the bandwidth parameter, and $K(\cdot)$ is a kernel function
that is symmetric and integrates to one. By combining the above two estimates, an empirical estimate for $S$, denoted by $\hat{S}$ is computed via,
\begin{equation*}
\frac{1}{Nh}\ds\sum_{i\in [N]} K\left(\frac{\Phi(x_i)}{h}\right)\phi \left(
u_i,\EE_{%
\Pnom}[U]\right) \phi \left( u_i,\EE_{%
\Pnom}[U]\right) ^{\top }.
\end{equation*}
Under some mild conditions, by choosing $h=O(N^{-1/5})$, we have $\|S-\hat{S}%
\|=O(N^{-2/5})$; see, for example, \citet[Theorem 4.2.1]%
{ref:hardle1990applied} and \citet[Proposition 1.7]%
{ref:tsybakov2008introduction}. By combining the empirical covariance estimator for $\Sigma$, we  obtain a  quantile estimate  $\hat{\eta}_{1-\alpha}$.

\section{Numerical Experiments}
\label{sec:experiments}
\label{sec:real}
Our experiments use the following three datasets: Arrhythmia~\citep{ref:Dua:2019}, COMPAS~\citep{ref:propublica}
 and Drug \cite{fehrman2017five}. The details of the datasets are provided in \ref{sec:dataset}.

In the first experiment, we test the fairness of the
Tikhonov-regularized logistic and SVM classifiers by the equal opportunity criterion. We randomly split 70\%-30\% of the data as a train-test set.  Figure~\ref{fig:regularization} reports the test statistics, fairness rejection threshold, and the accuracy of the classifier. Figure~\ref{fig:compas} shows the result of regularized logistics classifier in COMPAS dataset, while Figure \ref{fig:drug} shows the result of SVM classifier in the Drug dataset. We observe that a strong regularization only reduces the test statistics very mildly, and the Wasserstein projection tests suggest we reject the fair null hypothesis even when the regularization power is sufficiently large, which presents a different phenomenon from the probabilistic fairness test results shown in \citet{ref:taskesen2020statistical}. We here provide a heuristic explanation for this difference. Consider a logistic classifier $ \mc C(x) = \mathbb{I}\{1/ (1+\exp(-\theta ^\top x)) \geq 0.5\}$. Since regularization usually induces shrinkage, the regularized classifier could be approximated by  $ \mc C_\varepsilon(x)=\mathbb{I}\{ 1/ (1+\exp(-\varepsilon \theta ^\top x)) \geq 0.5\}$, and large regularization power corresponds to small $\varepsilon$. Note that $\mc C_\varepsilon(x)=\mc C(x)$ no matter how small $\varepsilon >0$ is. However, for the probabilistic notion, $\mc C_\varepsilon$ will output approximately equal  probabilities for both labels, which tends to be probabilistic fair when $\varepsilon$ is very small. The experiment thus demonstrates probabilistic fairness does not imply the exact fairness in general.

\begin{figure}[thbp]
\centering
% Requires \usepackage{graphicx}
\subfigure[Regularized logistics classifier in COMPAS]{
\label{fig:compas} \includegraphics[width=3.5in]{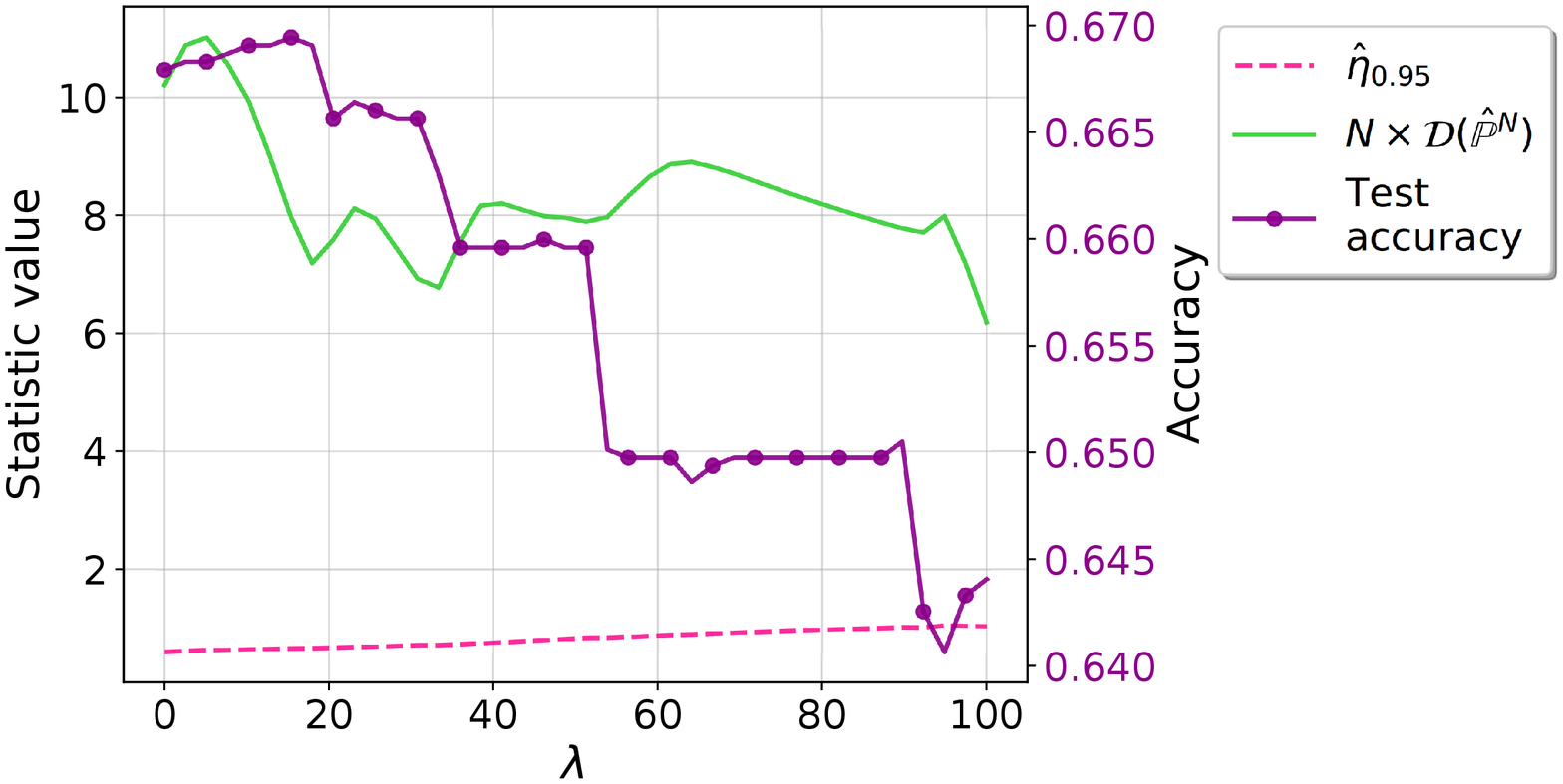}}
\subfigure[SVM classifier in Drug dataset]{
\label{fig:drug} \includegraphics[width=3.5in]{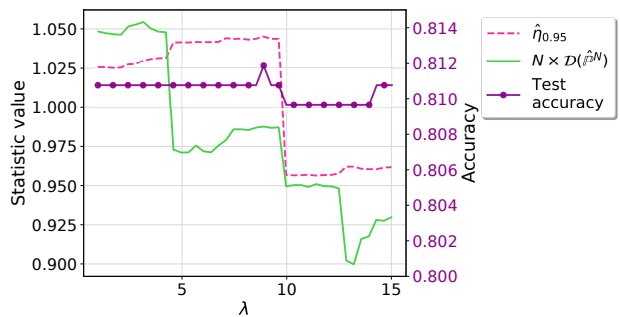}}

\caption{Test statistics and accuracy of regularized classifiers on test data with a rejection threshold. The green line is the test statistics; the pink dashed line is the rejection threshold at the significance level $\alpha = 0.05$; the purple line is the test accuracy;  $\lambda$ denotes the regularization parameter, where larger $\lambda$ means stronger regularization power.}
\label{fig:regularization}
\end{figure}

In the second experiment, we compare a fair algorithm proposed in \citet{ref:donini2018empirical}
with a naive SVM classifier (parametrized by the ridge regularization $\lambda$) in three datasets: Arrhythmia, COMPAS and Drug.
We randomly split 70\%-30\% of the data as a train-test set and we replicate this procedure 1,000 times. We will test the fairness in terms of  the equal opportunity and equalized odds criteria, and for the equal opportunity criteria, we will further show results using Welch's test (Welch's test is not applicable for multi-dimensional equalized odds criteria). Tables \ref{tab:rej_prob_real} and \ref{tab:rej_prob_real2}
show a rejection percentage of the naive SVM and the method in \citet{ref:donini2018empirical}
at the significance level $\alpha = 0.05$ in those 1,000 replications using our test according to the equal opportunity and equalized odds criteria, respectively. Table \ref{tab:rej_prob_real3} shows the test results using  Welch’s test according to the equal opportunity criterion.
Our test results demonstrate that the method in \citet{ref:donini2018empirical} has a significantly lower rejection rate,
which means it is substantially more fair than the naive method.

\begin{table}[htbp]
\caption{Rejection percentage of the Naive SVM and the method in \citet{ref:donini2018empirical} at the significance level $\alpha = 0.05$ according to the equal opportunity criterion using  our test.}
\label{tab:rej_prob_real}\centering
\begin{tabular}{lccc}
\toprule & Arrhythmia & COMPAS & Drug  \\
\midrule Naive SVM&   68.4\% & 100\%   &  30.1\% \\
\citet{ref:donini2018empirical} &   11.6\%& 16.6\%   &  21.6\% \\

\bottomrule
\end{tabular}%
\end{table}
\begin{table}[htbp]
\caption{Rejection percentage of the Naive SVM and the method in Donini et al.~(2018)  at the significance level $\alpha = 0.05$ according to the equalized odds criterion using our test.}
\label{tab:rej_prob_real2}\centering
\begin{tabular}{lccc}
\toprule & Arrhythmia & COMPAS & Drug  \\
\midrule Naive SVM&   75.1\% &100\%   &  30.5\% \\
Donini et al.~(2018) &  13.7\%& 21.7\%   &  17.2\% \\
\bottomrule
\end{tabular}%
\end{table}

\begin{table}[!htbp]
\caption{Rejection percentage of the Naive SVM and the method in Donini et al. (2018)  at the significance level $\alpha = 0.05$ according to the equal opportunity criterion using Welch's test.}
\label{tab:rej_prob_real3}\centering
\begin{tabular}{lccc}
\toprule & Arrhythmia & COMPAS & Drug  \\
\midrule Naive SVM&   76.1\% &100\%   &  35.5\% \\
Donini et al. (2018) &  14.0\%& 16.1\%   &  23.0\% \\
\bottomrule
\end{tabular}%
\end{table}

More experiments are conducted in \ref{sec:numerical_detail} to empirically validate the convergence result in Theorem \ref%
{thm:clt} and our proposed hypothesis test method.
\section*{Acknowledgement}
Material in this paper is based upon work supported by the Air Force Office of Scientific Research under award number FA9550-20-1-0397. Additional support is gratefully acknowledged
from NSF grants 1915967, 1820942, and 1838576 and Singapore Ministry of Education’s AcRF grant MOE2019-T2-2-163.

%%%%%%%%%%%%%%%%%%%%%%%%%%%%%%%%%

\setcounter{section}{0} \setcounter{subsection}{0} \setcounter{equation}{0}
\ \renewcommand
{\thesection}{Appendix \Alph{section}}
\renewcommand\thesubsection{\thesection.\arabic{subsection}}
\renewcommand{\theequation}{\Alph{section}.\arabic{equation}}
\renewcommand{\thelemma}{\Alph{section}\arabic{lemma}}
\renewcommand{\thetheorem}{\Alph{section}\arabic{theorem}} \newpage

\onecolumn

\section{Proofs}
\label{Appenix:proof}
We prove Theorems \ref{thm:clt} and \ref{thm:clt2} under a more general assumption below.
\setcounter{assumption}{2}
\begin{assumprime}{\theassumption}[Continuous conditional measure]
    For the case where  $\text{supp}(U)$ is potentially an
       infinite set, the cost function $c$ is decomposable as
\begin{align*}
  c\left((x,u),(x^\prime,u^\prime)\right) =
  \bar{c}(x,x^{\prime})+\infty \cdot \|u-u^{\prime }\|,
\end{align*}
       and the following conditions are
       satisfied:
    \begin{enumerate}[label=\emph{\alph*})]
    \item the moments $\mathbb{E}_{\mathbb{P}} \| U\|_{2}^{2}$,
      $\mathbb{E}_{\mathbb{P}} \| \phi (U,\mu ) \| _{2}^{2}$ and
      $\mathbb{E}_{\mathbb{P}}\| \phi_{z}(U,\mu )\|_{2}$ are finite.
      \label{assump:1.2moments}
    \item for $z$ such that $\Vert z - \mu \Vert_2 < v,$ the
      derivative $\phi_z(\cdot)$ satisfies,
      \begin{equation}
        \left\Vert \phi _{z}(u,z)-\phi _{z}(u,\mu )\right\Vert _{2}\leq
        M(u)\left\Vert z-\mu \right\Vert _{2},
        \label{lma:assump}
      \end{equation}
      where $\mathbb{E}_{\mathbb{P}}[M(U)]<+\infty.$ \label{assump:2.2}
    \item \label{assump:1.4} The (regular) conditional probability
      measure $\nu_t $ of $\phi ( U, \mu) |\Phi (X) = t$ converges in
      terms of the type $1$-Wasserstein distance as $t\rightarrow 0$:
      i.e., there exist a set $B\subset \mathbb{R}$ with
      $\mathbb{P}( \Phi (X)\in B) =1$ and $\varepsilon_0 > 0$ such
      that
      \begin{equation*}
        \lim_{t\rightarrow 0}W_{1}\left( \nu_t, \nu_0\right)
        \mathbf{1}\{t\in B\}=0
      \end{equation*}
      and
      $\sup_{t \in B}\mathbb{E}_{\mathbb{P}}[ \| \phi ( U,\mu)
      \|_{2}^{2+\epsilon _{0}}|\Phi (X)\mathcal{=}t]$ is finite, where type 1-Wasserstein distance $W_1(\cdot,\cdot)$ is $W_c(\cdot,\cdot)$ with the cost function being a metric.
    \end{enumerate}
    \label{assump:infinite}
\end{assumprime}
\begin{remark}
  \textnormal{If $\text{supp}(U)$ is a finite set, and $U$ is completely dependent on $(X,Y)$,
    the simpler Assumption~\ref{assump:finite} is equivalent to
    Assumption~\ref{assump:infinite}.}
  \label{rem:assump-equiv}
\end{remark}
% In light of Remark~\ref{rem:assump-equiv}, we emphasize that our proof
% Theorem~\ref{thm:clt} under Assumption implies that the simpler
% Assumption~\ref{assump:finite} when $U$ takes only finitely many
% values.  This is indeed the case with all the common fairness notions
% in Examples \ref{eg:EOpp}~-~\ref{eg:SPar}.
\subsection{Proofs of Section~\protect\ref{sec:simple}}
\label{prf:sec_simple}
% To shorten the notation, we write $\xi = (X, U)$ and denote $\Xi = \X \times
% \mathbb{U}$, $\wh \Xi_N = \{(x_i, u_i)\}_{i=1}^N$. We assume that $N \ge 2$
% and $\xi_i = (x_i, u_i)$ are distinct. We use $\mc M_+(\Xi \times \wh \Xi_N)$
% to denote the set of all nonnegative measures on $\Xi \times \wh \Xi_N$.
\begin{proof}[Proof of Proposition~\ref{prop:primal}]Since the cost to move $U$ is $+\infty$, we have  $\EE_{\QQ}[U]=\EE_{\Pnom}[U]$.
  Then, consider any probability measure $\QQ$ such that
    \[
        \EE_{\QQ}[\mc C(X) \phi(U, \EE_{\Pnom}[U])] = 0,
    \]
    and let $\pi$ be the optimal coupling between $\Pnom$ and $\QQ$. Because $\Pnom$ is the empirical measure, the coupling $\pi$ can be written as $\pi = \frac{1}{N} \sum_{i\in[N]} \pi_i \otimes \delta_{(x_i, u_i)}$. For any value $\eps > 0$, construct now the measure
    \begin{equation}
        \QQ^\eps = \frac{1}{N} \sum_{i \in [N]} (1- p_i) \delta_{(x_i, u_i)} + p_i \delta_{(x_i^\varepsilon, u_i)},
        \label{eqn:Q_eps}
    \end{equation}
    where the mass $p_i$ is set to
    \[
        p_i = \int_{\X_{1 - \mc C(x_i)}} \pi_i(\mathrm{d} x) = \pi_i(\X_{1 - \mc C(x_i)}) \in [0, 1] \qquad \forall i \in [N]
    \]
    and ${x}^\varepsilon_i$ is an $\varepsilon$-optimizer of the problem $\inf_{x' \in \X_{1-\mc C(x)}%
} c(x_i,x^{\prime })$. Then, it is easy to see that
\begin{align*}
    &\EE_{\QQ^\eps}[\mc C(X) \phi(U, \EE_{\Pnom}[U])]=\EE_{\QQ}[\mc C(X) \phi(U, \EE_{\Pnom}[U])] = 0
\end{align*}
and that
\begin{align*}
    &\EE_{\QQ^\eps}[\mc C(X) \phi(U, \EE_{\Pnom}[U])] \\
    &=\frac{1}{N} \left(\ds \sum_{i \in [N]} (1-p_i)\mc C(x_i)  \phi(u_i, \EE_{\Pnom}[U])  +  p_i(1 -  \mc C(x_i)) \phi(u_i, \EE_{\Pnom}[U])\right) \\
    &=\frac{1}{N}\left( \ds \sum_{i \in [N]} (1 - 2 \mc C(x_i)) \phi(u_i, \EE_{\Pnom}[U]) p_i + \sum_{i\in [N]}\mc C(x_i) \phi(u_i, \EE_{\Pnom}[U])\right).
    \end{align*} Since  $d(x_i) \le \| x_i^\eps - x_i\| \le \|x -x_i\| + \eps $ for any $x \in \X_{1-\mc C(x_i)}$, this implies that $\frac{1}{N}\sum_{i \in [N]} p_i d(x_i) \leq W(\QQ, \Pnom) + \eps$.
     Since $\eps$ can be chosen arbitrarily, this implies that
    \begin{equation}
        \mc P(\Pnom) \geq \left\{
            \begin{array}{cl}
                \min & \ds \frac{1}{N}\sum_{i \in [N]} p_i d(x_i) \\
                \st & p \in [0, 1]^N \\
                & \ds \sum_{i \in [N]} (1 - 2 \mc C(x_i)) \phi(u_i, \EE_{\Pnom}[U]) p_i = - \sum_{i\in [N]}\mc C(x_i) \phi(u_i, \EE_{\Pnom}[U]).
            \end{array}
        \right.
        \label{problem:primal_temp}
    \end{equation}
    On the other hand, for any $\{p_i\}_{i=1}^N$ satisfying the constraints in the linear programming \eqref{problem:primal_temp}, we can construct the measure $\QQ^\eps$ according to \eqref{eqn:Q_eps}. Since $\QQ^\eps$ is a feasible solution of the primal problem \eqref{problem:primal}, we have the other direction of the inequality.
\end{proof}%%%%%%%%%%%%%%%%%%%%
\begin{proof}[Proof of Proposition \ref{prop:duality}] \textbf{Case 1:} $d(x_i)<+\infty$ for $i \in [N]$. The primal problem has a feasible solution ($p_i = 0$ if $\mc C(X) = 0$; $p_i = 1$, otherwise) and is bounded, thus it has an optimal solution and the strong duality holds.
By the strong duality, we have
\begin{equation}
        \mc P(\Pnom) = \left\{
            \begin{array}{cl}
                \max & \ds \frac{1}{N}\left\{\sum_{i \in [N]} \alpha^i +\gamma^\top \mc C(x_i)  \phi(u_i, \EE_{\Pnom}[U])\right \} \\
                \st & \alpha_i \leq 0 \quad \forall i \in [N] \\
                & \ds \alpha_i -\left(1-2\mc C(x_i) \right)  \gamma^\top \phi(u_i, \EE_{\Pnom}[U])\leq d(x_i) \quad \forall i \in [N].
            \end{array}
        \right.
        \label{problem:dual2}
    \end{equation}
    Then, we have
    \begin{align*}
    \alpha_i = \left(d(x_i) +  \left(1-2\mc C(x_i) \right)\gamma^\top \phi(u_i, \EE_{\Pnom}[U]) \right)^{-},
    \end{align*}
    which gives the desired results.

    \textbf{Case 2:} $\exists i\in [N]$ such that $d(x_i)=+\infty$. The primal problem is equivalent to
    \begin{equation}
\mc P(\Pnom) = \left\{
\begin{array}{cl}
  \min & \ds \frac{1}{N}\sum_{i \in [N]} p_i d(x_i), \\
  \st & p \in [0, 1]^N, \\
  & p_i = 0 \text{ for } d(x_i) = +\infty \\
       & \ds \sum_{i \in [N]} (1 - 2 \mc C(x_i)) \phi(u_i, \EE_{\Pnom}[U]) p_i  = -
         \ds\sum_{i\in [N]}\mc C(x_i) \phi(u_i, \EE_{\Pnom}[U]),%
\end{array}
\right.
\label{problem:reformulate2}
\end{equation}
with the convention that if the problem is infeasible, the optimal value of the minimization problem is $+\infty$. We have the dual problem
\begin{equation}
            \begin{array}{cl}
                \max & \ds \frac{1}{N}\left\{\sum_{d(x_i) < +\infty} \alpha^i +\gamma^\top \mc C(x_i)  \phi(u_i, \EE_{\Pnom}[U])\right \} \\
                \st & \alpha_i \leq 0 \quad \text{ for } d(x_i) < +\infty \\
                & \ds \alpha_i -\left(1-2\mc C(x_i) \right)  \gamma^\top \phi(u_i, \EE_{\Pnom}[U])\leq d(x_i) \quad \text{ for } d(x_i) < +\infty.
            \end{array}
        \label{problem:dual22}
    \end{equation}
    Since the problem \eqref{problem:dual22} is also feasible, if it is bounded, then the strong duality holds. If the problem \eqref{problem:dual22} is unbounded,  the primal problem \eqref{problem:reformulate2} is infeasible, which means the primal and the dual both have optimal value $+\infty$. Finally, because
    \[
    \left(d(x_i) +  \left(1-2\mc C(x_i) \right)\gamma^\top \phi(u_i, \EE_{\Pnom}[U]) \right)^{-}=0 \text{ for } d(x_i) = +\infty,
    \]
    we have the optimal value of problem \eqref{problem:dual22} equals to $\mc D(\Pnom)$.
\end{proof}
\begin{proof}[Proof of Lemma \ref{lma:sufficient_assump_1.1}] Since $X_C\big|X_D=x$ has positive density in  $\mathbb{R}^{d_2}$ for every $x \in \mathbb{D}$, we have $\theta_C^\top X_C\big|X_D=x$ has positive density in  $\mathbb{R}$ for every $x \in \mathbb{D}$. Therefore, $\theta^\top X$ has a density
 \[
 f_{\theta^\top X}(x)=\sum_{v \in \mathbb{D}} p_v f_{\theta_C^\top X_C\big|v}(x-\theta_D^\top v)>0,
 \]
 where $p_v=\PP \left(X_D=v\right)$ and $f_{\theta_C^\top X_C\big|v}(\cdot)$ denotes the conditional density of $\theta_C^\top X_C\big|X_D=x$.

 Further, let $w = \ell^{-1} (\tau)$. For the cost function $\bar{c}(\cdot)$  given by (\ref{eq:norm-c}), we have by H\"older inequality
  \[
 d(x)=\inf_{\theta^\top x'=w}\|x-x'\| =\|\theta\|^{-1}_* |\theta^\top x-w|.
 \]
 Therefore, $\PP_\Phi$ has a continuous density $f(\cdot)=\|\theta\|_* f_{\theta^\top X}(\|\theta\|_*\times\cdot+w)$ with $f(0)>0$.

    For the cost function $\bar{c}(\cdot)$ given by (\ref{discrete-c}), when $d(x) < \delta$, we have
 \begin{align*}
 d(x)=\inf_{\theta^\top x'=w}\bar{c}(x,x') = \inf_{ x_D=x_D',\theta_C^\top x'=w-\theta_D^\top x_D} \bar{c}(x,x')   = \inf_{ \theta_C^\top x'=w-\theta_D^\top x_D} \|x_C-x_C'\|=\|\theta_C\|^{-1}_* |\theta^\top x-w| .
 \end{align*}
 The last equality is again due to H\"older inequality. Therefore, $\PP_\Phi$ has a continuous density $f(\cdot)=\|\theta_C\|_* f_{\theta^\top X}(\|\theta_C\|_*\times \cdot+w)$ with $f(0)>0$, which completes the proof.
 \end{proof}

%\begin{corollary}
%\label{cor:fairness}If
%\begin{equation*}
%\phi \left( (a,y),z\right) =z_{1}^{-1}\mathbb{I}_{(1,1)}(a,y)-z_{2}^{-1}%
%\mathbb{I}_{(0,1)}(a,y),
%\end{equation*}%
%and
%\begin{equation*}
%\psi \left( (a,y)\right) =\left[ \mathbb{I}_{(1,1)}(a,y),\mathbb{I}%
%_{(0,1)}(a,y)\right] ^{\mathrm{\top }}.
%\end{equation*}%
%We recover the definition (\ref{eqn:rwpi_fair}). Thus, if  $\mathbb{E}_{\mathbb{%
%P}}[U]=\mu \in (0,1)^{s}$, we have%
%\begin{equation*}
%N\mathcal{R}^{\mathrm{fair}}\left( \mathbb{\hat{P}}^{N}\right) \Rightarrow
%\frac{1}{2}C^{-1}V^{2},
%\end{equation*}%
%where
%\begin{eqnarray*}
%V &\sim &\left( p_{01}p_{11}\right) ^{-1}\mathcal{N}(0,var(Z)), \\
%C &=&f\left( 0\right) \left( p_{0,1|\tau }p_{0,1}^{-2}+p_{1,1|\tau
%}p_{1,1}^{-2}\right) ,
%\end{eqnarray*}%
%and
%\begin{align*}
%Z&=\mathbb{I\{}h_{}{(X)\geq }\tau \}\left( p_{01}\mathbb{I}%
%_{(1,1)}(A,Y)-p_{11}\mathbb{I}_{(0,1)}(A,Y)\right) \\
%&\qquad +\mathbb{I}_{(0,1)}(A,Y)\mathbb{E}[\mathbb{I}_{(1,1)}(A,Y)\mathbb{I\{%
%}h_{}{(X)\geq }\tau \}] \\
%&\qquad -\mathbb{I}_{(1,1)}(A,Y)\mathbb{E}[\mathbb{I}_{(0,1)}(A,Y)\mathbb{I\{%
%}h_{}{(X)\geq }\tau \}].
%\end{align*}%
%and%
%\begin{equation*}
%p_{0,1|\B }=\mathbb{E}\left[ \mathbb{I}_{(0,1)}(A,Y)|X \in \B
%\right] ,p_{1,1|\B }=\mathbb{E}\left[ \mathbb{I}_{(1,1)}(A,Y)|X \in \B  \right] .
%\end{equation*}
%\end{corollary}
Lemmas \ref{lma:aux_clt} and \ref{lma:lln} are useful for the proof of
Theorem \ref{thm:clt}, whose proofs are presented in \ref{sec:proofs}.

\begin{lemma}
\label{lma:aux_clt}Suppose Assumption \ref{assump:infinite} is
enforced. Then, we have
\begin{eqnarray*}
\sqrt{N}\left( \mathbb{E}_{\mathbb{\hat{P}}^{N}}\left[ \phi \left( U,%
\mathbb{E}_{\mathbb{\hat{P}}^{N}}[U]\right) \mc C(X)\right] -\mathbb{E}_{%
\mathbb{P}}\left[ \phi (U,\mu )\mc C(X)\right] \right) 
\Rightarrow \mathcal{N}(0,\cov\left( \mathbb{E}_{\mathbb{P}}\left[ \phi
_{z}(U,\mu )\mc C(X)\right] U+\phi (U,\mu )\mc C(X)\right) ).
\end{eqnarray*}
\end{lemma}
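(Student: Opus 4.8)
The plan is a multivariate delta-method argument: expand $\phi$ in its second argument around $\mu$, isolate the two linear terms that drive the Gaussian limit, and show the quadratic Taylor remainder is negligible after scaling by $\sqrt N$. First I would set $\hat\mu_N \Let \EE_{\Pnom}[U] = N^{-1}\sum_{i=1}^N u_i$, so the object of interest is $N^{-1}\sum_{i=1}^N \phi(u_i,\hat\mu_N)\mc C(x_i)$. Since $\EE_\PP\|U\|_2^2<\infty$ by Assumption~\ref{assump:infinite}\ref{assump:1.2moments}, the strong law gives $\hat\mu_N\to\mu$ a.s., so the event $E_N \Let \{\|\hat\mu_N - \mu\|_2 < v\}$ has probability tending to $1$; I would carry out the expansion on $E_N$, where $\phi_z(u_i,\cdot)$ is defined and Lipschitz along $[\mu,\hat\mu_N]$. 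By the fundamental theorem of calculus,
\[
\phi(u_i,\hat\mu_N) = \phi(u_i,\mu) + \phi_z(u_i,\mu)(\hat\mu_N - \mu) + r_i,\qquad r_i \Let \int_0^1 \bigl(\phi_z(u_i,\mu+t(\hat\mu_N - \mu)) - \phi_z(u_i,\mu)\bigr)(\hat\mu_N-\mu)\,\mathrm{d}t.
\]
Averaging against $\mc C(x_i)$ and subtracting $\EE_\PP[\phi(U,\mu)\mc C(X)]$ splits $\sqrt N\bigl(\EE_{\Pnom}[\phi(U,\hat\mu_N)\mc C(X)] - \EE_\PP[\phi(U,\mu)\mc C(X)]\bigr)$ into (i) $N^{-1/2}\sum_i\bigl(\phi(u_i,\mu)\mc C(x_i) - \EE_\PP[\phi(U,\mu)\mc C(X)]\bigr)$, (ii) $\bigl(N^{-1}\sum_i \phi_z(u_i,\mu)\mc C(x_i)\bigr)\sqrt N(\hat\mu_N - \mu)$, and (iii) $N^{-1/2}\sum_i r_i\mc C(x_i)$.

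For the remainder (iii), I would use the Lipschitz bound \eqref{lma:assump} to get $\|r_i\|_2 \le \tfrac12 M(u_i)\|\hat\mu_N-\mu\|_2^2$, hence $\|N^{-1/2}\sum_i r_i\mc C(x_i)\|_2 \le \tfrac12 \bigl(N^{-1}\sum_i M(u_i)\bigr)\sqrt N\,\|\hat\mu_N - \mu\|_2^2$; since $\EE_\PP[M(U)]<\infty$ the first factor is $O_p(1)$ by the strong law while $\sqrt N\,\|\hat\mu_N - \mu\|_2^2 = O_p(N^{-1/2})=o_p(1)$, so (iii) $=o_p(1)$. For (ii), Lemma~\ref{lma:lln} — equivalently the strong law, using $\EE_\PP\|\phi_z(U,\mu)\|_2<\infty$ and $|\mc C|\le 1$ — gives $N^{-1}\sum_i \phi_z(u_i,\mu)\mc C(x_i) \to \EE_\PP[\phi_z(U,\mu)\mc C(X)]$ a.s.; combined with $\sqrt N(\hat\mu_N - \mu) = N^{-1/2}\sum_i(u_i-\mu) = O_p(1)$ and Slutsky's lemma, (ii) equals $\EE_\PP[\phi_z(U,\mu)\mc C(X)]\,N^{-1/2}\sum_i(u_i-\mu)$ up to $o_p(1)$.

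Collecting the pieces, on $E_N$ I obtain
\[
\sqrt N\bigl(\EE_{\Pnom}[\phi(U,\hat\mu_N)\mc C(X)] - \EE_\PP[\phi(U,\mu)\mc C(X)]\bigr) = \frac{1}{\sqrt N}\sum_{i=1}^N W_i + o_p(1),
\]
where $W_i \Let \phi(u_i,\mu)\mc C(x_i) - \EE_\PP[\phi(U,\mu)\mc C(X)] + \EE_\PP[\phi_z(U,\mu)\mc C(X)](u_i-\mu)$ are i.i.d.\ mean-zero $\R^m$-valued vectors with finite second moment (using $\EE_\PP\|\phi(U,\mu)\|_2^2<\infty$, $\EE_\PP\|U\|_2^2<\infty$, $|\mc C|\le1$). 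The multivariate Lindeberg--L\'evy CLT then gives convergence in distribution to $\mc N\bigl(0,\cov(W_1)\bigr) = \mc N\bigl(0,\cov(\EE_\PP[\phi_z(U,\mu)\mc C(X)]U + \phi(U,\mu)\mc C(X))\bigr)$, which is the claim. The hard part will be the bookkeeping around localization and joint coherence: the expansion is only legitimate on $E_N$, so the $o_p(1)$ statements must be read on events of probability tending to $1$; and since pieces (i) and (ii) are functionals of the \emph{same} sample, they must be merged into the single empirical average $N^{-1/2}\sum_i W_i$ before the CLT is applied, rather than adding two marginal weak limits.
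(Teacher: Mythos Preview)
Your proposal is correct and follows essentially the same route as the paper: add and subtract $\EE_{\Pnom}[\phi(U,\mu)\mc C(X)]$, expand $\phi$ in its second argument via the fundamental theorem of calculus, kill the quadratic Taylor remainder with the Lipschitz bound \eqref{lma:assump} and $\sqrt N\|\hat\mu_N-\mu\|_2^2=o_p(1)$, replace the empirical Jacobian average by its population counterpart via the strong law, and then merge the two linear pieces into a single i.i.d.\ sum before invoking the multivariate CLT. One small slip: your citation of Lemma~\ref{lma:lln} for the Jacobian SLLN is a misreference (that lemma concerns the $(\cdot)^-$ term, not $\phi_z$), but since you immediately clarify that you mean the ordinary strong law under $\EE_\PP\|\phi_z(U,\mu)\|_2<\infty$, the argument is unaffected.
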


\begin{lemma}
\label{lma:lln}Suppose Assumption \ref{assumption} and \ref{assump:infinite} are enforced. Then, we have%
\begin{eqnarray*}
\sqrt{N}\mathbb{E}_{\mathbb{\hat{P}}^{N}}\left[ \left( -\gamma ^{\top
}\phi \left( U,\mathbb{E}_{\mathbb{\hat{P}}^{N}}[U]\right) +\sqrt{N}%
d(X)\right) ^{-}\mc C(X)\right] 
\overset{p}{\longrightarrow }-\frac{1}{2}f(0)\mathbb{E}_{\PP}\left[ \left.
\left( \gamma ^{\top }\phi \left( U,\mu\right) \right) ^{2}\mathbb{I}%
\{\gamma ^{\top }\phi \left( U,\mu\right) \geq 0\}\right\vert d(X\mathcal{)}%
=0\right] ,
\end{eqnarray*}%
uniformly over $\left\Vert \gamma \right\Vert _{2}\leq B.$
\end{lemma}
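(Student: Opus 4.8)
The plan is to localise the expectation to a vanishing neighbourhood of the decision boundary, to replace the empirical objects $\EE_{\Pnom}[\cdot]$ and $\wh\mu_N\Let\EE_{\Pnom}[U]$ by their population counterparts, and to evaluate the resulting population integral through a change of variables that zooms into the boundary at rate $\sqrt{N}$. Using $(t)^{-}=-(-t)^{+}$ and the relation $\Phi(X)=(2\mc C(X)-1)d(X)$ (so that $\mc C(X)=1$ forces $\Phi(X)=d(X)\ge0$), the left-hand side of the claim equals
\[
T_N(\gamma)\Let -\sqrt{N}\,\EE_{\Pnom}\!\Big[\big(\gamma^\top\phi(U,\wh\mu_N)-\sqrt{N}\,\Phi(X)\big)^{+}\mathbb{I}\{\mc C(X)=1\}\Big].
\]
A summand is nonzero only when $0\le\Phi(X)<\gamma^\top\phi(U,\wh\mu_N)/\sqrt{N}\le B\|\phi(U,\wh\mu_N)\|/\sqrt{N}$; since the density of $\Phi(X)$ is bounded near $0$ by Assumption~\ref{assumption}\ref{assump:1.1}, there are only $O_p(\sqrt{N})$ such ``active'' data points, each contributing a term between $0$ and $\gamma^\top\phi(U,\wh\mu_N)$, so the prefactor $\sqrt{N}/N=N^{-1/2}$ already gives $T_N(\gamma)=O_p(1)$ and fixes the scaling.

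I would first remove the gap between $\wh\mu_N$ and $\mu$. Since $t\mapsto(t)^{-}$ is $1$-Lipschitz and, by Assumption~\ref{assump:infinite}\ref{assump:2.2} and a mean-value argument, $\|\phi(U,\wh\mu_N)-\phi(U,\mu)\|\le\big(\|\phi_z(U,\mu)\|+M(U)\|\wh\mu_N-\mu\|\big)\|\wh\mu_N-\mu\|$ on $\{\|\wh\mu_N-\mu\|<v\}$, one bounds $\sup_{\|\gamma\|\le B}|T_N(\gamma)-\tilde T_N(\gamma)|$, where $\tilde T_N$ is defined from $\phi(U,\mu)$ in place of $\phi(U,\wh\mu_N)$, by $\|\wh\mu_N-\mu\|$ times a factor whose expectation is $\sqrt{N}\cdot O(N^{-1/2})=O(1)$ by the boundary localisation; as $\|\wh\mu_N-\mu\|=o_p(1)$ this gap is $o_p(1)$. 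Next I would pass from the empirical mean $\tilde T_N(\gamma)=\sqrt{N}\cdot\frac1N\sum_{i}Z_i(\gamma)$, $Z_i(\gamma)=\big(-\gamma^\top\phi(u_i,\mu)+\sqrt{N}\,d(x_i)\big)^{-}\mc C(x_i)$, to its expectation: a truncation argument combining the density bound (Assumption~\ref{assumption}\ref{assump:1.1}) with square-integrability of $\phi(U,\mu)$ (Assumption~\ref{assump:infinite}\ref{assump:1.2moments}) gives $\EE[Z_1(\gamma)^2]\to0$ uniformly over $\|\gamma\|\le B$, hence $\mathrm{Var}(\tilde T_N(\gamma))=\mathrm{Var}(Z_1(\gamma))\to0$. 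Uniformity over the compact ball $\{\|\gamma\|\le B\}$ follows because $\gamma\mapsto\tilde T_N(\gamma)$ is Lipschitz with random constant $\sqrt{N}\,\EE_{\Pnom}[\|\phi(U,\mu)\|\mathbb{I}\{\text{active}\}\mc C(X)]=O_p(1)$ and the limit identified below is continuous, so pointwise convergence and equicontinuity upgrade to uniform convergence in probability.

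The core computation is $\lim_N\EE[\tilde T_N(\gamma)]$. Disintegrating over $\Phi(X)=t$ near the boundary (the mass of $\Phi(X)$ outside $(-v,v)$ contributing negligibly) and writing $\nu_t$ for the conditional law of $\phi(U,\mu)$ given $\Phi(X)=t$ (with $\{\Phi(X)=0\}$ $\PP$-null, so $\mathbb{I}\{\mc C(X)=1\}=\mathbb{I}\{\Phi(X)>0\}$ a.s.), the substitution $s=\sqrt{N}\,t$ gives
\[
\EE[\tilde T_N(\gamma)]=-\int_0^\infty\EE_{\nu_{s/\sqrt{N}}}\!\big[(\gamma^\top v-s)^{+}\big]\,f(s/\sqrt{N})\,\mathrm{d}s.
\]
Here $f(s/\sqrt{N})\to f(0)$ by continuity of $f$, and $W_1(\nu_{s/\sqrt{N}},\nu_0)\to0$ (Assumption~\ref{assump:finite}, equivalently Assumption~\ref{assump:infinite}\ref{assump:1.4}) forces $\EE_{\nu_{s/\sqrt{N}}}[(\gamma^\top v-s)^{+}]\to\EE_{\nu_0}[(\gamma^\top v-s)^{+}]$ since $v\mapsto(\gamma^\top v-s)^{+}$ is $\|\gamma\|$-Lipschitz. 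The integrand is dominated, uniformly in $N$ and in $\|\gamma\|\le B$, by $(\sup f)\,\EE_{\nu_{s/\sqrt{N}}}[B\|v\|\,\mathbb{I}\{B\|v\|>s\}]\le C\min\{1,s^{-1-\epsilon_0}\}$ via Markov's inequality and $\sup_t\EE_{\nu_t}\|v\|^{2+\epsilon_0}<\infty$, and $s\mapsto C\min\{1,s^{-1-\epsilon_0}\}$ is integrable on $(0,\infty)$; dominated convergence then yields $\EE[\tilde T_N(\gamma)]\to-f(0)\,\EE_{\nu_0}\!\big[\int_0^\infty(\gamma^\top v-s)^{+}\,\mathrm{d}s\big]$. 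Since $\int_0^\infty(a-s)^{+}\,\mathrm{d}s=\tfrac12(a^{+})^{2}$, this limit equals $-\tfrac12 f(0)\,\EE_{\nu_0}[(\gamma^\top v)^{2}\mathbb{I}\{\gamma^\top v\ge0\}]=-\tfrac12 f(0)\,\EE_{\PP}[(\gamma^\top\phi(U,\mu))^{2}\mathbb{I}\{\gamma^\top\phi(U,\mu)\ge0\}\mid d(X)=0]$, which is exactly the claimed limit; uniformity in $\gamma$ follows again from Lipschitz-in-$\gamma$ bounds (using the $(2+\epsilon_0)$-moment once more) and continuity of the limit, and combining this with the reductions above closes the argument.

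I expect the main obstacle to be this last interchange of limit and integral: exhibiting a genuinely $N$-independent, integrable-in-$s$ dominating function — which is precisely what the uniform $(2+\epsilon_0)$-conditional-moment bound in Assumption~\ref{assump:infinite}\ref{assump:1.4} is calibrated to provide — and carrying the Wasserstein convergence $\nu_{s/\sqrt{N}}\to\nu_0$ through the unbounded but Lipschitz functional $v\mapsto(\gamma^\top v-s)^{+}$. A secondary difficulty is that replacing $\wh\mu_N$ by $\mu$ perturbs which data points lie within $O(N^{-1/2})$ of the boundary, so that error must be handled through the joint scaling of $\|\wh\mu_N-\mu\|=O_p(N^{-1/2})$ against the $O_p(\sqrt{N})$ count of active points rather than term by term, and keeping uniformity over $\|\gamma\|\le B$ throughout requires the Lipschitz-in-$\gamma$ controls noted above.
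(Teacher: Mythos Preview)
Your proposal is correct and follows essentially the same four-step architecture as the paper's proof: (i) replace $\wh\mu_N$ by $\mu$ via the Lipschitzness of $(\cdot)^{-}$ and a Taylor bound on $\phi$, (ii) compute the population limit by disintegrating on $\Phi(X)=t$ and rescaling $s=\sqrt{N}t$, (iii) pass from empirical to population means by showing $\EE[Z_1(\gamma)^2]\to0$, and (iv) upgrade to uniform convergence in $\gamma$ via a stochastic Lipschitz constant that is $O_p(1)$. The only cosmetic differences are that you invoke dominated convergence directly in step~(ii) where the paper truncates at a level $c$ and then sends $c\to\infty$, and you use Chebyshev in step~(iii) where the paper cites Durrett's triangular-array weak law; the conditions verified are the same.
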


We are now ready to prove Theorem~\ref{thm:clt}.
\begin{proof}[Proof of Theorem \ref{thm:clt}]
Recall that
\begin{eqnarray*}
\mc D( \mathbb{\hat{P}}^{N}) &=&\max_{\gamma \in
\mathbb{R}^{m}}\frac{1}{N}\left\{\sum_{i \in [N]} \left(d(x_i) +  \left(1-2\mc C(x_i) \right)\gamma^\top \phi(u_i, \EE_{\Pnom}[U]) \right)^{-}+\gamma^\top \mc C(x_i)  \phi(u_i, \EE_{\Pnom}[U])\right \} \\
&=&\max_{\gamma \in \mathbb{R}^{m}} \left\{\frac{1}{N}\sum_{i \in [N]} \gamma^\top \mc C(x_i)  \phi(u_i, \EE_{\Pnom}[U])\right.+ \\
&&\left.\frac{1}{N} \sum_{i \in [N]} \left(d(x_i) - \gamma^\top  \phi(u_i, \EE_{\Pnom}[U]) \right)^{-}\mathcal{C}(x_i) + \left(d(x_i) + \gamma^\top  \phi(u_i, \EE_{\Pnom}[U]) \right)^{-}(1-\mathcal{C}(x_i))\right \}.
\end{eqnarray*}%
 We first rescale $\gamma \leftarrow \gamma \sqrt{N}$ and thus
\begin{eqnarray*}
&&N\mc D( \mathbb{\hat{P}}^{N}) \\
&=&\sqrt{N}\max_{\gamma \in \mathbb{R}^{m}}\left\{ \gamma ^{\top }\mathbb{E}%
_{\mathbb{\hat{P}}^{N}}\left[ \phi \left( U,\mathbb{E}_{\mathbb{\hat{P}}%
^{N}}[U ]\right)\mc C(X) \right] \right. + \\
&&\left. \mathbb{E}_{\mathbb{\hat{P}}^{N}}\left[ \left(\sqrt{N}d(X) - \gamma^\top  \phi(U, \EE_{\Pnom}[U]) \right)^{-}\mathcal{C}(X) + \left(\sqrt{N}d(X) + \gamma^\top  \phi(U, \EE_{\Pnom}[U]) \right)^{-}(1-\mathcal{C}(X)) \right] \right\} .
\end{eqnarray*}%

To ease the notation, we denote $\lambda _{i}=\phi \left( u_{i},\mathbb{E}_{%
\mathbb{\hat{P}}^{N}}[U ]\right) $. By Lemma \ref{lma:lln}, we have
\begin{eqnarray*}
&&\frac{1}{\sqrt{N}}\sum_{i=1}^{N}\left( -\gamma ^{\top }\lambda _{i}+N^{1/2}d %
(x_{i})\right) ^{-}\mc (1-\mc C(x_i)) \\
&\overset{p}{\longrightarrow }&-\frac{1}{2}f(0)\mathbb{E}_{\PP}\left[ \left.
\left( \gamma ^{\top }\phi \left( U,\mu\right) \right) ^{2}\mathbb{%
I}\{\gamma ^{\top }\phi \left( U,\mu\right) \geq 0\}\right\vert d(X\mathcal{)}=0\right] \\
&=&-\frac{1}{2}f(0)\mathbb{E}_{\PP}\left[ \left. \left( \gamma ^{\top }\phi \left(
U,\mu\right) \right) ^{2}\mathbb{I}\{\gamma ^{\top }\phi \left( U,%
\mu\right) \geq 0\}\right\vert d(X)=0 %
\right] ,
\end{eqnarray*}%
and similarly, we have
\begin{equation*}
\frac{1}{\sqrt{N}}\sum_{i=1}^{N}\left( \gamma ^{\top }\lambda _{i}+N^{1/2}d %
(x_{i})\right) ^{-} (1-\mc C(x_i)) \overset{p}{\longrightarrow }-\frac{1}{2}f(0)\mathbb{E}_{\PP}\left[
\left. \left( \gamma ^{\top }\phi \left( U,\mu\right) \right) ^{2}%
\mathbb{I}\{\gamma ^{\top }\phi \left( U,\mu\right)
<0\}\right\vert d(X)=0 \right] .
\end{equation*}%
Therefore, we have
\begin{eqnarray*}
&&\sqrt{N}\mathbb{E}_{\mathbb{\hat{P}}^{N}}\left[ \left(\sqrt{N}d(X) - \gamma^\top  \phi(U, \EE_{\Pnom}[U]) \right)^{-}\mathcal{C}(X) + \left(\sqrt{N}d(X) + \gamma^\top  \phi(U, \EE_{\Pnom}[U]) \right)^{-}(1-\mathcal{C}(X)) \right] \\
&&\overset{p}{\longrightarrow }-\frac{1}{2}f(0)\mathbb{E}_{\PP}\left[ \left.
\left( \gamma ^{\top }\phi \left( U,\mu\right) \right)
^{2}\right\vert d(X)=0 \right] .
\end{eqnarray*}%
We denote
\begin{eqnarray*}
V_{N} &=&\sqrt{N}\mathbb{E}_{\mathbb{\hat{P}}^{N}}\left[ \phi \left( U,%
\mathbb{E}_{\mathbb{\hat{P}}^{N}}[U ]\right) \mc C(X)\right] ,\text{ and} \\
M_{N}(\gamma ) &=&\frac{1}{\sqrt{N}}\sum_{i=1}^{N}\left[ \left( -\gamma
^{\top }\lambda _{i}+N^{1/2}d(x_{i})\right) ^{-}\mc C(X) \}+\left( \gamma ^{\top
}\lambda _{i}+N^{1/2}d(x_{i})\right) ^{-}(1-\mc C(X))\right] \text{ .}
\end{eqnarray*}%
To proceed, we rely on the following lemma.

\begin{lemma}
\label{lma:compactifaction}Suppose Assumption \ref{assumption} is enforced.
Then, for every $\varepsilon >0,$ there exists $N_{0}>0$ and $b\in (0,\infty
)$ such that for all $N\geq $ $N_{0}$,
\begin{equation*}
\mathbb{P}\left( \sup_{\left\Vert \gamma \right\Vert _{2}>b}\left\{ \gamma
^{\top }V_{N}+M_{N}(\gamma )\right\} >0\right) \leq \varepsilon .
\end{equation*}%
\end{lemma}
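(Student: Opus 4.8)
The plan is to use concavity to collapse the supremum over the unbounded region $\{\|\gamma\|_2 > b\}$ onto the fixed sphere $\{\|\gamma\|_2 = b\}$, and then to bound the resulting quantity by combining tightness of $V_N$ with Lemma~\ref{lma:lln}. Write $h_N(\gamma) \Let \gamma^\top V_N + M_N(\gamma)$. Each summand defining $M_N$ is the concave function $(x)^- = \min\{x,0\}$ composed with an affine function of $\gamma$, weighted by $\mc C(x_i)$ or $1-\mc C(x_i)$ (both in $\{0,1\}$); hence $M_N$, and so $h_N$, is concave on $\mathbb{R}^m$. Moreover $\sqrt{N}\,d(x_i) \ge 0$ forces $(\sqrt N d(x_i))^- = 0$, so $M_N(0) = 0$ and $h_N(0) = 0$. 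Consequently, if $h_N(\gamma_0) > 0$ for some $\gamma_0$ with $r \Let \|\gamma_0\|_2 > b$, then, since $\tfrac{b}{r}\gamma_0$ is a convex combination of $\gamma_0$ and $0$, concavity gives $h_N(\tfrac{b}{r}\gamma_0) \ge \tfrac{b}{r}h_N(\gamma_0) + (1-\tfrac{b}{r})h_N(0) = \tfrac{b}{r}h_N(\gamma_0) > 0$, with $\|\tfrac{b}{r}\gamma_0\|_2 = b$. Therefore $\{\sup_{\|\gamma\|_2 > b} h_N > 0\} \subseteq \{\sup_{\|\gamma\|_2 = b} h_N > 0\}$, and it suffices to make the probability of the latter event small.

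On the sphere, parametrize $\gamma = b\theta$ with $\|\theta\|_2 = 1$ and bound $h_N(b\theta) \le b\|V_N\|_2 + M_N(b\theta)$. By Lemma~\ref{lma:aux_clt}, under $\mc H_0$ the sequence $V_N$ converges in distribution to a centred Gaussian, hence is tight; given $\varepsilon > 0$ I can therefore pick $K \in (0,\infty)$ and $N_1$ with $\mathbb{P}(\|V_N\|_2 > K) \le \varepsilon/2$ for all $N \ge N_1$. It then remains to choose $b$ so that $\mathbb{P}(\sup_{\|\theta\|_2 = 1} M_N(b\theta) > -bK) \le \varepsilon/2$ for all large $N$. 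For this I apply Lemma~\ref{lma:lln}, together with its symmetric analogue for the $(1-\mc C)$-weighted term recorded in the proof of Theorem~\ref{thm:clt}, with bound parameter $B = b$ (legitimate since $\|b\theta\|_2 = b$): this yields, uniformly over $\|\theta\|_2 = 1$, the convergence $M_N(b\theta) \overset{p}{\longrightarrow} -\tfrac12 b^2 f(0)\,\EE_{\PP}[(\theta^\top\phi(U,\mu))^2 \mid d(X) = 0] = -\tfrac12 b^2 \theta^\top S\theta$, where $S = f(0)\Sigma_1$. Since $f(0) > 0$ and $\Sigma_1 \succ 0$ by Assumption~\ref{assumption}, $\lambda \Let \lambda_{\min}(S) > 0$, and by the uniformity $\sup_{\|\theta\|_2 = 1} M_N(b\theta) \overset{p}{\longrightarrow} -\tfrac12 b^2\lambda$. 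Now fix $b > 2K/\lambda$, so $-\tfrac12 b^2\lambda < -bK$; then $\mathbb{P}(\sup_{\|\theta\|_2 = 1} M_N(b\theta) > -bK) \to 0$, giving $N_2$ with this probability $\le \varepsilon/2$ for $N \ge N_2$. With $N_0 = \max\{N_1, N_2\}$, adding the two estimates gives $\mathbb{P}(\sup_{\|\gamma\|_2 > b} h_N(\gamma) > 0) \le \varepsilon$ for all $N \ge N_0$, as claimed.

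The point that needs care is the order of the constants: $\varepsilon$ determines $K$ through tightness of $V_N$, then $K$ determines $b$ through $b > 2K/\lambda$, and only afterwards is $N_0$ chosen; combined with the requirement that $M_N(b\theta) \to -\tfrac12 b^2\theta^\top S\theta$ hold \emph{uniformly} over the sphere, which is exactly what Lemma~\ref{lma:lln} supplies once applied with $B = b$ on the fixed compact set $\{\|\gamma\|_2 \le b\}$. The concavity reduction in the first step is what removes the need to estimate $M_N$ directly at arbitrarily large $\gamma$; tackling that head-on would be the real obstacle, as it would require a counting argument lower-bounding the number of sample points within $O(N^{-1/2})$ of the decision boundary whose $\phi$-values point in a direction making the negative parts large, using Assumption~\ref{assumption}\ref{assump:1.1} and positive-definiteness of $\Sigma_1$.
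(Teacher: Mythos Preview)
Your proof is correct and follows essentially the same strategy as the paper: reduce the supremum over $\{\|\gamma\|_2>b\}$ to the sphere $\{\|\gamma\|_2=b\}$, then combine tightness of $V_N$ with the uniform convergence of $M_N$ from Lemma~\ref{lma:lln}. Your concavity argument for $h_N$ (using $h_N(0)=0$) is a clean repackaging of the paper's scaling inequality $M_N(\gamma)\le \tfrac{\|\gamma\|_2}{b}M_N\bigl(\tfrac{b}{\|\gamma\|_2}\gamma\bigr)$, which is itself just concavity of $M_N$ together with $M_N(0)=0$; and where the paper routes the sphere bound through the truncated expression in~\eqref{eqn:lln_c_bound}, you invoke the full quadratic limit $-\tfrac12\gamma^\top S\gamma$ and $\lambda_{\min}(S)>0$ directly, which is slightly more transparent but not a different idea.
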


The proof of Lemma \ref{lma:compactifaction} is furnished in \ref%
{sec:proofs}. Notice that $\mc D( \Pnom) \geq 0$ (choosing $\gamma =0),$ Lemma \ref%
{lma:compactifaction} implies that when $N\geq N_{0}$,
\begin{equation*}
\mathbb{P}\left\{ N\mc D(\Pnom)
=\sup_{\left\Vert \gamma \right\Vert _{2}\leq b}\left\{ \gamma ^{\top
}V_{N}+M_{N}(\gamma )\right\} \right\} \geq 1-\varepsilon .
\end{equation*}%
By Lemmas \ref{lma:aux_clt} and \ref{lma:lln}, we have%
\begin{eqnarray*}
\sup_{\left\Vert \gamma \right\Vert _{2}\leq b}\left\{ \gamma ^{\top
}V_{N}+M_{N}(\gamma )\right\} &\Rightarrow &\sup_{\left\Vert \gamma
\right\Vert _{2}\leq b}\left\{ \gamma ^{\top }V-\frac{1}{2}f(0)\mathbb{E}%
\left[ \left. \left( \gamma ^{\top }\phi \left( U,\mu\right)
\right) ^{2}\right\vert d(X)=0 \right] \right\}
\\
&=&\sup_{\left\Vert \gamma \right\Vert _{2}\leq b}\left\{ \gamma ^{\top }V-%
\frac{1}{2}\gamma ^{\top }S\gamma \right\} ,
\end{eqnarray*}%
where
\begin{equation*}
S=f(0)\mathbb{E}_{\PP}\left[ \left. \phi \left( U,\mu\right) \phi \left(
U,\mu\right) ^{\top }\right\vert d(X)=0\right] ,
\end{equation*}%
and $V$ is normally distributed with mean zero and covariance matrix
\begin{equation*}
\cov\left( \mathbb{E}_{\mathbb{P}}\left[ \phi _{z}(U,\mu )\mathcal{C}(X)\right] U+\phi (U,\mu )%
\mc C(X)\right) .
\end{equation*}%
By the arbitrariness of $\varepsilon ,$ we have the desired result:%
\begin{equation*}
N\times \mc D(\Pnom) \Rightarrow
\sup_{\gamma }\left\{ \gamma ^{\top }V-\frac{1}{2}\gamma ^{\top }S\gamma
\right\} .
\end{equation*}
This completes the proof.
\end{proof}

\subsection{Proofs of Section~\protect\ref{sec:composite}}
The proofs of Propositions \ref{prop:primal_eps} and \ref{prop:duality2} are not presented because they follow the same lines as the proofs of
Propositions~\ref{prop:primal} and \ref{prop:duality}.
%\begin{proof}[Proof of Proposition \ref{prop:duality2}]
%By the duality of the linear programming, we have the dual problem admits the form
%\begin{equation*}
%            \begin{array}{cl}
%                \max & \ds \frac{1}{N}\left\{\sum_{i \in [N]} \alpha^i +\gamma^\top \mc C(x_i) \left( \phi(u_i, \EE_{\Pnom}[U])-\epsilon\right) \right \} \\
%                \st & \alpha_i \leq 0 \quad \forall i \in [N], \quad \gamma \geq 0 \\
%                & \ds \alpha_i -\left(1-2\mc C(x_i) \right)  \gamma^\top \phi(u_i, \EE_{\Pnom}[U])\leq d(x_i) \quad \forall i \in [N]
%            \end{array}
%        %\label{problem:dual_eps2}
%    \end{equation*}
%    Then, by following a similar line of the proof of Proposition \ref{prop:duality}, we have the desired result.%we have
%\end{proof}
\begin{proof}[Proof of Theorem~\ref{thm:clt2}] Let
${\epsilon }^{\ast }=\mathbb{E}_{\mathbb{P}}\left[ \mc C(X)\phi \left( U,\mathbb{E}_{\mathbb{P}}[U]\right) \right]$. By following the similar arguments with the proof of Theorem \ref{thm:clt}, we have
\begin{eqnarray*}
&&N\mc D_{\epsilon }(\mathbb{\hat{P}}^{N}) \\
&=&N\sup_{\gamma \in \mathbb{R}_{+}^{m}}\left\{ -\gamma ^{\top }\epsilon +%
\frac{1}{N}\left\{ \sum_{i\in \lbrack N]}(1-2\mc C(x_{i}))\phi (u_{i},\EE_{%
\Pnom}[U])+\sum_{i\in \lbrack N]}\mc C(x_{i})\phi (u_{i},\EE_{\Pnom%
}[U])\right\} \right\}  \\
&=&\sqrt{N}\sup_{\gamma  \in \mathbb{R}_{+}^{m}}\left\{ \gamma ^{\top }\left(
\mathbb{E}_{\mathbb{\hat{P}}^{N}}\left[ \phi \left( U,\mathbb{E}_{\mathbb{%
\hat{P}}^{N}}[U]\right) \mc C(X)\}-\epsilon \right] \right) \right. + \\
&&\left. \mathbb{E}_{\mathbb{\hat{P}}^{N}}\left[ \left( -\gamma ^{\top }\phi
(U,\EE_{\Pnom}[U])+\sqrt{N}d(X)\right) ^{-}\mc C(X)+\left( \gamma ^{\top
}\phi (U,\EE_{\Pnom}[U])+\sqrt{N}d(X)\right) ^{-}\left( 1-\mc C(X)\right) %
\right] \right\}  \\
&=&\sup_{\gamma \in \mathbb{R}_{+}^{m}}\left\{ \gamma ^{\top }V_{N}+\sqrt{N}\gamma
^{\top }\left( \epsilon ^{\ast }-\epsilon \right) +M_{N}(\gamma )\right\} .
\end{eqnarray*}%
%\viet{sorry but I don't really see how to get to the last equality. What manipulation did you do to have $\epsilon\opt$ show up?}\nian{add and substract $\epsilon^*$}
Similarly, we still have
\begin{equation*}
\gamma ^{\top }V_{N}+M_{N}(\gamma )\Rightarrow \gamma ^{\top }V-\frac{1}{2}%
\gamma ^{\top }S\gamma ,
\end{equation*}%
uniformly over $\left\{ \gamma :\gamma  \in \mathbb{R}^m_+,\left\Vert \gamma
\right\Vert _{2}\leq B\right\} $. Therefore, we must enforce $\gamma ^{\top
}\left( \epsilon ^{\ast }-\epsilon \right) =0$ here. Then, we have
\begin{equation*}
N\mc D_{\epsilon }(\mathbb{\hat{P}}^{N})\Rightarrow \max_{\gamma \in \mathbb{R}^m_+,\gamma ^{\top }\left( \epsilon ^{\ast }-\epsilon \right)
=0}\left\{ \gamma ^{\top }V-\frac{1}{2}\gamma ^{\top }S\gamma \right\}
\preceq \max_{ \gamma \in \mathbb{R}_{+}^{m}}\left\{ \gamma ^{\top }V-\frac{1}{2}%
\gamma ^{\top }S\gamma \right\} .
\end{equation*}%
This completes the proof. \end{proof}

\subsection{Proofs of Technical Results}

\label{sec:proofs}

\begin{proof}[Proof of Lemma \ref{lma:aux_clt}]
By adding and subtracting the term~$\mathbb{E}_{\mathbb{\hat{P}}^{N}}\left[ \phi (U,\mu )\mathcal{C}(X)\right]$, we find
\be
\begin{array}{rl}
&\mathbb{E}_{\mathbb{\hat{P}}^{N}}\left[ \phi (U,\mathbb{E}_{\mathbb{\hat{P}%
}^{N}}[U])\mc C(X)%
\right] -\mathbb{E}_{\mathbb{P}}\left[ \phi (U,\mu )\mathcal{C}(X)\right]   \label{eqn:conv_1} \\
=&\mathbb{E}_{\mathbb{\hat{P}}^{N}}\left[ \phi (U,\mathbb{E}_{\mathbb{\hat{P%
}}^{N}}[U])\mathcal{C}(X)-\phi (U,\mu )\mathcal{C}(X)
\right] +\mathbb{E}_{\mathbb{\hat{P}}^{N}}\left[ \phi (U,\mu )\mathcal{C}(X)\right] -\mathbb{E}_{\mathbb{P}}%
\left[ \phi (U,\mu)\mathcal{C}(X)\right]
\end{array}
\ee
Under Assumption \ref{assump:infinite} and the fundamental theorem of calculus, the first term in the right-hand side of~\eqref{eqn:conv_1} becomes
\begin{eqnarray*}
\mathbb{E}_{\mathbb{\hat{P}}^{N}}\left[ \phi (U,\mathbb{E}_{\mathbb{\hat{P}%
}^{N}}[U])\mathcal{C}(X)-\phi (U,\mu )\mc C(X)%
\right]  =\mathbb{E}_{\mathbb{\hat{P}}^{N}}\left[ \int_{0}^{1}\phi _{z}\left( U,\mu
+t\left( \mathbb{E}_{\mathbb{\hat{P}}^{N}}[U]-\mu \right) \right)
\left( \mathbb{E}_{\mathbb{\hat{P}}^{N}}[U]-\mu \right) \mathcal{C}(X)\mathrm{d}t\right] .
\end{eqnarray*}%
Thanks to Assumption \ref{assump:infinite}, we have that
\begin{eqnarray*}
&&\left\Vert \mathbb{E}_{\mathbb{\hat{P}}^{N}}\left[ \int_{0}^{1}\phi
_{z}\left( U,\mu +t\left( \mathbb{E}_{\mathbb{\hat{P}}^{N}}[U]-\mu
\right) \right) \left( \mathbb{E}_{\mathbb{\hat{P}}^{N}}[U]-\mu
\right) \mc C(X)\mathrm{d}t%
\right] \right.  \\
&&\left. \qquad -\mathbb{E}_{\mathbb{\hat{P}}^{N}}\left[ \int_{0}^{1}\phi
_{z}\left( U,\mu \right) \left( \mathbb{E}_{\mathbb{\hat{P}}^{N}}[\psi
(U)]-\mu \right) \mc C(X)%
\mathrm{d}t\right] \right\Vert _{2} \\
&\leq &\frac{1}{2}\mathbb{E}_{\mathbb{\hat{P}}^{N}}\left[ M(U)\right]
\left\Vert \mathbb{E}_{\mathbb{\hat{P}}^{N}}[U]-\mu \right\Vert
_{2}^{2},
\end{eqnarray*}%
whenever $\left\Vert \mathbb{E}_{\hat{\mathbb{P}}^{N}}[U]-\mu \right\Vert
_{2}<\varepsilon _{\mu }$. Then, notice that we have
\begin{equation}
\lim_{N\rightarrow \infty }\frac{1}{2}\sqrt{N}\mathbb{E}_{\mathbb{\hat{P}}%
^{N}}\left[ M(U)\right] \left\Vert \mathbb{E}_{\mathbb{\hat{P}}^{N}}[\psi
(U)]-\mu \right\Vert _{2}^{2}=0\text{ almost surely,}
\label{eqn:o_p_1}
\end{equation}%
and
\begin{eqnarray*}
\mathbb{E}_{\mathbb{\hat{P}}^{N}}\left[ \int_{0}^{1}\phi _{z}\left( U,\mu
\right) \left( \mathbb{E}_{\mathbb{\hat{P}}^{N}}[U]-\mu \right)
\mc C(X)\mathrm{d}t\right] &=&\mathbb{E}_{\mathbb{\hat{P}}^{N}}\left[ \phi _{z}\left( U,\mu \right)
\mc C(X)\right] \left(
\mathbb{E}_{\mathbb{\hat{P}}^{N}}[U]-\mu \right) \} \\
&=&\left( \mathbb{E}_{\mathbb{P}}\left[ \phi _{z}\left( U,\mu \right)
\mc C(X)\right] +o_{p}\left(
1\right) \right) \left( \mathbb{E}_{\mathbb{\hat{P}}^{N}}[\psi
(U)]-\mu \right).
\end{eqnarray*}%
By multiplying $\sqrt{N}$ to both sides of equation~\eqref{eqn:conv_1}, we have
\begin{eqnarray*}
&&\sqrt{N}\left( \mathbb{E}_{\mathbb{\hat{P}}^{N}}\left[ \phi \left( U,%
\mathbb{E}_{\mathbb{\hat{P}}^{N}}[U]\right) \mathcal{C}(X)\right] -\mathbb{E}_{\mathbb{P}}\left[ \phi
(U,\mu )\mc C(X)\right]
\right)  \\
&=&\sqrt{N}\left( \mathbb{E}_{\mathbb{P}}\left[ \phi _{z}\left( U,\mu
\right) \mc C(X)\right]
+o_{p}\left( 1\right) \right) \left( \mathbb{E}_{\mathbb{\hat{P}}%
^{N}}[U]-\mu \right)  +\mathbb{E}_{\mathbb{\hat{P}}^{N}}\left[ \phi (U,\mu )\mathcal{C}(X)\right] -\mathbb{E}_{\mathbb{P}}%
\left[ \phi (U,\mu)\mathcal{C}(X)\right] +o_{p}(1) \\
&=&\sqrt{N}\mathbb{E}_{\mathbb{\hat{P}}^{N}}\Big[ \mathbb{E}_{\mathbb{P}}%
\big[ \phi _{z}\left( U,\mu \right) \mathcal{C}(X)\big] \left( U-\mu \right) +\phi (U,\mu )%
\mc C(X)-\mathbb{E}_{\mathbb{P%
}}\big[ \phi (U,\mu)\mathcal{C}(X)\big] \Big] +o_{p}(1) \\
&\Rightarrow &\mathcal{N}(0,\Sigma ),
\end{eqnarray*}%
where $\Sigma$ is the covariance matrix of
$\mathbb{E}_{\mathbb{P}}\left[ \phi _{z}(U,\mu )\mathcal{C}(X)\right] U+\phi (U,\mu )%
\mc C(X)$, namely
\begin{equation*}
\Sigma =\cov\left( \mathbb{E}_{\mathbb{P}}\left[ \phi _{z}(U,\mu )\mathcal{C}(X)\right] U+\phi (U,\mu )%
\mc C(X)\right) .
\end{equation*}%
This completes the proof.
\end{proof}

\begin{proof}[Proof of Lemma \ref{lma:lln}] \textbf{Step 1:} we first show
\begin{eqnarray*}
\sqrt{N}\mathbb{E}_{\mathbb{\hat{P}}^{N}}\left[ \left( -\gamma ^{\top
}\phi \left( U,\mathbb{E}_{\mathbb{\hat{P}}^{N}}[U]\right) +\sqrt{N}%
d(X)\right) ^{-}\mathcal{C}(X)\right] -\sqrt{N}\mathbb{E}_{\mathbb{\hat{P}}^{N}}\left[ \left( -\gamma ^{\top
}\phi \left( U,\mu\right) +\sqrt{N} d(X) \right) ^{-}\mathcal{C}(X)\right]
\overset{p}{\longrightarrow }0,
\end{eqnarray*}%
uniformly over $\left\Vert \gamma \right\Vert _{2}\leq B.$ When $\left\Vert
\mathbb{E}_{\mathbb{\hat{P}}^{N}}[U]-\mu\right\Vert _{2}<\varepsilon _{\mu },
$ we have
\begin{eqnarray*}
&&\sqrt{N}\mathbb{E}_{\mathbb{\hat{P}}^{N}}\left[ \left( -\gamma ^{\top
}\phi \left( U,\mathbb{E}_{\mathbb{\hat{P}}^{N}}[U]\right) +\sqrt{N}%
d(X)\right) ^{-}\mathcal{C}(X)\right] -\sqrt{N}\mathbb{E}_{\mathbb{\hat{P}}^{N}}\left[ \left( -\gamma ^{\top
}\phi \left( U,\mu\right) +\sqrt{N}d(X)\right) ^{-}\mc C(X)\right] \\
&\leq &N^{-1/2}\left\Vert \gamma \right\Vert _{2}\sum_{i=1}^{N}\left[
\left\Vert \phi \left( u_{i},\mathbb{E}_{\mathbb{\hat{P}}^{N}}[\psi
(U)]\right) -\phi \left( u_{i},\mu\right) \right\Vert _{2}\mathbb{I\{}
\mathcal{E}_{i}\}\right] ,
\end{eqnarray*}%
where the events $\mathcal{E}_i$ are defined by
\begin{equation*}
\mathcal{E}_{i}=\mathbb{\{}\left\Vert \gamma \right\Vert _{2}\left(
\left\Vert \phi \left( u_{i},\mu\right) \right\Vert _{2}+\left( \left\Vert
\phi _{z}\left( u_{i},\mu\right) \right\Vert _{2}+M(u_i)\varepsilon _{\mu }\right)
\varepsilon _{\mu }\right) {\geq }\sqrt{N}d(x_{i})\}.
\end{equation*}%
By a similar derivation with the proof of Lemma \ref{lma:aux_clt}, we have
\begin{eqnarray*}
&&N^{-1/2}\left\Vert \gamma \right\Vert _{2}\sum_{i=1}^{N}\left[ \left\Vert
\phi \left( u_{i},\mathbb{E}_{\mathbb{\hat{P}}^{N}}[U]\right) -\phi \left(
u_{i},\mu\right) \right\Vert _{2}^{-}\mathbb{I\{}\mathcal{E}_{i}\}\right] \\
&=&\frac{\left\Vert \gamma \right\Vert _{2}}{\sqrt{N}}\sum_{i=1}^{N}\left[
\int_{0}^{1}\left( \phi _{z}\left( u_{i},\mu +t\left( \mathbb{E}_{\mathbb{%
\hat{P}}^{N}}[U]-\mu \right) \right) \left( \mathbb{E}_{\mathbb{\hat{P}}%
^{N}}[U]-\mu \right) \mathrm{d}t\right) \mathbb{I\{}\mathcal{E}_{i}\}\right]
\\
&\leq &\left\Vert \gamma \right\Vert _{2}\sqrt{N}\left( \mathbb{E}_{\mathbb{%
\hat{P}}^{N}}[U]-\mu \right) \mathbb{E}_{\mathbb{\hat{P}}^{N}}\left[ \phi
_{z}\left( U,\mu \right) \mathbb{I}\{ \mathcal{E}_{i}\}\right] +\frac{1}{2}%
\left\Vert \gamma \right\Vert _{2}\sqrt{N}\mathbb{E}_{\mathbb{\hat{P}}%
^{N}}[M(U)]\left\Vert \mathbb{E}_{\mathbb{\hat{P}}^{N}}[U]-\mu \right\Vert
_{2}^{2}.
\end{eqnarray*}%
Since $\mathbb{I}\{ \mathcal{E}_{i}\}\rightarrow 0$ almost surely and $%
\mathbb{E}_{\mathbb{P}}[\phi _{z}(U,\mu )]<+\infty ,$ we have
\begin{equation*}
\left\Vert \gamma \right\Vert _{2}\sqrt{N}\left( \mathbb{E}_{\mathbb{\hat{P}}%
^{N}}[U]-\mu \right) \mathbb{E}_{\mathbb{\hat{P}}^{N}}\left[ \phi _{z}\left(
U,\mu \right) \mathbb{I}\{ \mathcal{E}_{i}\}\right] \overset{p}{%
\longrightarrow }0,
\end{equation*}%
uniformly over $\left\Vert \gamma \right\Vert _{2}\leq B.$ By combining
\begin{equation*}
\frac{1}{2}\left\Vert \gamma \right\Vert _{2}\sqrt{N}\mathbb{E}_{\mathbb{%
\hat{P}}^{N}}[M(U)]\left\Vert \mathbb{E}_{\mathbb{\hat{P}}^{N}}[
U]-\mu \right\Vert _{2}^{2}\rightarrow 0\text{ almost surely,}
\end{equation*}%
uniformly over $\left\Vert \gamma \right\Vert _{2}\leq B,$ we finish step 1.

\textbf{Step 2:} We claim that
\begin{eqnarray*}
\sqrt{N}\mathbb{E}_{\mathbb{P}}\left[ \left( -\gamma ^{\top }\phi \left(
U,\mu \right) +\sqrt{N}d(X)\right) ^{-}\mc C(X)\right]  \longrightarrow -\frac{1}{2}f(0)\mathbb{E}\left[ \left. \left( \gamma
^{\top }\phi \left( U,\mu \right) \right) ^{2}\mathbb{I}\{\gamma ^{\top
}\phi \left( U,\mu \right) \geq 0\}\right\vert d(X)=0\right] .
\end{eqnarray*}%
Notice that for any $c>0$, we have
\begin{eqnarray}
&&\sqrt{N}\mathbb{E}_{\mathbb{P}}\left[ \left( -\gamma ^{\top }\phi \left(
U,\mu \right) +\sqrt{N}d(X)\right) ^{-}\mathcal{C}(X)\right]   \notag \\
&=&\sqrt{N}\int_{0}^{+\infty }\mathbb{E}_{\mathbb{P}}\left[ \left. \left(
-\gamma ^{\top }\phi \left( U,\mu \right) +\sqrt{N}d(X)\right)
^{-}\right\vert d(X)\left( 2\mathcal{C}(X)-1\right) =t\right] \mathrm{d}%
\mathbb{P}_{\Phi }(t).  \notag \\
&=&\sqrt{N}\int_{0}^{c/\sqrt{N}}\mathbb{E}_{\mathbb{P}}\left[ \left. \left(
-\gamma ^{\top }\phi \left( U,\mu \right) +\sqrt{N}d(X)\right)
^{-}\right\vert \Phi (X)=t\right] \mathrm{d}\mathbb{P}_{\Phi }(t)
\label{eqn:2terms_c} \\
&&+\sqrt{N}\int_{c/\sqrt{N}}^{+\infty }\mathbb{E}_{\mathbb{P}}\left[ \left.
\left( -\gamma ^{\top }\phi \left( U,\mu \right) +\sqrt{N}d(X)\right)
^{-}\right\vert \Phi (X)=t\right] \mathrm{d}\mathbb{P}_{\Phi }(t)
\end{eqnarray}%
We first analyze the first term in~\eqref{eqn:2terms_c}. By Assumption \ref%
{assumption}.\ref{assump:1.1}, when $N$ is sufficient large such that $c/\sqrt{N} <v$, we have
\begin{eqnarray*}
&&\sqrt{N}\int_{0}^{c/\sqrt{N}}\mathbb{E}_{\mathbb{P}}\left[ \left. \left(
-\gamma ^{\top }\phi \left( U,\mu \right) +\sqrt{N}d(X)\right)
^{-}\right\vert \Phi (X)=t\right] \mathrm{d}\mathbb{P}_{\Phi }(t) \\
&=&\sqrt{N}\int_{0}^{c/\sqrt{N}}\mathbb{E}_{\mathbb{P}}\left[ \left. \left(
-\gamma ^{\top }\phi \left( U,\mu \right) +\sqrt{N}d(X)\right)
^{-}\right\vert \Phi (X)=t\right] f(t)\mathrm{d}t.
\end{eqnarray*}%
By changing of the variable $s=\sqrt{N}t$ , we have%
\begin{eqnarray*}
&&\sqrt{N}\int_{0}^{c/\sqrt{N}}\mathbb{E}_{\mathbb{P}}\left[ \left. \left(
-\gamma ^{\top }\phi \left( U,\mu \right) +\sqrt{N}d_{\B}(X)\right)
^{-}\right\vert \Phi (X)=t\right] f(t)\mathrm{d}t \\
&=&\int_{0}^{c}\mathbb{E}_{\mathbb{P}}\left[ \left. \left( -\gamma ^{\top
}\phi \left( U,\mu \right) +s\right) ^{-}\right\vert \Phi (X)=N^{-1/2}s%
\right] f(N^{-1/2}s)\mathrm{d}s
\end{eqnarray*}%
By Assumption \ref{assumption}.\ref{assump:1.4}, we have for any $%
\varepsilon >0,$ any $0<c<+\infty ,$ there exists $N_{0},$ such that for $%
N>N_{0}$ and $s\leq c$,
\begin{eqnarray*}
&&\left\vert \mathbb{E}_{\mathbb{P}}\left[ \left. \left( -\gamma ^{\top
}\phi \left( U,\mu \right) +s\right) ^{-}\right\vert \Phi (X)=N^{-1/2}s%
\right] -\mathbb{E}_{\mathbb{P}}\left[ \left. \left( -\gamma ^{\top }\phi
\left( U,\mu \right) +s\right) ^{-}\right\vert \Phi (X)=0\right] \right\vert
\\
&\leq &\left\Vert \gamma \right\Vert _{\ast }\left\Vert \mathbb{E}_{\mathbb{P%
}}\left[ \left. \phi \left( U,\mu \right) \right\vert \Phi (X)=N^{-1/2}s%
\right] -\mathbb{E}_{\mathbb{P}}\left[ \left. \phi \left( U,\mu \right)
\right\vert \Phi (X)=0\right] \right\Vert  \\
&\leq &\left\Vert \gamma \right\Vert _{\ast }W_{1}\left( \left. \phi \left(
U,\mu \right) \right\vert \Phi (X)=N^{-1/2}s,\left. \phi \left( U,\mu
\right) \right\vert \Phi (X)=0\right) \leq \varepsilon .
\end{eqnarray*}%
Therefore, by taking $\varepsilon \downarrow 0,$ we have%
\begin{eqnarray*}
&&\left\vert \int_{0}^{c}\mathbb{E}_{\mathbb{P}}\left[ \left. \left( -\gamma
^{\top }\phi \left( U,\mu \right) +s\right) ^{-}\mathbb{I\{}h_{{}}\left( {X}%
\right) {\geq }\tau \}\right\vert \Phi (X)=N^{-1/2}s\right] f(N^{-1/2}s)%
\mathrm{d}s\right.  \\
&&\left. -\int_{0}^{c}\mathbb{E}_{\mathbb{P}}\left[ \left. \left( -\gamma
^{\top }\phi \left( U,\mu \right) +s\right) ^{-}\mathbb{I\{}h_{{}}\left( {X}%
\right) {\geq }\tau \}\right\vert \Phi (X)=0\right] f(N^{-1/2}s)\mathrm{d}%
s\right\vert  \overset{p}{\longrightarrow }0.
\end{eqnarray*}%
Then, the basic algebra and the mean value theorem for integrals give us
\begin{align}
&\int_{0}^{c}\mathbb{E}_{\mathbb{P}}\left[ \left. \left( -\gamma ^{\top
}\phi \left( U,\mu \right) +s\right) ^{-}\right\vert \Phi (X)=0\right]
f(N^{-1/2}s)\mathrm{d}s  \notag \\
=&\int_{0}^{c}\mathbb{E}_{\mathbb{P}}\left[ \left. \left( -\gamma ^{\top
}\phi \left( U,\mu \right) +s\right) ^{-}\right\vert d(X)=0\right]
f(N^{-1/2}s)\mathrm{d}s  \notag \\
=&f(\xi )\mathbb{E}_{\mathbb{P}}\left[ \int_{0}^{c}\left( \left. \left(
-\gamma ^{\top }\phi \left( U,\mu \right) +s\right) ^{-}\right\vert
d(X)=0\right) \mathrm{d}s\right]   \notag \\
=&f(\xi )\mathbb{E}_{\mathbb{P}}\left[ \int_{0}^{\min (c,\gamma ^{\top
}\phi \left( U,\mu \right) |\Phi (X)=0)}\left( \left. \left( -\gamma ^{\top
}\phi \left( U,\mu \right) +s\right) \mathbb{I}\{\gamma ^{\top }\phi \left(
U,\mu \right) \geq 0\}\right\vert d(X)=0\right) \mathrm{d}s\right]   \notag
\\
\rightarrow &-\frac{1}{2}f(0)\mathbb{E}_{\mathbb{P}}\left[ \min \{c,\gamma
^{\top }\phi \left( U,\mu \right) \}\left( \gamma ^{\top }\phi \left( U,\mu
\right) + \left(\gamma ^{\top }\phi \left( U,\mu \right) -c\right)^+
\right) \mathbb{I}\{\gamma ^{\top }\phi \left( U,\mu \right) \geq 0\}\big|d(X)=0%
\right] ,  \label{eqn:lln_c_bound}
\end{align}%
where $\xi \in \lbrack 0,N^{-1/2}c]$ and $(x)^+=\max\{x,0\}$.

We then deal with the second term in (\ref{eqn:2terms_c}). Let
\begin{equation*}
M_{\gamma }=\esssup_{t\geq 0}\mathbb{E}_{\mathbb{P}}\left[ \left\vert \gamma
^{\top }\phi \left( U,\mu \right) \right\vert ^{2+\epsilon _{0}}|\Phi (X)%
\mathcal{=}t\right] .
\end{equation*}%
For any $c\geq 0,$ we have
\begin{eqnarray*}
&&\sqrt{N}\int_{c/\sqrt{N}}^{+\infty }\mathbb{E}_{\mathbb{P}}\left[ \left.
\left( -\gamma ^{\top }\phi \left( U,\mu \right) +\sqrt{N}t\right)
^{-}\right\vert \Phi (X)=t\right] \mathrm{d}\mathbb{P}_{\Phi }(t) \\
&\geq &-\sqrt{N}\int_{c/\sqrt{N}}^{+\infty }\mathbb{E}_{\mathbb{P}}\left[
\gamma ^{\top }\phi \left( U,\mu \right) \mathbb{I}\{\left. \gamma ^{\top
}\phi \left( U,\mu \right) \geq \sqrt{N}t\right\vert \Phi (X)=t\right]
\mathrm{d}\mathbb{P}_{\Phi }(t) \\
&\geq &-\sqrt{N}\int_{c/\sqrt{N}}^{+\infty }\left( \frac{1}{\sqrt{N}t}%
\right) ^{1+\epsilon _{0}}\mathbb{E}_{\mathbb{P}}\left[ \left( \gamma ^{\top
}\phi \left( U,\mu \right) \right) ^{2+\epsilon _{0}}\mathbb{I}\{\left.
\gamma ^{\top }\phi \left( U,\mu \right) \geq \sqrt{N}t\right\vert \Phi (X)=t%
\right] \mathrm{d}\mathbb{P}_{\Phi }(t) \\
&\geq &-\left( \sqrt{N}\right) ^{-\epsilon _{0}}\int_{c/\sqrt{N}}^{+\infty }%
\frac{1}{t^{1+\epsilon _{0}}}\mathbb{E}_{\mathbb{P}}[\left\vert \gamma
^{\top }\phi \left( U,\mu \right) \right\vert ^{2+\epsilon _{0}}|\Phi (X)=t]%
\mathrm{d}\mathbb{P}_{\Phi }(t) \\
&\geq &-\left( \sqrt{N}\right) ^{-\epsilon _{0}}M_{\gamma }\int_{c/\sqrt{N}%
}^{+\infty }\frac{1}{t^{1+\epsilon _{0}}}\mathrm{d}\mathbb{P}_{\Phi }(t).
\end{eqnarray*}%
We pick $\varepsilon >0$ such that $\mathbb{P}_{\Phi }\left( \cdot \right) $
has density in $[0,\varepsilon ].$ Then, we have
\begin{eqnarray*}
&&\left( \sqrt{N}\right) ^{-\epsilon _{0}}M_{\gamma }\int_{c/\sqrt{N}%
}^{+\infty }\frac{1}{t^{1+\epsilon _{0}}}f(t)\mathrm{d}t \\
&=&M_{\gamma }\left( \sqrt{N}\right) ^{-\epsilon _{0}}\left(
\int_{\varepsilon }^{+\infty }\frac{1}{t^{1+\epsilon _{0}}}f(t)\mathrm{d}%
\mathbb{P}_{\Phi }(t)+\int_{c/\sqrt{N}}^{\varepsilon }\frac{1}{t^{1+\epsilon
_{0}}}f(t)\mathrm{d}t\right)  \\
&\leq &M_{\gamma }\left( \left( \sqrt{N}\right) ^{-\epsilon _{0}}\frac{1}{%
\varepsilon ^{1+\epsilon _{0}}}+\frac{1}{\epsilon _{0}}\left( \sqrt{N}%
\right) ^{-\epsilon _{0}}\left( \sqrt{N}/c\right) ^{\epsilon _{0}}f(\xi
)\right)  \\
&=&M_{\gamma }\left( \left( \sqrt{N}\right) ^{-\epsilon _{0}}\frac{1}{%
\varepsilon ^{1+\epsilon _{0}}}+\frac{1}{c^{\epsilon _{0}}\epsilon _{0}}%
f(\xi )\right) ,
\end{eqnarray*}%
where $\xi \in (c/\sqrt{N},\varepsilon ).\ $ By taking $\varepsilon
\downarrow 0,$ we have
\begin{equation*}
\liminf_{N\rightarrow +\infty }\sqrt{N}\int_{c/\sqrt{N}}^{+\infty }\mathbb{E}%
_{\mathbb{P}}\left[ \left. \left( -\gamma ^{\top }\phi \left( U,\mu \right) +%
\sqrt{N}t\right) ^{-}\right\vert \Phi (X)=t\right] \mathrm{d}\mathbb{P}%
_{\Phi }(t)\geq -\frac{M_{\gamma }f(0)}{c^{\epsilon _{0}}\epsilon _{0}}.
\end{equation*}%
Finally, by taking $c\uparrow +\infty ,$ we conclude step 2.

\textbf{Step 3: }We then apply weak law of triangular arrays \citet[Theorem
2.2.11]{ref:durrett2019probability}. We need to check
\begin{subequations}
\begin{eqnarray}
&&N \times \left[ \mathbb{P}\left( -\left( -\gamma ^{\top }\phi \left( U,\mathbb{\mu
}\right) +\sqrt{N}d(X)\right) \mathcal{C}(X)>\sqrt{N}\right) \right]
\rightarrow 0,\text{ and}  \label{eqn:step3_cond_1} \\
&&\mathbb{E}\left[ \left( \left( -\gamma ^{\top }\phi \left( U,\mu \right) +%
\sqrt{N}d(X)\right) ^{-}\right) ^{2}\mathcal{C}(X)\right] \rightarrow 0.
\label{eqn:step3_cond_2}
\end{eqnarray}%
For condition~\eqref{eqn:step3_cond_1}, we have
\end{subequations}
\begin{eqnarray*}
&&N\mathbb{P}\left( -\left( -\gamma ^{\top }\phi \left( U,\mu \right) +\sqrt{%
N}d(x_{i})\right) ^{-}\mathcal{C}(X)>\sqrt{N}\right)  \\
&\leq &N\mathbb{P}\left( \gamma ^{\top }\phi \left( U,\mu \right) \geq \sqrt{%
N}\right)  \\
&\leq &\frac{\mathbb{E}\left[ \left( \gamma ^{\top }\phi \left( U,\mathbb{%
\mu }\right) \right) ^{2+\epsilon _{0}}\right] }{\left( \sqrt{N}\right)
^{\epsilon _{0}}}\leq \frac{M_{\gamma }}{\left( \sqrt{N}\right) ^{\epsilon
_{0}}}\rightarrow 0.
\end{eqnarray*}

For condition~\eqref{eqn:step3_cond_2}, we have
\begin{eqnarray*}
&&\mathbb{E}\left[ \left( \left( -\gamma ^{\top }\phi \left( U,\mu \right) +%
\sqrt{N}d(X)\right) ^{-}\right) ^{2}\mathcal{C}(X)\right]  \\
&\leq &\mathbb{E}\left[ \left( \gamma ^{\top }\phi \left( U,\mu \right)
\right) ^{2}\mathbb{I\{}\gamma ^{\top }\phi \left( U,\mu \right) {\geq }%
\sqrt{N}d(X)\}\right]  \\
&=&\int_{0}^{+\infty }\mathbb{E}_{\mathbb{P}}\left[ \left( \gamma ^{\top
}\phi \left( U,\mu \right) \right) ^{2}\left. \mathbb{I}\left\{ \gamma
^{\top }\phi \left( U,\mu \right) \geq \sqrt{N}t\right\} \right\vert \Phi
(X)=t\right] \mathrm{d}\mathbb{P}_{\Phi }(t).
\end{eqnarray*}%
We pick $\varepsilon >0$ such that $\mathbb{P}_{\Phi }\left( \cdot \right) $
has density in $[0,\varepsilon ].$ Then, we have
\begin{eqnarray}
&&\int_{0}^{+\infty }\mathbb{E}_{\mathbb{P}}\left[ \left( \gamma ^{\top
}\phi \left( U,\mu \right) \right) ^{2}\left. \mathbb{I}\left\{ \gamma
^{\top }\phi \left( U,\mu \right) \geq \sqrt{N}t\right\} \right\vert \Phi
(X)=t\right] \mathrm{d}\mathbb{P}_{\Phi }(t)  \notag \\
&=&\int_{0}^{\varepsilon }\mathbb{E}_{\mathbb{P}}\left[ \left( \gamma ^{\top
}\phi \left( U,\mu \right) \right) ^{2}\left. \mathbb{I}\left\{ \gamma
^{\top }\phi \left( U,\mu \right) \geq \sqrt{N}t\right\} \right\vert \Phi
(X)=t\right] f(t)\mathrm{d}t  \label{eqn:step_3_first_term} \\
&&+\int_{\varepsilon }^{+\infty }\mathbb{E}_{\mathbb{P}}\left[ \left( \gamma
^{\top }\phi \left( U,\mu \right) \right) ^{2}\left. \mathbb{I}\left\{
\gamma ^{\top }\phi \left( U,\mu \right) \geq \sqrt{N}t\right\} \right\vert
\Phi (X)=t\right] \mathrm{d}\mathbb{P}_{\Phi }(t).
\label{eqn:step_3_second_term}
\end{eqnarray}%
For the first term (\ref{eqn:step_3_first_term}), we have
\begin{equation*}
\int_{0}^{\varepsilon }\mathbb{E}_{\mathbb{P}}\left[ \left( \gamma ^{\top
}\phi \left( U,\mu \right) \right) ^{2}\mathbb{I}\{\left. \gamma ^{\top
}\phi \left( U,\mu \right) \geq \sqrt{N}t\right\vert \Phi (X)=t\right] f(t)%
\mathrm{d}t\leq M_{\gamma }^{2/\left( 2+\epsilon _{0}\right) }\varepsilon
f(\xi ),
\end{equation*}%
where $\xi \in \lbrack 0,\varepsilon ].$ For the second term (\ref%
{eqn:step_3_second_term}) we have%
\begin{eqnarray*}
&&\int_{\varepsilon }^{+\infty }\mathbb{E}_{\mathbb{P}}\left[ \left( \gamma
^{\top }\phi \left( U,\mu \right) \right) ^{2}\mathbb{I}\{\left. \gamma
^{\top }\phi \left( U,\mu \right) \geq \sqrt{N}t\right\vert \Phi (X)=t\right]
\mathrm{d}\mathbb{P}_{\Phi }(t) \\
&\leq &\int_{\varepsilon }^{+\infty }\frac{\mathbb{E}_{\mathbb{P}}\left[
\left( \gamma ^{\top }\phi \left( U,\mu \right) \right) ^{2+\epsilon
_{0}}\Phi (X)=t\right] }{\left( \sqrt{N}t\right) ^{\epsilon _{0}}}\mathrm{d}%
\mathbb{P}_{\Phi }(t) \\
&\leq &\frac{M_{0}}{\left( \sqrt{N}\varepsilon \right) ^{\epsilon _{0}}}%
\rightarrow 0.
\end{eqnarray*}%
By taking $\varepsilon \downarrow 0,$ we have
\begin{equation*}
\int_{0}^{+\infty }\mathbb{E}_{\mathbb{P}}\left[ \left( \gamma ^{\top }\phi
\left( U,\mu \right) \right) ^{2}\mathbb{I}\{\left. \gamma ^{\top }\phi
\left( U,\mu \right) \geq \sqrt{N}t\right\vert \Phi (X)=t\right] \mathrm{d}%
\mathbb{P}_{\Phi }(t)\rightarrow 0.
\end{equation*}%
We then apply \citet[Theorem 2.2.11]{ref:durrett2019probability} to obtain
the weak law for each $\gamma .$

\textbf{Step 4: }We establish the Lipschitz continuity of%
\begin{equation*}
\sqrt{N}\mathbb{E}_{\mathbb{P}}\left[ \left( -\gamma ^{\top }\phi \left( U,%
\mathbb{\mu }\right) +\sqrt{N}d(X)\right) ^{-}\mc C(X)\right]
\end{equation*}
for $\left\Vert \gamma \right\Vert _{2}\leq B,$ which ensures the tightness.
For any $\gamma _{1},\gamma _{2}$ satisfying $\left\Vert \gamma
_{1}\right\Vert _{2}\leq B$ and $\left\Vert \gamma _{2}\right\Vert _{2}\leq
B,$ we have%
\begin{eqnarray*}
&&\sqrt{N}\left\vert \mathbb{E}_{\mathbb{P}}\left[ \left( -\gamma _{1}^{\top
}\phi \left( U,\mu\right) +\sqrt{N}d(X)\right) ^{-}\mc C(X)\right] \right. \\
&-& \left.\mathbb{E}_{\mathbb{P}}\left[ \left( -\gamma _{2}^{\top }\phi
\left( U,\mu\right) +\sqrt{N}d(X)\right) ^{-}\mc C(X) \}\right] \right\vert
\\
&\leq &\sqrt{N}\left\Vert \gamma _{1}-\gamma _{2}\right\Vert _{2}\left\Vert
\phi \left( U,\mu\right) \right\Vert _{2}\mc C(X)\mathbb{I}\left\{
B\left\Vert \phi \left( U,\mathbb{\mu }\right) \right\Vert _{2}\geq \sqrt{N}%
d(X)\right\} .
\end{eqnarray*}%
By following similar lines with steps 2 and 3, we have
\begin{eqnarray*}
\sqrt{N}\mathbb{E}_{\mathbb{\hat{P}}^{N}}\left[ \left\Vert \phi \left(
U,\mu\right) \right\Vert _{2}\mc C(X)\mathbb{I}\left\{B\left\Vert \phi
\left( U,\mu\right) \right\Vert _{2}\geq \sqrt{N}d(X)\right\} \right] 
\overset{p}{\longrightarrow }f(0)\mathbb{E}_{\mathbb{P}}\left[ \left.
\left\Vert \phi \left( U,\mu\right) \right\Vert _{2}^{2}\right\vert d(X%
\mathcal{)}=0\right] .
\end{eqnarray*}%
Then, by \citet[Theorem 7.5]{ref:billingsley2013convergence}, we have the
desired uniform convergence result. \end{proof}
\begin{proof}[Proof of Lemma \ref{lma:compactifaction}] Due to $\mathbb{E}\left[ \left.
\phi \left( U,\mu\right) \phi \left( U,\mu\right) ^{\top
}\right\vert d(X)=0 \right] \succ 0$, there
exists $\delta >0$ and $c_{0}\in (0,+\infty )$ such that
\begin{equation*}
\inf_{\left\Vert \gamma \right\Vert _{2}=1}\mathbb{E}\left[ \min \left\{
c_{0},\gamma ^{\top }\phi \left( U,\mu\right) \right\} \left\vert
\gamma ^{\top }\phi \left( U,\mu\right) \right\vert \right]
>\delta .
\end{equation*}%
for all $\left\Vert \gamma \right\Vert _{2}=1.$ And
\begin{equation*}
\inf_{\left\Vert \gamma \right\Vert _{2}=1}\mathbb{E}\left\vert \gamma
^{\top }\phi \left( U,\mu\right) \right\vert >0,
\end{equation*}%
since the unit circle is compact. Let $\delta =\inf_{\left\Vert \gamma
\right\Vert _{2}=1}\mathbb{E}\left\vert \gamma ^{\top }\phi \left( U,\mathbb{%
\mu }\right) \right\vert .$ For any $\varepsilon >0,$ there exists $N_{1}>0$
and $b^{\prime }<+\infty ,$ such that
\begin{equation*}
\mathbb{P}(\left\Vert V_{N}\right\Vert _{2}\geq b^{\prime })<\varepsilon /2,
\end{equation*}%
for any $N>N_{0}.$ Recalling Lemma \ref{lma:lln} and equation (\ref{eqn:lln_c_bound}), there exists $%
N_{0}>N_{1}$ such that
\begin{equation*}
\mathbb{P}\left( \exists \gamma :\left\Vert \gamma \right\Vert _{2}=b\text{
such that }M_{N}(\gamma )\geq -\frac{1}{4}\mathbb{E}\left[ \min \left\{
bc_{0},\left |\gamma ^{\top }\phi \left( U,\mu\right) \right| \right\} \left\vert
\gamma ^{\top }\phi \left( U,\mu\right) \right\vert \right]
\right) <\varepsilon /2
\end{equation*}%
for any $N>N_{0}$. Then, we have%
\begin{eqnarray*}
&&\inf_{\left\Vert \gamma \right\Vert _{2}=b}\mathbb{E}\left[ \min \left\{
bc_{0},\left |\gamma ^{\top }\phi \left( U,\mu\right) \right|  \right\} \left\vert
\gamma ^{\top }\phi \left( U,\mu\right) \right\vert \right]  \\
&\geq &b^{2}\inf_{\left\Vert \gamma \right\Vert _{2}=1}\mathbb{E}\left[ \min
\left\{ c_{0},\left |\gamma ^{\top }\phi \left( U,\mu\right) \right|  \right\}
\left\vert \gamma ^{\top }\phi \left( U,\mu\right) \right\vert %
\right] >b^{2}\delta .
\end{eqnarray*}%
Let $b=4b^{\prime }/\delta .$ We have
\begin{equation}
\mathbb{P}\left( \sup_{\left\Vert \gamma \right\Vert _{2}=b\text{ }%
}M_{N}(\gamma )\geq -bb^{\prime }\right) <\varepsilon /2.
\label{eqn:bound_b}
\end{equation}%
Notice that for any $\left\Vert \gamma \right\Vert _{2}>b,$%
\begin{equation}
M_{N}(\gamma )\leq \frac{\left\Vert \gamma \right\Vert _{2}}{b}M_{N}\left(
\frac{b}{\left\Vert \gamma \right\Vert _{2}}\gamma \right) \leq \frac{%
\left\Vert \gamma \right\Vert _{2}}{b}\sup_{\left\Vert \gamma \right\Vert
_{2}=b\text{ }}M_{N}(\gamma ).  \label{eqn:bound_gamma}
\end{equation}%
By combining inequalities (\ref{eqn:bound_b}) and (\ref{eqn:bound_gamma}),
we have
\begin{equation*}
\mathbb{P}\left( \exists \gamma :\left\Vert \gamma \right\Vert _{2}>b,\text{
such that }M_{N}(\gamma )\geq -\left\Vert \gamma \right\Vert _{2}b^{\prime
}\right) <\varepsilon /2.
\end{equation*}%
Therefore,
\begin{eqnarray*}
&&\mathbb{P}\left( \sup_{\left\Vert \gamma \right\Vert _{2}>b}\left\{ \gamma
^{\top }V_{N}+M_{N}(\gamma )\right\} >0\right)  \\
&\leq &\mathbb{P}\left( \sup_{\left\Vert \gamma \right\Vert _{2}>b}\left\{
\left\Vert \gamma \right\Vert _{2}\left\Vert V_{N}\right\Vert
_{2}+M_{N}(\gamma )\right\} >0\right)  \\
&\leq &\mathbb{P}(\left\Vert V_{N}\right\Vert _{2}\geq b^{\prime })+\mathbb{P}%
\left( \exists \gamma :\left\Vert \gamma \right\Vert _{2}>b,\text{ such that
}M_{N}(\gamma )\geq -\left\Vert \gamma \right\Vert _{2}b^{\prime }\right)  \\
&\leq &\varepsilon .
\end{eqnarray*}
This completes the proof.
\end{proof}

\section{Additional Details for Numerical Experiments}
\subsection{Validation of the Hypothesis Test}
\label{sec:numerical_detail}

In this section, we empirically validate the convergence result in Theorem \ref%
{thm:clt} and our proposed hypothesis test method.
we use a simple logistic classifier in the form
\begin{equation*}
\mc C(x)=\mathbb{I}\left\{ \frac{1}{1+\exp \left( -\theta ^{\top }x\right) }\geq
\tau \right\} .
\end{equation*}%
Then, the decision boundary is
$
\left\{ x:\theta ^{\top }x=-\log \left( \frac{1}{\tau }-1\right)
\right\} .
$
We denote $w=-\log \left( \frac{1}{\tau }-1\right) .$ Then, we borrowed the
example in \citet{ref:taskesen2020statistical}. Let

\begin{equation*}
p_{11}=0.4,p_{01}=0.1,p_{10}=0.4,p_{00}=0.1.
\end{equation*}

Moreover, conditioning on $(A,Y)$, the feature $X$ follows a Gaussian
distribution of the form%
\begin{align*}
X|A=1,Y=1\sim & \mathcal{N}([6,0],[3.5,0;0,5]), \\
X|A=0,Y=1\sim & \mathcal{N}([-2,0],[5,0;0,5]), \\
X|A=1,Y=0\sim & \mathcal{N}([6,0],[3.5,0;0,5]), \\
X|A=0,Y=0\sim & \mathcal{N}([-4,0],[5,0;0,5]).
\end{align*}%
The true distribution $\mathbb{P}$ is thus a mixture of Gaussian. A simple
algebraic calculation indicates that a logistic classifier with $\theta
=(0,1)^{\top }$ and $\tau=0.5$ is fair with respect to the equal opportunity criterion in
Example \ref{eg:EOpp}. Let $\varphi (\cdot )$ denotes the density of the
standard normal distribution and we denote $\mu _{ay}$ and $\Sigma _{ay}$ to
be the conditional mean and variable defined above, respectively. For any $%
\theta ,$ the density of $\theta ^{\top }X$ becomes
\begin{equation*}
\sum_{a,y\in \left\{ 0,1\right\} ^{2}}\left( \theta ^{\top }\Sigma
_{ay}\theta \right) ^{-1/2}p_{ay}\varphi \left( \left( \theta ^{\top }\Sigma
_{ay}\theta \right) ^{-1/2}\left( \theta ^{\top }x-\theta ^{\top }\mu
_{ay}\right) \right) .
\end{equation*}%
And thus the density of $\Phi \left( \cdot \right) $ becomes
\begin{equation*}
f\left( z\right) =\left\Vert \theta \right\Vert _{\ast }\sum_{a,y\in \left\{
0,1\right\} ^{2}}\left( \theta ^{\top }\Sigma _{ay}\theta \right)
^{-1/2}p_{ay}\varphi \left( \left( \theta ^{\top }\Sigma _{ay}\theta \right)
^{-1/2}\left( (z\left\Vert \theta \right\Vert _{\ast }+w)-\theta ^{\top }\mu
_{ay}\right) \right) .
\end{equation*}%
By Bayes' formula, we have
\begin{align*}
p_{ay|d(X)=0}=f\left( 0\right) ^{-1}\left( \theta ^{\top }\Sigma
_{ay}\theta \right) ^{-1/2}\left\Vert \theta \right\Vert _{\ast
}p_{ay} \varphi \left( \left( \theta ^{\top }\Sigma _{ay}\theta \right)
^{-1/2}\left( w-\theta ^{\top }\mu _{ay}\right) \right)
\end{align*}
for $a\in
\left\{ 0,1\right\}$ and $y\in \left\{ 0,1\right\}$, where $p_{ay|d(X)=0} = \mathbb{E}[\mathbb{I}_{(a,y)}(A,Y)|d(X)=0]$.
In the first experiments, we generate $N \in \{30,100,500\}$ i.i.d.~samples from $\mathbb{P}$ and then calculate $N\times \mathcal{D}(\hat{\mathbb{P}}^N)$. We replicate this process for 2,000 times and compare the empirical distribution of $N\times \mathcal{D}(\hat{\mathbb{P}}^N)$ with the limiting distribution defined in Theorem \ref{thm:clt}. Figure \ref{fig:histogram} shows that finite-sample empirical estimates are closed to the theoretical limiting distributions even when $N$ is as small as $30$.

\begin{figure}[!ht]
\centering
\subfigure[$N = 30$]{
\label{pic:N_30} \includegraphics[width=2.01in]{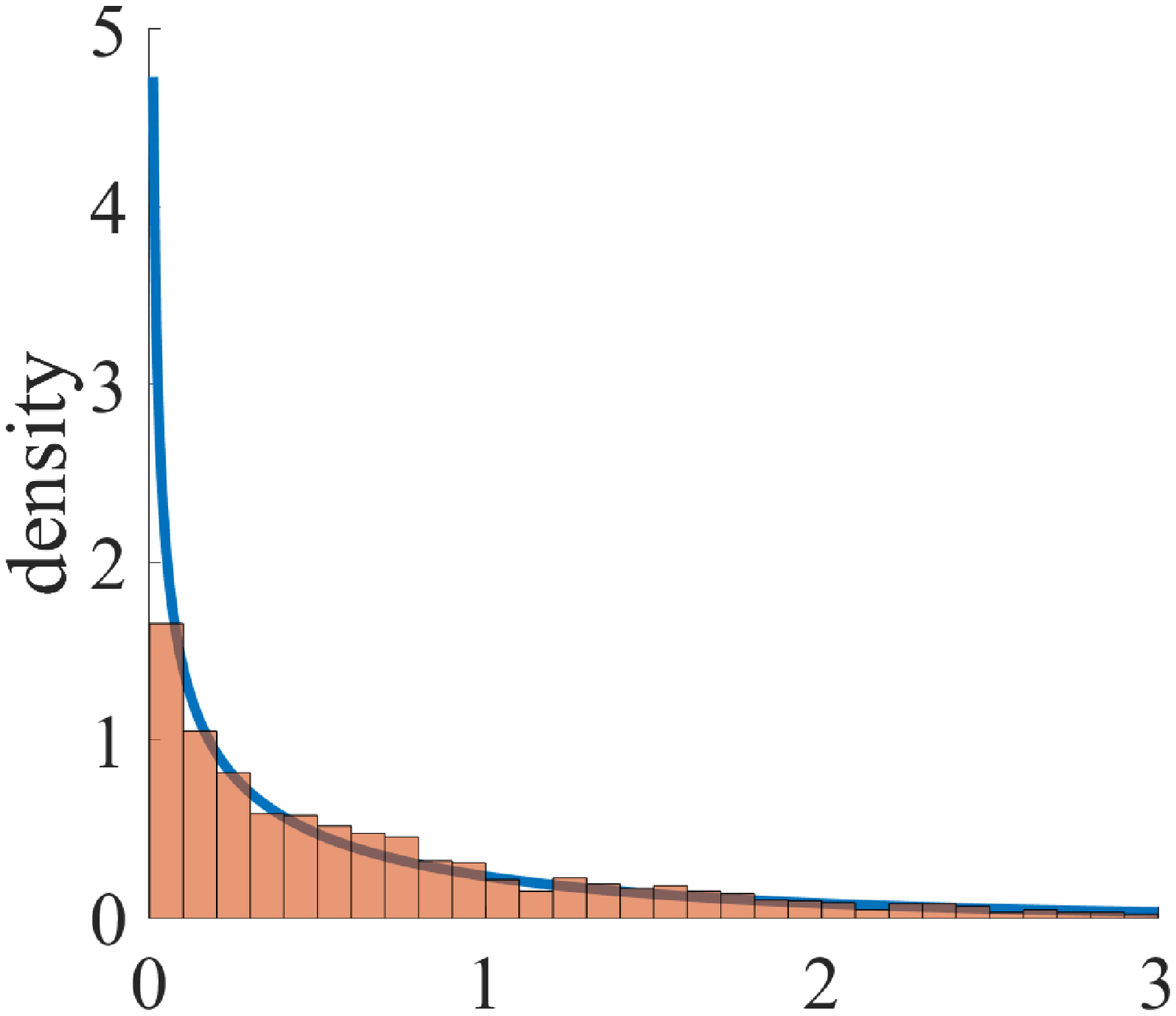}}
\subfigure[$N = 100$]{
\label{pic:N_100} \includegraphics[width=2.01in]{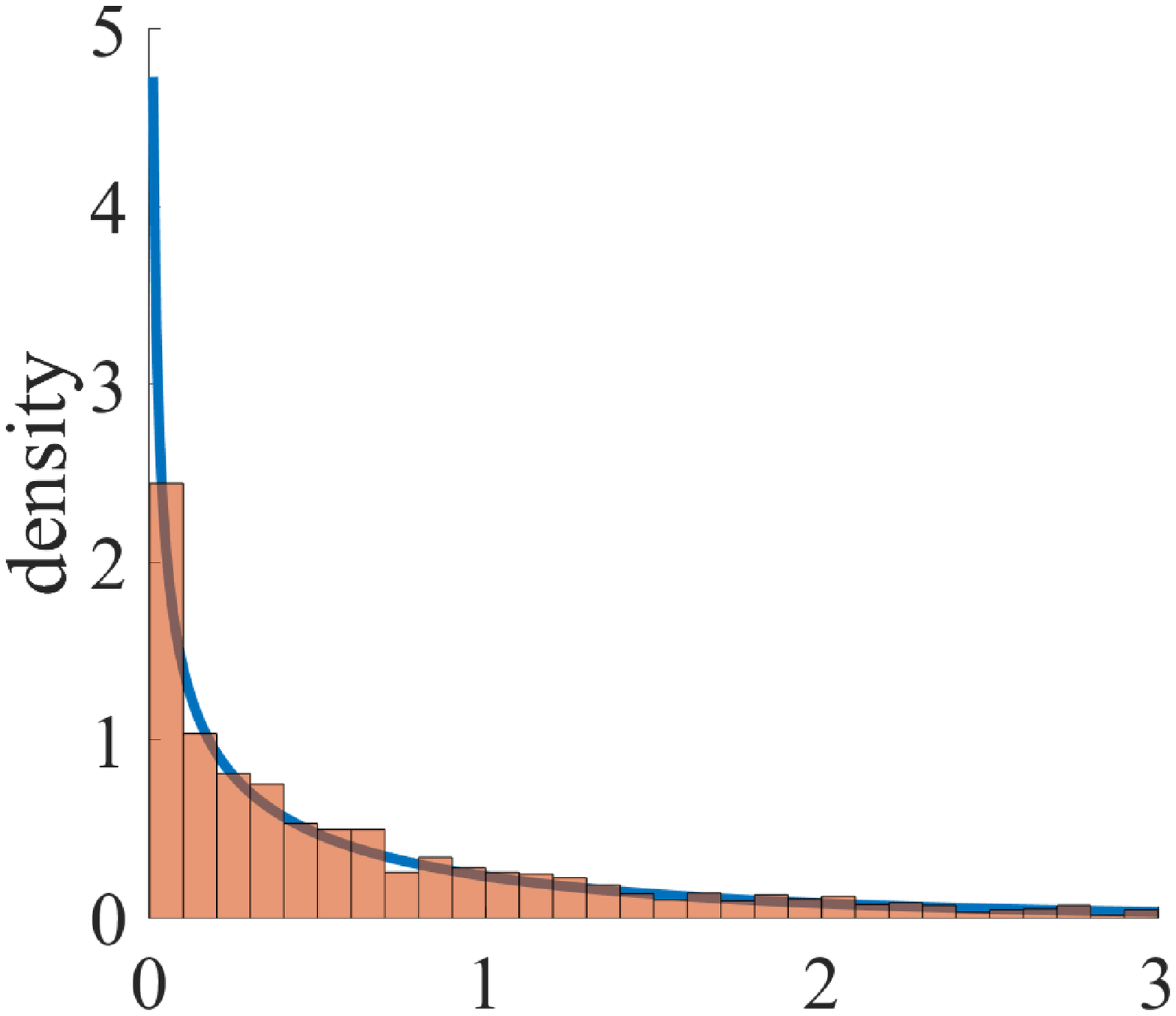}}
\subfigure[$N = 500$]{
\label{pic:N_500} \includegraphics[width=2.01in]{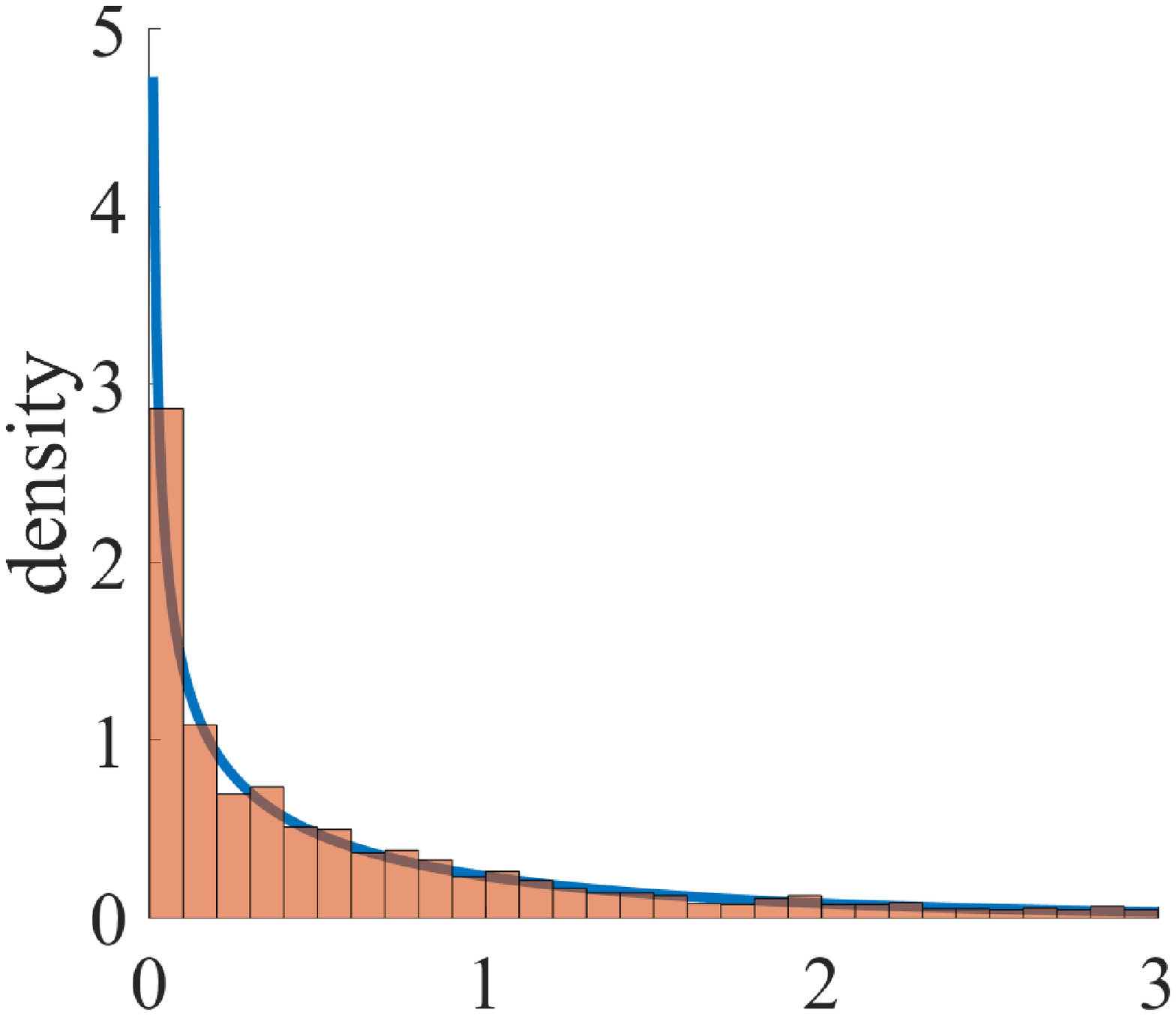}}
\vspace{-3mm}
\caption{Empirical distribution $N\times \mathcal{D}(\hat{\mathbb{P}}^N)$ over 2,000 replications (histogram) versus the limiting Chi-square
distribution (blue curve) with different sample sizes $N$.}
\label{fig:histogram}
\end{figure}

In the second experiments, we show that our proposed Wasserstein projection hypothesis test has the desired coverage property. We generate $N \in \{30,100,500,1000,2000\}$ i.i.d.~samples from $\mathbb{P}$ and compute the estimate $\hat{S}$ defined in Section~\ref{sec:kernel_estimate} and  the empirical covariance using the sample data. For the kernel estimator $\hat{S}$, we use the standard Gaussian kernel and choose the bandwidth $h = N^{-1/5}$, where the results listed below are not sensitive to the constant. We repeat the procedure for 2,000 replications and report the rejection probability at different significant values of $\alpha \in \{0.1,0.05,0.01\}$ in Table \ref{tab:rej_prob}. We can observe that when $N>100$, the rejection probability is closed to the desired level $\alpha$.
\begin{table}[htbp]
\caption{Comparison of the null rejection probabilities of probabilistic
equal opportunity tests with different significance levels $\protect\alpha$
and test sample sizes $N$.}
\label{tab:rej_prob}\centering
\begin{tabular}{lccc}
\toprule $\alpha$ & 0.10 & 0.05 & 0.01 \\
\midrule $N=30$ & 0.2875 &   0.2255  &  0.1415 \\
$N=100$ & 0.0945 &   0.0540  &  0.0250 \\
$N=500$ & 0.0895  &  0.0450  &  0.0085 \\
$N=1000$ & 0.0900  &  0.0430   & 0.0065 \\
$N=2000$ & 0.0870 &   0.0460 &   0.0080 \\
\bottomrule
\end{tabular}%
\end{table}
\subsection{The Description of Datasets}
Followings show brief descriptions of datasets: Arrhythmia, COMPAS and Drug~\cite{fehrman2017five} provided in Section~\ref{sec:real}.
\begin{itemize}
\item \textbf{Arrhythmia} is from UCI repository\footnote{\url{https://archive.ics.uci.edu/ml/datasets/arrhythmia}}, where the aim of this data set is to  distinguish between the presence and absence of cardiac arrhythmia and classify it in one of the 16 groups. The dataset consists of 452 samples and we use the first 12 features among which the gender is the sensitive feature. For our purpose, we construct binary labels between `class 01' (`normal') and all other classes (different classes of arrhythmia and unclassified ones).
\item \textbf{COMPAS} (Correctional Offender Management Profiling for Alternative Sanctions)\footnote{\url{https://www.propublica.org/datastore/dataset/compas-recidivism-risk-score-data-and-analysis}} is a commerical tool used by judges, probation and parole officers to estimate a criminal defendant's likelihood to re-offend algorithmically. The COMPAS dataset contains the criminal records within 2 years after the decision. We use race (African-American and Caucasian, which accounts for 5278 samples) as the sensitive attribute.
\item \textbf{Drug}~\cite{fehrman2017five} contains  answers of 1885 participants on their use of 17 legal and illegal drugs. We concern the cannabis usage as a binary problem, where the label is `Never used' VS `Others' (`used'). There are 12 features including age, gender, education, country, ethnicity, NEO-FFI-R measurements, impulsiveness measured by BIS-11 and  sensation seeing measured by ImpSS. Among those, we choose ethnicity (black vs others) as the sensitive attribute.
\end{itemize}
\label{sec:dataset}

%%%%%%%%%%%%%%%%%%%%%%%%%%%%%%%

\bibliographystyle{icml2021}
\bibliography{arxiv_main.bbl}

%%%%%%%%%%%%%%%%%

\end{document}